\newcommand{\bw}{\text{\boldmath{$w$}}}
\newcommand{\bW}{\text{\boldmath{$W$}}}
\newcommand{\btheta}{\text{\boldmath{$\theta$}}}
\newcommand{\bz}{\boldsymbol{z}}
\newcommand{\bx}{\boldsymbol{x}}
\newcommand{\bxi}{\boldsymbol{\xi}}
\newcommand{\bXi}{\boldsymbol{\Xi}}
\newcommand{\bzeta}{\boldsymbol{\zeta}}
\newcommand{\ba}{\boldsymbol{a}}
\newcommand{\bu}{\boldsymbol{u}}
\newcommand{\bg}{\boldsymbol{g}}
\newcommand{\by}{\boldsymbol{y}}
\newcommand{\bU}{\boldsymbol{U}}
\newcommand{\bV}{\boldsymbol{V}}
\newcommand{\bbR}{\mathbb{R}}
\newcommand{\grad}{\mathrm{grad}}
\newcommand{\bv}{\boldsymbol{v}}
\newtheorem{definition}{\textbf{Definition}}\newtheorem{lemma}{\textbf{Lemma}}\newtheorem{theorem}{\textbf{Theorem}}\newtheorem{proposition}{\textbf{Proposition}}\newtheorem{remark}{\textbf{Remark}}
\newcommand{\mE}{\mathbb{E}}
\newcommand{\cM}{\mathcal{M}}
\newcommand{\cL}{\mathcal{L}}
\newcommand{\cN}{\mathcal{N}}
\newcommand{\cT}{\mathcal{T}}
\newcommand{\cG}{\mathcal{G}}
\title{Accelerating Training of Batch Normalization: A Manifold Perspective}
\author[1,2]{Mingyang Yi \thanks{yimingyang17@mails.ucas.edu.cn}}
\affil[1]{University of Chinese Academy of Sciences}
\affil[2]{Academy of Mathematics and Systems Science, Chinese Academy of Sciences}
\date{}
\begin{document}
	
	\maketitle
	\author{}
	\begin{abstract}
		Batch normalization (BN) has become a critical component across diverse deep neural networks. The network with BN is invariant to positively linear re-scale transformation, which makes there exist infinite functionally equivalent networks with different scales of weights. However, optimizing these equivalent networks with the first-order method such as stochastic gradient descent will obtain a series of iterates converging to different local optima owing to their different gradients across training. To obviate this, we propose a quotient manifold \emph{PSI manifold}, in which all the equivalent weights of the network with BN are regarded as the same element. Next, we construct gradient descent and stochastic gradient descent on the proposed PSI manifold to train the network with BN. The two algorithms guarantee that every group of equivalent weights (caused by positively re-scaling) converge to the equivalent optima. Besides that, we give convergence rates of the proposed algorithms on the PSI manifold. The results show that our methods accelerate training compared with the algorithms on the Euclidean weight space. Finally, empirical results verify that our algorithms consistently improve the existing methods in both convergence rate and generalization ability under various experimental settings.
	\end{abstract}      
	\section{Introduction}
	Batch normalization (BN) \citep{ioffe2015batch} is one of the most critical innovations in deep learning, which appears to help optimization as well as generalization \citep{ioffe2015batch, santurkar2018does}. Despite the success of BN, there is one adverse effect in terms of optimization due to the positively scale-invariant (PSI) property brought by it. The PSI property is explained as the weights in each layer with BN are invariant to positively linear re-scaling. Due to this, there can be an infinite number of networks functionally equivalent to each other but with various scales of weights. These networks can converge to different local optima owing to different gradients \citep{cho2017riemannian,huang2017projection}. Then the converged point can be sensitive to the scale of parameters, and some of them may have poor performances (see Section \ref{sec:PSI manifold}). Hence, it is desirable to obviate such ambiguity of training. 
	\par
	To this end, we leverage the technique of optimization on manifold \citep{absil2009optimization,cho2017riemannian,huang2017projection,badrinarayanan2015symmetry}. We propose to constrain the scale-invariant weights of deep neural networks with BN in a quotient manifold i.e. \emph{PSI manifold} defined in Section \ref{sec:PSI manifold}, in which all the positively scale-equivalent weights are viewed as the same element. Constraining the scale-invariant weights on the PSI manifold maintains the representation ability of hypothesis space due to the PSI property of network with BN. More importantly, optimizing on the manifold breaks the training ambiguity caused by the PSI property.
	\par
	By constructing the Riemannian metric and retraction function \citep{absil2009optimization} on such PSI manifold \citep{boumal2019global}, we propose the gradient descent (GD), stochastic gradient descent (SGD), and SGD with momentum on such manifold. We abbreviate the corresponding algorithms as PSI-GD, PSI-SGD, and PSI-SGDM, respectively. Compared with vanilla GD and SGD on the Euclidean weight space, the proposed algorithms guarantee the equivalent scale-invariant weights (caused by positively re-scaling) converge to the equivalent local optima. 
	\par
	
	\begin{figure*}[t!]\centering
		\includegraphics[width=0.8\textwidth]{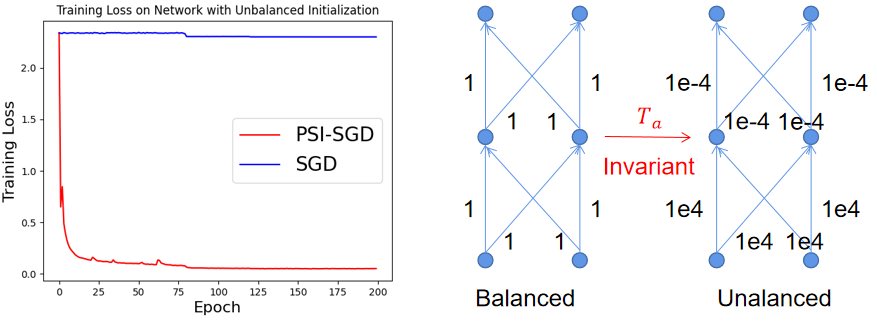}
		\caption{The figure on the right hand side are two batch normalized two-layer neural networks with weights $\bW$ and $T_{\ba}(\bW)$, where $\bW$ and $T_{\ba}(\bW)$ are respectively balanced and unbalanced weights. The PSI property says that $f(\bx, \bW)=f(\bx,T_{\ba}(\bW))$ with $\bW = (1,\cdots, 1)^{T}$ and $\ba=(\rm{1e4, 1e4, 1e-4, 1e-4})>0$. The figure on the left hand side is the training loss of our PSI-SGD and SGD under an unbalanced initialization.}
		\label{fig:network}
	\end{figure*}

	In contrast to the literature of optimizing network with BN while constraining parameters on manifold \citep{cho2017riemannian,huang2017projection,badrinarayanan2015symmetry,badrinarayanan2015symmetry}, we give the convergence rates of our PSI-GD and PSI-SGD. The convergence rates of the two methods are respectively in the order of $O(1/T)$ and $O(1/\sqrt{T})$ under the non-convex and smooth assumptions, where $T$ is the number of iterations. The results match the optimal one with fine-tuned learning rates \citep{ghadimi2013stochastic} under non-convex optimization problem. Besides that, the proposed algorithms are actually training with adaptive learning rate, which is decided by the local smoothness of loss function. The adaptive learning rate accelerates the training process as our proposed algorithms have improved constant dependence in the convergence rates, compared with their vanilla versions on the Euclidean space. 
	\par
	To empirically study the proposed algorithms, we compare PSI-SGD and PSI-SGDM with the other three baseline algorithms. They are respectively vanilla SGD, adaptive learning rate algorithm Adam \citep{kingma2014adam}, and another manifold based algorithm SGDG \citep{cho2017riemannian}. The experiments are conducted on image classification task on datasets \texttt{CIFAR} and \texttt{ImageNet} \citep{krizhevsky2009learning,deng2009imagenet}. We empirical observe the proposed methods consistently improve the baseline algorithms in the sense of both convergence rate and generalization over various experimental settings. The observed accelerated convergence rate and better generalization ability of PSI-GD and PSI-SGD verify our theoretical justification that the two method are able to find local minima with less iterations. 
	\section{Related Works}
	\paragraph{Optimization on Manifold.} 
	\begin{figure}[h]\centering
		\includegraphics[width=0.5\textwidth]{./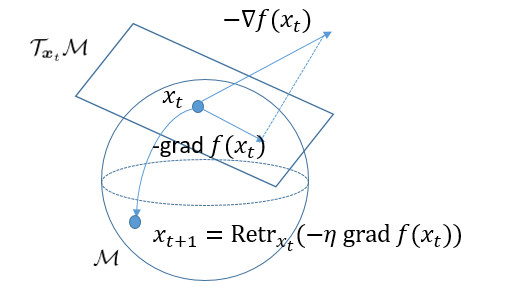}
		\caption{Gradient descent on Riemannian manifold.}
		\label{fig:gd on manifold}
	\end{figure}
	\cite{absil2009optimization} presents an introduction and summarization to this topic. The foundations and some recent theoretical results in this topic refers to   \citep{absil2006joint,liu2017accelerated,zhang2016first,boumal2019global}. Roughly speaking, optimization on manifold converts the constrained problem into an unconstrained problem on a specific manifold. \cite{huang2018orthogonal,lezcano2019cheap,casado2019trivializations,li2019efficient} leverage the technique to handle some explicit constraints to deep neural networks e.g., orthogonal constraints. \citep{cho2017riemannian,huang2017projection,badrinarayanan2015symmetry} fix the ambiguity of optimizing network with BN by manifold based algorithms. However, in contrast to this paper, some of the theoretical properties of their methods are missing, e.g., the convergence rate. 
	
	\paragraph{PSI property.} Our method in this paper is build upon PSI property brought by BN. The PSI property properly defined in Section \ref{sec:PSI manifold} are widely appears in modern deep neural networks. For example, the network with ReLU activation \citep{neyshabur2015path} and the network with normalization layer, e.g., batch normalization \citep{ioffe2015batch}, layer normalization \citep{ba2016layer} and group normalization \citep{wu2018group}.  \cite{arora2018theoretical,wu2018wngrad} use a similar theoretical framework as ours to show the network with PSI property allows a more flexible choice of the learning rate, and thus easier to be trained. However, we design algorithm to obviate the optimization brought by PSI property, and show the proposed methods can have improved convergence rate. 
	\par
	Due to the PSI property, \cite{li2019exponential} uses exponentially increased learning rate for the network with BN to handle implicit effect brought by weight decay. In contrast to fixing the ambiguity of optimization by manifold based methods, \cite{neyshabur2015path,meng2018g,zheng2019capacity} re-parameterize the network by ``path value'' and optimizing on the path value space. 
	\section{Background}
	\subsection{Problem Set Up}\label{sec:batch}
	In this paper, we consider the deep neural network with the following structure
	\begin{equation}\small\label{eq1}
	f(\bx, \bW) = BN(\bW_{L}\phi(BN(\bW_{L-1}\phi(\cdots \phi(BN(\bW_{1}\bx)))))),
	\end{equation}
	where $\phi(\cdot)$ is the non-linear activation, and $\bW_{l}$ is the weight matrix of the $l$-th layer. $BN(\cdot)$ operator in the equation represents batch normalization \citep{ioffe2015batch} layer, which normalizes every hidden output $z=\btheta^{T}\bx$ across a batch of training samples as 
	\begin{equation}\small
	BN(\bz)=\gamma\frac{\bz-\mathbb{E}(\bz)}{\sqrt{\mathrm{Var}(\bz)}} + \beta= \gamma\frac{\btheta^{T}(\bx-\mathbb{E}(\bx))}{\sqrt{\btheta^{T}\mathrm{Var}_{\bx}\btheta}} + \beta,
	\end{equation} 
	where $\gamma$ and $\beta$ are learnable affine parameters. The loss is  
	\begin{equation}
	\small
	\cL(\bW,\bg) = \frac{1}{n}\sum\limits_{i=1}^{n}\ell(f(\bx_{i},\bW),y_{i}),
	\end{equation}
	across this paper, where $\{(\bx_{i}, y_{i})\}$ is the training set, $\ell(\bx, y)$ represents the loss function, and $\bg$ are scale-variant parameters i.e. $\gamma$ and $\beta$ in BN layer. Let $\bW=(\bw^{(1)}\cdots,\bw^{(m)})$, the $\bw^{(i)}$ is the weight vector of the $i$-th neuron. Then $\cL(\bW,\bg)$ is positively scale-invariant w.r.t. $\bW$. It means $\cL(\bW,\bg)=\cL(T_{\ba}(\bW),\bg)$ for any $T_{\ba}(\bW)=(a_{1}\bw^{(1)}, \cdots, a_{m}\bw^{(m)})$ with $\{a_{i}>0\}$. In the right hand side of Figure \ref{fig:network}, we illustrate such PSI property (the network is invariant for rescaling operator $T_{\ba}(\cdot)$) via a two layer network, where the number between two circles (neurons) is the weight between them. Specially, let $\bV$ be the normalized $\bW$ which means $\bV=(\bw^{(1)}/\|\bw^{(1)}\| ,\cdots, \bw^{(m)}/\|\bw^{(m)}\|)$. Then we have $\cL(\bV, \bg)=\cL(\bW, \bg)$. 
	\par
	It worth noting that our discussions below can be applied on the top of any functions with PSI property e.g., ReLU neural networks \citep{meng2018g,neyshabur2015path}. However, we use NN with BN as a representation to illustrate our conclusions. 
	\subsection{Optimization on Manifold}
	We present a brief introduction to some definitions appeared in the optimization on manifold, more details are deferred to Appendix \ref{app: definitions of manifold} or \citep{absil2009optimization}. A Matrix Manifold $\mathcal{M}$ is a subset of Euclidean space that for any $\bx\in\mathcal{M}$, there exists a neighborhood $U_{\bx}$ of $\bx$, such that $U_{\bx}$ is homeomorphic to a Euclidean space. It has been pointed out that the PSI parameters $\bW$ can be defined on a matrix manifold, e.g., Grassmann manifold or Oblique manifold without losing representation capacity of model \citep{cho2017riemannian,huang2017projection}. For any $\bx\in\cM$, there is a tangent space $\cT_{\bx}\cM$, such that for any $f(\cdot)$ defined on $\cM$, its Riemannian gradient $\grad f(\bx)\in \cT_{\bx}\cM$ \footnote{According to \citep{absil2009optimization}, the Riemannian gradient $\grad f(\bx)$ sometimes is the projection of gradient $\nabla f(x)$ to the tangent space $\cT_{\bx}$.}. 
	\par
	With a given retraction function $R_{\bx}(\cdot): \cT_{\bx}\cM\rightarrow\cM$, it was shown in \citep{boumal2019global} that the gradient descent on manifold (refers to Figure \ref{fig:gd on manifold})
	\begin{equation}\label{eq:riemannian gradient descent}
	\small
	\bx_{t+1} = R_{\bx_{t}}\left(-\eta\grad f(\bx_{t})\right)
	\end{equation}
	has the same convergence rate with vanilla gradient descent in Euclidean space under some mild assumptions. Roughly speaking, the Riemannian gradient descent creates a series of iterates moving along the direction of $\grad f(\bx)$, while the moving is conducted via the pre-defined retraction function $R_{\bx}(\cdot)$. 
	\section{PSI Manifold}\label{sec:PSI manifold}
	The optimization ambiguity of PSI parameters states as follows. By chain rule, 
	\begin{equation}\label{eq:gradient scale}
	\small
	\nabla_{\bw^{(i)}}\cL(\bW,\bg) = a_{i}\nabla_{a_{i}\bw^{(i)}}\cL(T_{\ba}(\bW), \bg),
	\end{equation}
	so the two positively scale-equivalent points $\bW$ and $T_{\ba}(\bW)$ with the same output converge to different local optima owing to the different gradients. In other words, gradient-based algorithms in Euclidean space can not guarantee two iterates started from $\bW$ and $T_{\ba}(\bW)$ positively scale equivalent with each other across the training. This makes the converged point becomes sensitive the scale of weights. 
	\par
	Unfortunately, the sensitivity sometimes makes model converge to a local minima with poor performance. For example, we train a simple two-layer neural network with BN on \texttt{MNIST} \citep{lecun1998gradient}, which is a image classification task. The network is trained with a unbalanced initialization where the unbalanced here means the weights in each layer are in different scales. As can be seen, in the left hand side of Figure \ref{fig:network}, in contrast to our PSI-SGD (see Section \ref{sec:optimization on PSI manifold}), SGD performs very poorly under such initialization. However, the poor performance of SGD does adopted under balanced initialization \citep{lecun1998gradient}. Thus the optimization ambiguity can break the success of training.  
	\par
	To obviate this, we construct a quotient manifold, i.e., PSI manifold. Constraining the model's parameters in such manifold can fix the training ambiguity brought by the positively scale-invariant property.
	\subsection{Construction of PSI Manifold}   
	A quotient manifold is induced by an equivalent relationship. We formally formulate the positively scale-equivalence to derive the PSI manifold with PSI parameters defined in it.  
	\begin{definition}[Positively Scale-Equivalent]
		For a pair of $\bW, \bW^{\prime}$, they are positively scale-equivalent with each other, if there exists a transformation $T_{\ba}(\cdot)$ such that $\bW^{\prime}=T_{\ba}(\bW)$. $\bW\sim \bW^{\prime}$ denotes the equivalence.
	\end{definition}
	We can verify that the positively scale-equivalent ``$\sim$'' is an equivalent relationship. The Proposition 3.4.6 in \citep{absil2009optimization} implies the following proposition which constructs the PSI manifold. 
	\begin{proposition}
		For PSI parameters $\bW$, the positively scale-equivalent relationship $\sim$ induces a quotient manifold $\cM = \overline{\mathcal{M}}/\sim = \bbR^{d_{1}}\times\cdots\bbR^{d_{m}}/\sim$ named PSI manifold. Here $d_{i}$ is the dimension of $\bw^{(i)}$. 
	\end{proposition}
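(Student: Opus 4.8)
The plan is to verify the hypotheses of Proposition~3.4.6 in \citep{absil2009optimization}: if an equivalence relation $\sim$ on a manifold $\overline{\mathcal{M}}$ has a graph $\mathcal{G}=\{(\bW,\bW')\in\overline{\mathcal{M}}\times\overline{\mathcal{M}}:\bW\sim\bW'\}$ that is a \emph{closed embedded} submanifold of $\overline{\mathcal{M}}\times\overline{\mathcal{M}}$, and if the restriction to $\mathcal{G}$ of the first projection $\pi_{1}:\overline{\mathcal{M}}\times\overline{\mathcal{M}}\to\overline{\mathcal{M}}$ is a submersion, then $\overline{\mathcal{M}}/\sim$ carries a unique smooth manifold structure for which the canonical projection $\pi:\overline{\mathcal{M}}\to\overline{\mathcal{M}}/\sim$ is a submersion. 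I would take $\overline{\mathcal{M}}=\bbR^{d_{1}}\times\cdots\times\bbR^{d_{m}}$ with the zero weight vectors removed from each block --- a neuron whose weight vector is $\bzero$ is degenerate and is fixed by every $T_{\ba}$, so its orbit has the wrong dimension; the zero-free total space is still an open submanifold of Euclidean space, and the whole construction descends block by block.

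The first routine step is to confirm that $\sim$ is an equivalence relation (reflexivity with $\ba=\bone$, symmetry with the inverse scaling $(1/a_{1},\dots,1/a_{m})$, which is again positive, transitivity by composing scalings) and to observe that $\rho:(\bbR_{>0})^{m}\times\overline{\mathcal{M}}\to\overline{\mathcal{M}}$, $(\ba,\bW)\mapsto T_{\ba}(\bW)$, is a smooth action of the Lie group $(\bbR_{>0})^{m}$ whose orbits are exactly the equivalence classes, and that this action is free on the zero-free set (if $a_{i}\bw^{(i)}=\bw^{(i)}$ with $\bw^{(i)}\neq\bzero$ then $a_{i}=1$).

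The substantive step is the graph. Writing the second point as $\bW'=(\bu^{(1)},\dots,\bu^{(m)})$, one has $\mathcal{G}=\{(\bW,T_{\ba}(\bW)):\bW\in\overline{\mathcal{M}},\ \ba\in(\bbR_{>0})^{m}\}$. Near a given point, choosing for each block $i$ a coordinate $k$ with $w^{(i)}_{k}\neq 0$, the relation $\bu^{(i)}=a_{i}\bw^{(i)}$ with $a_{i}>0$ is equivalent to the $d_{i}-1$ bilinear equations $u^{(i)}_{j}w^{(i)}_{k}=u^{(i)}_{k}w^{(i)}_{j}$ for $j\neq k$, together with the open condition $u^{(i)}_{k}w^{(i)}_{k}>0$; on the zero-free set these equations have surjective differential (one can solve for the $u^{(i)}_{j}$), so $\mathcal{G}$ is embedded of codimension $\sum_{i}(d_{i}-1)$, hence $\dim\mathcal{G}=\sum_{i}d_{i}+m$. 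It is closed because, away from the origins, being a nonzero scalar multiple is stable under limits (pass to the limit of $u^{(i)}_{k}/w^{(i)}_{k}$ along coordinate $k$). Finally, for each $(\bW_{0},T_{\ba}(\bW_{0}))\in\mathcal{G}$ the map $\bW\mapsto(\bW,T_{\ba}(\bW))$ is a smooth section of $\pi_{1}|_{\mathcal{G}}$ through that point, so $d\pi_{1}$ is surjective there and $\pi_{1}|_{\mathcal{G}}$ is a submersion. Absil's Proposition~3.4.6 then produces the quotient manifold $\cM=\overline{\mathcal{M}}/\sim$ of dimension $2\sum_{i}d_{i}-\dim\mathcal{G}=\sum_{i}d_{i}-m$, and reading the construction off each block gives $\cM\cong S^{d_{1}-1}\times\cdots\times S^{d_{m}-1}$, the Oblique manifold already used for BN parameters in \citep{cho2017riemannian,huang2017projection}.

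The main obstacle I anticipate is exactly the ``closed embedded submanifold'' and ``$\pi_{1}|_{\mathcal{G}}$ is a submersion of the right rank'' part: this is where positivity of the scalars $a_{i}$ and the exclusion of zero weight vectors are genuinely needed, so it has to be argued rather than quoted. Everything else --- smoothness and freeness of the $(\bbR_{>0})^{m}$-action, the dimension count, and the block-product identification --- is bookkeeping.
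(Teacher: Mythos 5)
Your proposal is correct, and it does strictly more work than the paper does: the paper offers no proof of this proposition at all, contenting itself with the remark that ``Proposition 3.4.6 in Absil et al.\ implies'' it, without checking that proposition's hypotheses. Your verification of those hypotheses --- that the graph of $\sim$ is a closed embedded submanifold (via the local bilinear equations $u^{(i)}_{j}w^{(i)}_{k}=u^{(i)}_{k}w^{(i)}_{j}$ with the open sign condition) and that $\pi_{1}|_{\mathcal{G}}$ is a submersion (via the smooth sections $\bW\mapsto(\bW,T_{\ba}(\bW))$) --- is exactly the content that the citation sweeps under the rug, and the dimension count $\dim\cM=\sum_{i}d_{i}-m$ together with the identification with the Oblique manifold $S^{d_{1}-1}\times\cdots\times S^{d_{m}-1}$ is consistent. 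Your one substantive deviation from the statement as printed is also a necessary correction rather than a flaw: with the zero weight vectors left in, the orbit $\{\bzero\}$ in a block is a fixed point of every $T_{\ba}$, the quotient topology at $[\bzero]$ is not even Hausdorff, and the Riemannian metric the paper later defines (which divides by $\|\bw^{(i)}\|^{2}$) is undefined there; so the total space must be taken as $(\bbR^{d_{1}}\setminus\{\bzero\})\times\cdots\times(\bbR^{d_{m}}\setminus\{\bzero\})$, as you do. An equally standard alternative route, which you gesture at but do not pursue, is to observe that the free, smooth, and proper action of the Lie group $(\bbR_{>0})^{m}$ on the zero-free set yields the quotient manifold directly by the quotient manifold theorem; either argument is acceptable, and yours has the advantage of matching the specific result the paper cites.
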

	\par
	We formally defined the PSI manifold. Intuitively, all the parameters that are positively scale-equivalent with each other in the Euclidean space are viewed as the same element in the PSI manifold $\mathcal{M}$. Then, we can constrain the PSI parameters directly in the manifold without losing the representation ability of the model. 
	\par 
	The PSI manifold is proposed to obviate the ambiguity of optimization brought by the PSI property. To this end, we can optimize the PSI parameters directly in the PSI manifold, since all the positively scale-equivalent parameters are the same point in it. But \eqref{eq:riemannian gradient descent} implies that optimization on PSI manifold requires the formulation of Riemannian gradient and retraction function on it. The following proposition defines a Riemannian metric in the PSI manifold, which formulates the Riemannian gradient (See the definition in  Appendix \ref{app: definitions of manifold}). 
	\begin{proposition}\label{pro:riemannian metric}
		For every $\bW\in\mathcal{M}$ \footnote{Please note that any $T_{\ba}(\bW)$ and $\bW$ are view as the same element in the PSI manifold $\mathcal{M}$. Here $\bW$ is a representation of set $\{T_{\ba}(\bW): a_{i} > 0 \}$.}, the Riemannian metric $\langle\cdot,\cdot\rangle_{\bW}$ in the PSI manifold can be defined as
		\begin{equation}\small\label{eq:riemannian metric}
			\langle \bXi_{1}, \bXi_{2} \rangle_{\bW} = \sum\limits_{i=1}^{m}\frac{\langle\bxi_{1}^{(i)}, \bxi_{2}^{(i)}\rangle}{\|\bw^{(i)}\|^{2}},
		\end{equation}
		where $\bXi_{1}, \bXi_{2}\in \cT_{\bW}\mathcal{M}$, with $\bXi_{k} = (\bxi_{k}^{(1)},\cdots\bxi^{(m)}_{k})$. 
	\end{proposition}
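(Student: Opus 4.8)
The plan is to verify that the formula \eqref{eq:riemannian metric}, read first as a bilinear form on the total space $\overline{\cM}=(\bbR^{d_{1}}\setminus\{\bzero\})\times\cdots\times(\bbR^{d_{m}}\setminus\{\bzero\})$, is a genuine Riemannian metric there (bilinear, symmetric, smooth, positive-definite), and then to show that it descends to the quotient $\cM=\overline{\cM}/\sim$. Recall from the theory of Riemannian quotient manifolds (\citealp[Sec.~3.6]{absil2009optimization}) that a metric $\bar g$ on the total space of a submersion $\pi:\overline{\cM}\to\cM$ induces a well-defined metric on $\cM$ as soon as $\bar g$ is invariant along the fibers, i.e. the differential of each map identifying fibers carries horizontal spaces isometrically onto horizontal spaces. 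So the real content is an invariance check for the rescaling action $T_{\ba}$.

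First I would identify the vertical and horizontal distributions. The fiber through $\bW$ is the orbit $\{T_{\ba}(\bW):a_{i}>0\}$, and differentiating $a_{i}\mapsto a_{i}\bw^{(i)}$ at $a_{i}=1$ gives $\cV_{\bW}=\{(c_{1}\bw^{(1)},\dots,c_{m}\bw^{(m)}):c_{i}\in\bbR\}$. Taking orthogonal complements blockwise with respect to \eqref{eq:riemannian metric} — the positive factors $1/\|\bw^{(i)}\|^{2}$ do not change orthogonality inside a block — yields $\cH_{\bW}=\{(\bxi^{(1)},\dots,\bxi^{(m)}):\langle\bxi^{(i)},\bw^{(i)}\rangle=0\ \forall i\}$. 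At this point symmetry and bilinearity of \eqref{eq:riemannian metric} are immediate; smoothness holds because $\bw^{(i)}\neq\bzero$ on $\overline{\cM}$, so each $\|\bw^{(i)}\|^{2}$ is a smooth nonvanishing function; and positive-definiteness is clear since every summand is a positive multiple of the Euclidean inner product.

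The main step is the invariance computation. Since $T_{\ba}$ acts blockwise by the scalar $a_{i}$, its differential is the block-diagonal map $DT_{\ba}(\bXi)=(a_{1}\bxi^{(1)},\dots,a_{m}\bxi^{(m)})$; in particular $DT_{\ba}$ sends $\cV_{\bW}$ to $\cV_{T_{\ba}(\bW)}$ and $\cH_{\bW}$ to $\cH_{T_{\ba}(\bW)}$, because $\langle\bxi^{(i)},\bw^{(i)}\rangle=0$ iff $\langle a_{i}\bxi^{(i)},a_{i}\bw^{(i)}\rangle=0$. Using $\|a_{i}\bw^{(i)}\|=a_{i}\|\bw^{(i)}\|$, one gets
\[
\langle DT_{\ba}(\bXi_{1}), DT_{\ba}(\bXi_{2})\rangle_{T_{\ba}(\bW)} = \sum_{i=1}^{m}\frac{\langle a_{i}\bxi_{1}^{(i)}, a_{i}\bxi_{2}^{(i)}\rangle}{\|a_{i}\bw^{(i)}\|^{2}} = \sum_{i=1}^{m}\frac{a_{i}^{2}\langle\bxi_{1}^{(i)},\bxi_{2}^{(i)}\rangle}{a_{i}^{2}\|\bw^{(i)}\|^{2}} = \langle\bXi_{1},\bXi_{2}\rangle_{\bW},
\]
so every $T_{\ba}$ is an isometry of $(\overline{\cM},\langle\cdot,\cdot\rangle)$. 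This cancellation of the scale factors is precisely why the weights $1/\|\bw^{(i)}\|^{2}$ appear in the definition.

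To conclude, I would assemble the pieces: for $[\bW]\in\cM$ and $\bXi_{1},\bXi_{2}\in\cT_{[\bW]}\cM$, set $\langle\bXi_{1},\bXi_{2}\rangle_{[\bW]}:=\langle\bXi_{1}^{h},\bXi_{2}^{h}\rangle_{\bW}$ using the horizontal lifts at a representative $\bW$. Because $\pi\circ T_{\ba}=\pi$ and $DT_{\ba}$ preserves both distributions, the horizontal lifts at the representative $T_{\ba}(\bW)$ are exactly $DT_{\ba}(\bXi_{1}^{h}),DT_{\ba}(\bXi_{2}^{h})$, and the displayed isometry identity shows the value does not depend on the chosen representative; smoothness of the induced metric follows from smoothness of the lift and of \eqref{eq:riemannian metric}. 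I expect the only delicate point to be the domain bookkeeping — one must work on the punctured product $\overline{\cM}$ where the positive-scaling action is free and proper (so that the proposition exhibiting $\cM$ as a quotient manifold applies and $\|\bw^{(i)}\|^{2}\neq0$) — whereas the invariance computation itself is the short cancellation above.
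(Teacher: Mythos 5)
Your proposal is correct and follows essentially the same route as the paper's own proof: both reduce the claim to the fact that horizontal lifts at different representatives are related by $\bar{\bXi}_{T_{\ba}(\overline{\bW})} = T_{\ba}(\bar{\bXi}_{\overline{\bW}})$, after which the $a_{i}^{2}$ factors cancel in \eqref{eq:riemannian metric}, making the metric independent of the representative. You simply spell out more of the standard Riemannian-submersion bookkeeping (the vertical/horizontal splitting, freeness of the action on the punctured product) that the paper delegates to the citation of \citet{absil2009optimization}.
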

	The proof of this Proposition is in Appendix \ref{app:proof of riemannian metric}. With the Riemannian metric, we can exactly compute the Riemannian gradient on the PSI manifold. According to Definition of Riemannian gradient in Appendix \ref{app: definitions of manifold}, we have 
	\begin{equation}
	\small
	\grad_{\bw^{(i)}} \cL(\bW,\bg) = \|\bw^{(i)}\|^{2}\nabla_{\bw^{(i)}}\cL(\bW, \bg)
	\end{equation}
	for each $\bw^{(i)}$. The retraction function $R_{\bW}(\cdot)$ is required to obtain gradient-based algorithms on the PSI manifold. 
	\begin{proposition}\label{pro:retr}
		For every $\bW\in\mathcal{M}, \bXi\in \cT_{\bW}\mathcal{M}$, a retraction function $R_{\bW}(\bXi)$ on the PSI manifold can be 
		\begin{equation}\small
			R_{\bW}(\bXi) = \bW + \bXi.
		\end{equation}
	\end{proposition}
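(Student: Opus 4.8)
The plan is to verify the two defining conditions of a retraction (see Appendix~\ref{app: definitions of manifold} or \citep{absil2009optimization}): for every $\bW\in\cM$, $R_{\bW}$ should be a smooth map into $\cM$ with $R_{\bW}(0_{\bW})=\bW$, and it should satisfy the local rigidity condition $DR_{\bW}(0_{\bW})=\mathrm{id}_{\cT_{\bW}\cM}$ under the canonical identification $\cT_{0_{\bW}}(\cT_{\bW}\cM)\simeq\cT_{\bW}\cM$. Since $\cM=\overline{\cM}/\sim$ is a quotient manifold, I would first pin down the concrete realization of $\cT_{\bW}\cM$ used implicitly throughout: the horizontal space
\[
\cH_{\bW}=\bigl\{\bXi=(\bxi^{(1)},\dots,\bxi^{(m)}):\ \langle\bxi^{(i)},\bw^{(i)}\rangle=0\ \text{for all }i\bigr\},
\]
i.e.\ the orthogonal complement of the vertical space $\cV_{\bW}=\mathrm{span}\{(0,\dots,\bw^{(i)},\dots,0):i=1,\dots,m\}$ tangent to the fibre $\{T_{\ba}(\bW):a_{i}>0\}$ (and the orthogonality is the same for the Euclidean metric and the block-rescaled metric of Proposition~\ref{pro:riemannian metric}, since positive per-block scalars do not affect it). This is consistent with the expression $\grad_{\bw^{(i)}}\cL(\bW,\bg)=\|\bw^{(i)}\|^{2}\nabla_{\bw^{(i)}}\cL(\bW,\bg)$ derived above, which is horizontal precisely because $\cL$ is PSI in each $\bw^{(i)}$. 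Here, as in the metric, each $\bw^{(i)}$ is assumed nonzero.

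The first real task is to check that $\bW\mapsto\bW+\bXi$ descends to a well-defined map on the quotient and actually lands in $\cM$. For the latter, the Pythagorean identity gives $\|\bw^{(i)}+\bxi^{(i)}\|^{2}=\|\bw^{(i)}\|^{2}+\|\bxi^{(i)}\|^{2}>0$ for horizontal $\bXi$, so no block collapses to the origin; in fact this shows $R_{\bW}$ is defined on \emph{all} of $\cT_{\bW}\cM$, not merely a neighbourhood of $0_{\bW}$. For well-definedness on the quotient, given $\bW'=T_{\ba}(\bW)$, the horizontal lift at $\bW'$ of the abstract tangent vector represented by $\bXi$ at $\bW$ is $DT_{\ba}[\bXi]=(a_{1}\bxi^{(1)},\dots,a_{m}\bxi^{(m)})$ — which is indeed horizontal at $\bW'$ — and then
\[
T_{\ba}(\bW)+DT_{\ba}[\bXi]=\bigl(a_{i}(\bw^{(i)}+\bxi^{(i)})\bigr)_{i=1}^{m}=T_{\ba}(\bW+\bXi)\sim\bW+\bXi,
\]
so the value of $R$ does not depend on the chosen representative. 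Smoothness of $R:T\cM\to\cM$ is then immediate, being the composition of the smooth addition map on $\overline{\cM}$ (restricted to the smoothly varying horizontal distribution) with the smooth quotient submersion $\pi:\overline{\cM}\to\cM$.

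It then remains to verify the two axioms. Centering is trivial: $R_{\bW}(0_{\bW})=\pi(\bW+0)=\bW$. For local rigidity I would write $R_{\bW}=\pi\circ\tau_{\bW}$, where $\tau_{\bW}:\cH_{\bW}\to\overline{\cM}$, $\tau_{\bW}(\bXi)=\bW+\bXi$, is the restriction of a translation to the linear subspace $\cH_{\bW}$, so $D\tau_{\bW}(0_{\bW})$ is simply the inclusion $\cH_{\bW}\hookrightarrow\cT_{\bW}\overline{\cM}$; composing with $D\pi(\bW)$ and using that $D\pi(\bW)|_{\cH_{\bW}}$ is exactly the isomorphism $\cH_{\bW}\xrightarrow{\ \sim\ }\cT_{\bW}\cM$ that defines the horizontal realization, we obtain $DR_{\bW}(0_{\bW})=\mathrm{id}_{\cT_{\bW}\cM}$. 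I expect the only delicate point to be the well-definedness on the quotient: one should note that if non-horizontal tangent representatives were allowed the formula would fail to descend, so the horizontal realization of $\cT_{\bW}\cM$ — the same one underlying $\grad$ and the metric of Proposition~\ref{pro:riemannian metric} — is the correct setting, after which everything reduces to the elementary computations above.
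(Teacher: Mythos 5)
Your proposal is correct and follows essentially the same route as the paper: the crux in both is that horizontal lifts transform as $\bar{\bXi}_{T_{\ba}(\overline{\bW})}=T_{\ba}(\bar{\bXi}_{\overline{\bW}})$, so $T_{\ba}(\bW)+T_{\ba}(\bXi)=T_{\ba}(\bW+\bXi)\sim\bW+\bXi$ and Euclidean addition descends to the quotient. The paper then simply invokes Proposition 4.1.3 of \citep{absil2009optimization} to conclude, whereas you additionally make the horizontal space explicit, check via the Pythagorean identity that no block collapses to zero, and verify the centering and local-rigidity axioms by hand — a more self-contained but not substantively different argument.
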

	The proof of this proposition appears in Appendix \ref{app:proof of unified}. Now we are ready to optimize PSI parameters in the PSI manifold directly, which obviates the ambiguity of training 
	\subsection{Optimization on the PSI manifold}\label{sec:optimization on PSI manifold}
	In this subsection, we give the update rules of GD, SGD, and SGD with momentum on the PSI manifold, abbreviated as PSI-GD, PSI-SGD, and PSI-SGDM respectively. We show that all of these update rules on the PSI manifold generate a unified optimization path for parameters that positively scale-equivalent with each other.
	\par
	Combining \eqref{eq:riemannian gradient descent} and Proposition \ref{pro:riemannian metric}, \ref{pro:retr}, we have the following update rule of GD
	\begin{equation}\label{eq:GD on manifold}
	\small
	\begin{aligned}
	\bw_{t + 1}^{(i)} & = R_{\bw^{(i)}_{t}}\left(-\eta_{\bw_{t}^{(i)}}\grad_{\bw_{t}^{(i)}}\cL(\bW_{t}, \bg_{t})\right) \\
	& = \bw_{t}^{(i)} - \eta_{\bw_{t}^{(i)}}\|\bw_{t}^{(i)}\|^{2}\nabla_{\bw_{t}^{(i)}}\cL(\bW_{t},\bg_{t}), 
	\end{aligned}
	\end{equation}
	SGD
	\begin{equation}\label{eq:SGD on manifold}
	\small
	\begin{aligned}
	\bw_{t + 1}^{(i)}  & = R_{\bw^{(i)}_{t}}\left(-\eta_{\bw_{t}^{(i)}}\grad_{\bw_{t}^{(i)}}\hat{\cL}(\bW_{t}, \bg_{t})\right) \\
	& = \bw_{t}^{(i)} - \eta_{\bw_{t}^{(i)}}\|\bw_{t}^{(i)}\|^{2}\cG_{\bw_{t}^{(i)}}(\bW_{t},\bg_{t}), 
	\end{aligned}
	\end{equation}
	and SGD with momentum 
	\begin{equation}\small\label{eq:riemannian gradient descent with momentum}
	\begin{aligned}
		\bu_{t+1}^{(i)} & = R_{\rho\bu_{t}^{(i)}}\left(-\eta_{\bw_{t}^{(i)}}\grad_{\bw_{t}^{(i)}}\hat{\cL}(\bW_{t}, \bg_{t})\right) \\
		& = \rho\bu_{t}^{(i)} - \eta_{\bw_{t}^{(i)}}\|\bw_{t}^{(i)}\|^{2}\cG_{\bw_{t}^{(i)}}(\bW_{t},\bg_{t}); \\
		\bw_{t+1}^{(i)} & = R_{\bw_{t}^{(i)}}\left(\bu_{t+1}^{(i)}\right) = \bw_{t+1}^{(i)} +\bu_{t+1}^{(i)}.
		\end{aligned}
	\end{equation}
	for PSI parameters. Here 
	\begin{equation}\label{eq:PSI-SGD gradient estimator}
	\small
	\begin{aligned}
	\grad_{\bw_{t}^{(i)}} & \hat{\cL}(\bW_{t}, \bg_{t})  = \|\bw_{t}^{(i)}\|^{2}\cG_{\bw_{t}^{(i)}}(\bW_{t},\bg_{t}) \\
	& = \|\bw_{t}^{(i)}\|^{2}\nabla_{\bw_{t}^{(i)}}\frac{1}{S}\sum_{k=1}^{S}\nabla_{\bw_{t}^{(i)}}\ell\left(f(\bx_{k},\bW_{t}), y_{k}\right)
	\end{aligned}
	\end{equation}
	for a batch of $\{(\bx_{1},y_{1}),\cdots,(\bx_{S},y_{S})\}$ is an unbiased estimation to the Riemannian gradient. In addition, the update rules of non-scale-invariant parameters $\bg$ follows the gradient-based algorithms in Euclidean space. 
	\begin{remark}
		The proposed PSI-GD is similar to the one in \citep{badrinarayanan2015symmetry}. However, they do not prove that the update rule is induced by a retraction function. Besides that, our theoretical characterization of its convergence rate is absent in their work. 
	\end{remark}
	\par
	One can justify that the proposed algorithms on PSI manifold are essentially using adaptive learning rates decided by $\|\bw_{t}^{(i)}\|^{2}$ to match the gradient scale brought by a variant of weight scale. The following proposition shows the well-posedness of the algorithms on PSI manifold. 
	\begin{theorem}\label{thm:thm2}
		For two positively scale-equivalent weights $\bW_{0}$ and $T_{\ba}(\bW_{0})$, let $\bW_{t}$, $\hat{\bW}_{t}$ be $t$-th iterate of PSI-SGDM respectively started from $\{\bW_{0},\bU_{0}\}$ and $\{T_{\ba}(\bW_{0}), T_{\ba}(\bU_{0})\}$. Then we have $\hat{\bW}_{t}= T_{\ba}(\bW_{t})$.
	\end{theorem}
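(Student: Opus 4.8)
The plan is to prove the identity by induction on $t$, strengthening the hypothesis to also track the momentum buffer and the scale-variant parameters: I will show that at every step $t$,
\begin{equation*}
\hat{\bw}_t^{(i)} = a_i\,\bw_t^{(i)}, \qquad \hat{\bu}_t^{(i)} = a_i\,\bu_t^{(i)} \quad (\text{all neurons } i), \qquad \hat{\bg}_t = \bg_t .
\end{equation*}
The base case $t=0$ is precisely the assumed initialization $\{\hat{\bW}_0,\hat{\bU}_0\}=\{T_{\ba}(\bW_0),T_{\ba}(\bU_0)\}$ together with a common initialization of $\bg$. Since PSI-GD \eqref{eq:GD on manifold} and PSI-SGD \eqref{eq:SGD on manifold} are the $\rho=0$ specialization of PSI-SGDM \eqref{eq:riemannian gradient descent with momentum} (with $\bu_{t+1}^{(i)}$ playing the role of the single increment, and $S=n$ recovering the full-batch case), it suffices to carry out the argument for PSI-SGDM.

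For the inductive step I would combine a few elementary facts, assuming the two runs are coupled through the same minibatch $\{(\bx_k,y_k)\}_{k=1}^S$ at iteration $t$. First, the norm rescales quadratically, $\|\hat{\bw}_t^{(i)}\|^2 = a_i^2\|\bw_t^{(i)}\|^2$. Second, because $f(\bx_k,\cdot)$ is positively scale-invariant, applying the chain rule \eqref{eq:gradient scale} to each term $\ell(f(\bx_k,\cdot),y_k)$ and averaging (using $\hat{\bg}_t=\bg_t$) gives the inverse rescaling of the stochastic Euclidean gradient,
\begin{equation*}
\cG_{\hat{\bw}_t^{(i)}}(\hat{\bW}_t,\bg_t) = \cG_{a_i\bw_t^{(i)}}(T_{\ba}(\bW_t),\bg_t) = \tfrac{1}{a_i}\,\cG_{\bw_t^{(i)}}(\bW_t,\bg_t),
\end{equation*}
while the same argument shows $\nabla_{\bg}\cL$ is unchanged at the two representatives, so $\hat{\bg}_{t+1}=\bg_{t+1}$. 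Third, the adaptive learning rate is scale-invariant, $\eta_{\hat{\bw}_t^{(i)}}=\eta_{\bw_t^{(i)}}$. Substituting into the PSI-SGDM update for the hatted run,
\begin{align*}
\hat{\bu}_{t+1}^{(i)} &= \rho\,\hat{\bu}_t^{(i)} - \eta_{\hat{\bw}_t^{(i)}}\|\hat{\bw}_t^{(i)}\|^2\,\cG_{\hat{\bw}_t^{(i)}}(\hat{\bW}_t,\bg_t) \\
&= a_i\rho\,\bu_t^{(i)} - \eta_{\bw_t^{(i)}}\cdot a_i^2\|\bw_t^{(i)}\|^2\cdot \tfrac{1}{a_i}\,\cG_{\bw_t^{(i)}}(\bW_t,\bg_t) = a_i\,\bu_{t+1}^{(i)},
\end{align*}
and therefore $\hat{\bw}_{t+1}^{(i)} = \hat{\bw}_t^{(i)} + \hat{\bu}_{t+1}^{(i)} = a_i(\bw_t^{(i)}+\bu_{t+1}^{(i)}) = a_i\,\bw_{t+1}^{(i)}$. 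This closes the induction; reading off the weight coordinates gives $\hat{\bW}_t=T_{\ba}(\bW_t)$.

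The one nontrivial ingredient --- and what I expect to be the main obstacle --- is the scale-invariance of $\eta_{\bw_t^{(i)}}$. If $\eta$ is a fixed scalar this is immediate and the proof is purely algebraic. In the adaptive regime, where $\eta_{\bw_t^{(i)}}$ is chosen from the local smoothness of $\cL$ along the retraction, I would argue that the quantity controlling the step size (the local Lipschitz constant of $\grad\cL$ as measured on the PSI manifold at $\bW_t$) depends only on the equivalence class $[\bW_t]$: the manifold identifies $\bW_t$ with $T_{\ba}(\bW_t)$, the loss value is preserved, and the Riemannian metric \eqref{eq:riemannian metric} carries exactly the compensating $\|\bw^{(i)}\|^{-2}$ factors, so the intrinsic smoothness --- hence $\eta$ --- coincides at the two representatives (and, by the inductive hypothesis, along their whole histories, should $\eta$ be trajectory-dependent). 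I would also state in the proof that it is the coupling through a common minibatch sequence that makes the equality hold pointwise rather than only in distribution.
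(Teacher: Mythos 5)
Your proof is correct and follows essentially the same route as the paper's: an induction on $t$ using the gradient rescaling identity \eqref{eq:gradient scale} together with the PSI-SGDM update rule, tracking both $\bW_t$ and the momentum buffer $\bU_t$. You spell out details the paper leaves implicit (the $a_i^2$ norm factor cancelling against the $1/a_i$ gradient factor, the scale-invariance of $\eta_{\bw_t^{(i)}}$, and the coupling of the two runs through a common minibatch sequence), all of which are accurate and strengthen the argument rather than diverge from it.
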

	This theorem can be easily generalized to PSI-GD and PSI-SGD; the proof of it is in Appendix \ref{app:proof of unified}. The conclusion shows that the iterates generated by the update rules on the PSI manifold are equivalent with respect to applying $T_{\ba}(\cdot)$. Hence, optimizing on the PSI manifold obviates the optimization ambiguity brought by the PSI property. The complete algorithm of PSI-SGDM refers to Algorithm \ref{alg1}. With $\rho=0$, the PSI-SGDM degenerates to PSI-SGD.  
	\begin{algorithm}[t!]
		\caption{SGD with momentum on the PSI manifold (PSI-SGDM).}
		\label{alg1}
		\begin{algorithmic}\small
			\STATE {\textbf{Input}: Training steps $T$; batch size $S$; momentum parameters $\rho$; learning rate $\eta_{\bw_{t}^{(i)}}$ and $\eta_{\bg_{t}}$.}
			\FOR {$t = 0 \cdots T - 1$}
			\STATE {Sampling a batch of data $\{(x_{1},y_{1}),\cdots, (\bx_{S}.y_{S})\}$ from training set}
			\FOR   {$i=1,\cdots, m$}
			\STATE {$\bu_{t+1}^{(i)} = \newline \rho\bu_{t}^{(i)} - \eta_{\bw_{t}^{(i)}}\|\bw_{t}^{(i)}\|^{2}\frac{1}{S}\sum_{k=1}^{S}\nabla_{\bw_{t}^{(i)}}\ell\left(f(\bx_{k},\bW_{t}), y_{k}\right)$}
			\STATE {$\bw_{t+1}^{(i)} = \bw_{t+1}^{(i)} + \bu_{t+1}^{(i)}$} 
			\ENDFOR
			\STATE {$\bg_{t + 1} = \bg_{t} - \eta_{\bg_{t}}\sum_{k=1}^{S}\nabla_{\bg_{t}}\ell\left(f(\bx_{k},\bW_{t}), y_{k}\right)$}
			\ENDFOR
			\RETURN {$(\bW_{T}, \bg_{T})$}
		\end{algorithmic}
	\end{algorithm}
	\par
	\begin{remark}
		The update rule of SGD with momentum on manifold \citep{cho2017riemannian,liu2017accelerated} involves the parallel transformation to make sure the  $\bU_{t+1}$ locates in the tangent space $\cT_{\bW_{t}}\mathcal{M}$. However, it requires the proposed retraction in Proposition \ref{pro:retr} to be an exponential retraction map \citep{absil2009optimization}, which may fail in-practical. Hence, we heuristically propose the PSI-SGDM without verifying the retraction function. Even though, our method is well-defined to positively scale-transformation.
	\end{remark}
	\section{Optimization on PSI Manifold Accelerates Training}\label{sec:convergence results}
	In this section, we give the convergence rates of PSI-GD and PSI-SGD, and show that they accelerate training compared with vanilla GD and SGD on the Euclidean space. 
	\subsection{Convergence Results}
	We give the convergence rates of PSI-GD and PSI-SGD (Algorithm \ref{alg1}) in this subsection. For the PSI parameters,  
	let $\bV_{t}=(\bw^{(1)}_{t}/\|\bw^{(1)}_{t}\| ,\cdots, \bw^{(m)}_{t}/\|\bw^{(m)}_{t}\|)$ be the normalized parameters, we impose the following assumptions to characterize the smoothness of loss function.  
	\begin{equation}\label{eq:lipschitz on manifold}\small
	\begin{aligned}
	 & \left\|\nabla^{2}_{\bv^{(i)}\bv^{(j)}}\cL(\bV,\bg)\right\|_{2}  \leq L_{ij}^{\bv\bv}, \\
	 & \left\|\nabla^{2}_{\bv^{(i)}\bg}\cL(\bV,\bg)\right\|_{2}  \leq L_{i}^{\bv\bg}, \\
	 & \left\|\nabla^{2}_{\bg}\cL(\bV,\bg)\right\|_{2}  \leq L^{\bg\bg}.
	\end{aligned}
	\end{equation}
	Here $\|\cdot\|_{2}$ is the spectral norm of matrix. For PSI-SGD, we further impose the bounded variance assumption:
	\begin{equation}\label{eq:bounded variance}
	\small
	\begin{aligned}
	\mE\left[\left\|\cG_{\bw_{t}^{(i)}}(\bW_{t},\bg_{t}) - \nabla_{\bw_{t}^{(i)}}\cL(\bW_{t},\bg_{t}) \right\|^{2}\right] & \leq \sigma^{2}; \\
	\mE\left[\left\|\cG_{\bg_{t}}(\bW_{t},\bg_{t}) - \nabla_{\bw_{t}^{(i)}}\cL(\bW_{t},\bg_{t}) \right\|^{2}\right] & \leq \sigma^{2}. 
	\end{aligned}
	\end{equation}
	\begin{remark}
		Due to \eqref{eq:gradient scale}, the upper bound \eqref{eq:bounded variance} can not hold as $\|\bw_{i}\|$ goes to zero. However, due to Lemma \ref{lem:increasing} below, the weights $\bw_{i}$ obtained by GD or SGD will never goes to zero. Thus the bounded variance \eqref{eq:bounded variance} is reasonable in this regime. 
	\end{remark}
	Let $\cL(\bW^{*},\bg^{*})=\inf_{\bW,\bg}\cL(\bW,\bg)$, the following theorems give the convergence rates of PSI-GD and PSI-SGD.
	\begin{theorem}\label{thm:PSI-GD}
		Let $\{\bW_{t},\bg_{t}\}$ be the iterates of PSI-GD \eqref{eq:GD on manifold}, then we have 
		\begin{equation}\label{eq:rates of PSI-GD}
		\small
		\min_{0\leq t< T}\left\|\nabla_{\bV_{t},\bg_{t}}\cL(\bV_{t}, \bg_{t})\right\|^{2} \leq \frac{2\tilde{L}(\cL(\bW_{0},\bg_{0}) - \cL(\bW^{*},\bg^{*}))}{T}
		\end{equation}
		by choosing $\eta_{\bw_{t}}^{(i)} = 1/\tilde{L}_{\bv^{(i)}}$ and $\eta_{\bg_{t}} = 1/\tilde{L}_{\bg}$. Here $\tilde{L} = \max\{\tilde{L}_{\bv^{(1)}}, \cdots, \tilde{L}_{\bv^{(m)}}, \tilde{L}_{\bg}\}$, where $\tilde{L}_{\bv^{(i)}} = L_{i}^{\bv\bg} + \sum_{j=1}^{m}L_{ij}^{\bv\bv}; \tilde{L}_{\bg} = mL^{\bg\bg} + \sum_{i=1}^{m}L_{i}^{\bv\bg}$. 
	\end{theorem}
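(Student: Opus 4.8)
The plan is to show that PSI-GD, read in terms of the normalized parameters $(\bV_t,\bg_t)$, is nothing but ordinary block-wise gradient descent, and then to run a standard second-order descent argument using the block smoothness constants in \eqref{eq:lipschitz on manifold}. First I would record the two consequences of the PSI property that everything rests on. Differentiating $\cL(T_{\ba}(\bW),\bg)=\cL(\bW,\bg)$ in $a_i$ at $a_i=1$ gives $\langle\nabla_{\bw^{(i)}}\cL(\bW,\bg),\bw^{(i)}\rangle=0$, and the case $a_i=1/\|\bw^{(i)}\|$ of \eqref{eq:gradient scale} gives $\nabla_{\bw^{(i)}}\cL(\bW,\bg)=\|\bw^{(i)}\|^{-1}\nabla_{\bv^{(i)}}\cL(\bV,\bg)$. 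Substituting the latter into \eqref{eq:GD on manifold} and dividing by $\|\bw_t^{(i)}\|$ gives
\[
\frac{\bw_{t+1}^{(i)}}{\|\bw_t^{(i)}\|}=\bv_t^{(i)}-\eta_{\bw_t^{(i)}}\nabla_{\bv_t^{(i)}}\cL(\bV_t,\bg_t)=:\tilde\bv_{t+1}^{(i)},
\]
so $\bv_{t+1}^{(i)}=\tilde\bv_{t+1}^{(i)}/\|\tilde\bv_{t+1}^{(i)}\|$: PSI-GD is a Euclidean gradient step in $\bv^{(i)}$ followed by renormalization. Since that step direction is orthogonal to $\bv_t^{(i)}$, $\|\tilde\bv_{t+1}^{(i)}\|^2=1+\eta_{\bw_t^{(i)}}^2\|\nabla_{\bv_t^{(i)}}\cL\|^2\ge 1$, which rescaled back is exactly Lemma \ref{lem:increasing}. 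For $\bg$, the standard Euclidean update is $\bg_{t+1}=\bg_t-\eta_{\bg_t}\nabla_{\bg}\cL(\bV_t,\bg_t)$, because $\cL(\bW,\bg)=\cL(\bV,\bg)$ forces the $\bg$-gradients to coincide.

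Next I would use scale invariance once more to make the renormalization invisible to the objective: since $\tilde\bv_{t+1}^{(i)},\bv_{t+1}^{(i)},\bw_{t+1}^{(i)}$ are all positive multiples of one another,
\[
\cL(\bW_{t+1},\bg_{t+1})=\cL(\bV_{t+1},\bg_{t+1})=\cL(\tilde\bV_{t+1},\bg_{t+1}),
\]
where $\tilde\bV_{t+1}=\bV_t-\big(\eta_{\bw_t^{(1)}}\nabla_{\bv_t^{(1)}}\cL,\ldots,\eta_{\bw_t^{(m)}}\nabla_{\bv_t^{(m)}}\cL\big)$. Hence the decrease of $\cL$ over one PSI-GD step equals the decrease produced by a single ordinary gradient step in the variables $(\bV,\bg)$ with block step sizes $\eta_{\bw_t^{(i)}}=1/\tilde L_{\bv^{(i)}}$ and $\eta_{\bg_t}=1/\tilde L_{\bg}$, so I can analyze it by a second-order Taylor expansion in the ambient space.

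Then comes the descent inequality. Taylor-expand $\cL$ around $(\bV_t,\bg_t)$ along the segment to $(\tilde\bV_{t+1},\bg_{t+1})$; the intermediate points leave the product of spheres, but by the degree-zero homogeneity of $\cL$ in each $\bw^{(i)}$ their Hessians are bounded combinations of $\nabla\cL$ and $\nabla^2\cL$ at the corresponding normalized points, so the block bounds \eqref{eq:lipschitz on manifold} still control the remainder (equivalently, invoke the retraction-smoothness framework of \citep{boumal2019global} on the Oblique manifold). Writing $\Delta_{\bv^{(i)}}=-\eta_{\bw_t^{(i)}}\nabla_{\bv_t^{(i)}}\cL$ and $\Delta_{\bg}=-\eta_{\bg_t}\nabla_{\bg}\cL$, a term-by-term AM--GM bound on the Hessian quadratic form, using $\|\nabla^2_{\bv^{(i)}\bv^{(j)}}\cL\|_2\le L^{\bv\bv}_{ij}$, $\|\nabla^2_{\bv^{(i)}\bg}\cL\|_2\le L^{\bv\bg}_i$ and $\|\nabla^2_{\bg}\cL\|_2\le L^{\bg\bg}$, yields $\langle\Delta,\nabla^2\cL(\xi)\Delta\rangle\le\sum_i\tilde L_{\bv^{(i)}}\|\Delta_{\bv^{(i)}}\|^2+\tilde L_{\bg}\|\Delta_{\bg}\|^2$ with precisely the constants $\tilde L_{\bv^{(i)}}=L^{\bv\bg}_i+\sum_j L^{\bv\bv}_{ij}$ and $\tilde L_{\bg}=mL^{\bg\bg}+\sum_i L^{\bv\bg}_i$. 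Plugging in $\eta_{\bw_t^{(i)}}=1/\tilde L_{\bv^{(i)}}$ and $\eta_{\bg_t}=1/\tilde L_{\bg}$, the quadratic term is at most half the magnitude of the linear term, so
\[
\cL(\bW_{t+1},\bg_{t+1})\le\cL(\bW_t,\bg_t)-\tfrac12\Big(\textstyle\sum_{i=1}^m\tfrac{1}{\tilde L_{\bv^{(i)}}}\|\nabla_{\bv_t^{(i)}}\cL\|^2+\tfrac{1}{\tilde L_{\bg}}\|\nabla_{\bg}\cL\|^2\Big)\le\cL(\bW_t,\bg_t)-\tfrac{1}{2\tilde L}\|\nabla_{\bV_t,\bg_t}\cL(\bV_t,\bg_t)\|^2,
\]
using $\tilde L=\max\{\tilde L_{\bv^{(1)}},\ldots,\tilde L_{\bv^{(m)}},\tilde L_{\bg}\}$.

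Finally I would telescope: summing this over $t=0,\ldots,T-1$ and using $\cL(\bW_T,\bg_T)\ge\cL(\bW^*,\bg^*)$ gives $\sum_{t=0}^{T-1}\|\nabla_{\bV_t,\bg_t}\cL\|^2\le 2\tilde L\big(\cL(\bW_0,\bg_0)-\cL(\bW^*,\bg^*)\big)$, and bounding the left-hand side below by $T\min_{0\le t<T}\|\nabla_{\bV_t,\bg_t}\cL\|^2$ gives \eqref{eq:rates of PSI-GD}. The one genuinely delicate point is the Hessian control in the third step: the smoothness assumptions \eqref{eq:lipschitz on manifold} are posited only on the unit-norm slice, whereas the Taylor remainder needs bounds along the whole GD segment, which leaves that slice; exploiting degree-zero homogeneity (so that $\nabla^2\cL$ near the slice is a controlled combination of $\nabla\cL$ and $\nabla^2\cL$ on the slice) is what makes this rigorous. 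Everything else — the block-norm bookkeeping that extracts the exact constants $\tilde L_{\bv^{(i)}},\tilde L_{\bg}$, and the telescoping — is routine.
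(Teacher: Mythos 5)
Your proof is correct and follows essentially the same route as the paper's: both reduce PSI-GD to a Euclidean block-gradient step on the normalized parameters via the PSI scaling identities for the gradient and Hessian, apply the standard smoothness/Young's-inequality descent argument with the constants $\tilde{L}_{\bv^{(i)}},\tilde{L}_{\bg}$, and telescope. The one point where you are more explicit than the paper --- controlling the Hessian along the Taylor segment, which leaves the unit-norm slice --- is resolved in both arguments by the same homogeneity identity $\|\bw^{(i)}\|\|\bw^{(j)}\|\nabla^{2}_{\bw^{(i)}\bw^{(j)}}\cL(\bW,\bg)=\nabla^{2}_{\bv^{(i)}\bv^{(j)}}\cL(\bV,\bg)$ together with the fact that the block norms do not decrease along the step.
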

	\begin{theorem}\label{thm:PSI-SGD}
		Let $\{\bW_{t},\bg_{t}\}$ be the iterates of PSI-SGD \eqref{eq:SGD on manifold}, then we have 
		\begin{equation}\label{eq:rates of PSI-SGD}
		\small
		\begin{aligned}
		\min_{0\leq t< T}\!\mE\!\left[\left\|\nabla_{\bV_{t},\bg_{t}} \cL(\bV_{t}, \bg_{t})\right\|^{2}\right] & \!\!\leq\!\! \frac{2\tilde{L}(\cL(\bW_{0}, \bg_{0}) \!-\! \cL(\bW^{*}, \bg^{*}))}{\sqrt{T}} \\
		& + \frac{\sigma^{2}\tilde{L}}{2\sqrt{T}}\sum\limits_{i=1}^{m}\left(\frac{1}{\tilde{L}_{\bg}} + \frac{1}{\tilde{L}_{\bv^{(i)}}}\right),
		\end{aligned}
		\end{equation}
		by choosing $\eta_{\bw_{t}^{(i)}}=\frac{1}{\tilde{L}_{\bv^{(i)}}\sqrt{T}}$ and $\eta_{\bg_{t}}=\frac{1}{\tilde{L}_{\bg}\sqrt{T}}$, where $\tilde{L}, \tilde{L}_{\bv^{(i)}}$ and $\tilde{L}_{\bg}$ are defined in Theorem \ref{thm:PSI-GD}.
	\end{theorem}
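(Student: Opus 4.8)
The plan is to run the standard nonconvex-SGD descent-lemma argument, but carried out in the \emph{normalized} coordinates $(\bv^{(1)},\dots,\bv^{(m)},\bg)$, so that the adaptive factor $\|\bw_t^{(i)}\|^{2}$ in \eqref{eq:SGD on manifold} is absorbed exactly and the curvature bounds \eqref{eq:lipschitz on manifold} become directly usable. I will use two consequences of the PSI property: $\cL(\bW,\bg)=\cL(\bV,\bg)$; and Euler's identity for the $0$-homogeneous function $\cL$ (homogeneous separately in each block $\bw^{(i)}$), which gives $\langle\bw^{(i)},\nabla_{\bw^{(i)}}\cL\rangle=0$ and, for $\bv^{(i)}=\bw^{(i)}/\|\bw^{(i)}\|$, the scaling relations $\|\bw^{(i)}\|\,\nabla_{\bw^{(i)}}\cL(\bW,\bg)=\nabla_{\bv^{(i)}}\cL(\bV,\bg)$ and $\nabla^{2}_{\bw^{(i)}\bw^{(j)}}\cL\propto 1/(\|\bw^{(i)}\|\,\|\bw^{(j)}\|)$. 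By Lemma~\ref{lem:increasing} the norms $\|\bw_t^{(i)}\|$ stay bounded away from $0$ along the trajectory, so all these normalizations are well defined.

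\textbf{Step 1 (reduction to a plain gradient step in $\bv$-coordinates).} Dividing the $i$-th block of \eqref{eq:SGD on manifold} by $\|\bw_t^{(i)}\|$ and writing $\cG_{\bw_t^{(i)}}=\nabla_{\bw_t^{(i)}}\cL+\bxi_t^{(i)}$ with $\mE[\bxi_t^{(i)}\mid\cF_t]=0$ ($\cF_t$ the $\sigma$-algebra of the first $t$ iterates), the scaling relation yields
\[
\frac{\bw_{t+1}^{(i)}}{\|\bw_t^{(i)}\|}
=\bv_t^{(i)}-\eta_{\bw_t^{(i)}}\bigl(\nabla_{\bv_t^{(i)}}\cL(\bV_t,\bg_t)+\|\bw_t^{(i)}\|\,\bxi_t^{(i)}\bigr)
=:\hat{\bv}_{t+1}^{(i)},
\]
i.e.\ an ordinary stochastic gradient step on $\cL$ in the $\bv^{(i)}$ coordinate with step size \emph{exactly} $\eta_{\bw_t^{(i)}}$ and zero-mean noise $\hat{\bxi}_t^{(i)}:=\|\bw_t^{(i)}\|\,\bxi_t^{(i)}$. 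Because $\cL$ is scale-invariant, renormalizing $\hat{\bv}_{t+1}^{(i)}$ to $\bv_{t+1}^{(i)}$ does not change its value, so $\cL(\bW_{t+1},\bg_{t+1})=\cL(\hat{\bv}_{t+1}^{(1)},\dots,\hat{\bv}_{t+1}^{(m)},\bg_{t+1})$, while $\bg_{t+1}=\bg_t-\eta_{\bg_t}(\nabla_{\bg_t}\cL+\bzeta_t)$ is already a stochastic gradient step.

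\textbf{Steps 2--4 (descent lemma, conditional expectation, telescoping).} Taylor-expand $\cL$ at $(\bV_t,\bg_t)$ along the displacement to $(\hat{\bv}_{t+1}^{(\cdot)},\bg_{t+1})$ and bound the quadratic remainder with \eqref{eq:lipschitz on manifold}; applying $ab\le\tfrac12(a^2+b^2)$ to the cross-block terms collapses the matrix of block-Hessian norms $[L_{ij}^{\bv\bv}]$ together with the $L_i^{\bv\bg}$, $L^{\bg\bg}$ entries into their row sums, which are precisely the constants $\tilde L_{\bv^{(i)}}$ and $\tilde L_{\bg}$ of Theorem~\ref{thm:PSI-GD}. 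This gives a block-coordinate descent inequality whose linear part, after taking $\mE[\,\cdot\mid\cF_t]$ and using unbiasedness of $\hat{\bxi}_t^{(i)}$ and $\bzeta_t$, equals $-\sum_i\eta_{\bw_t^{(i)}}\|\nabla_{\bv_t^{(i)}}\cL\|^2-\eta_{\bg_t}\|\nabla_{\bg_t}\cL\|^2$, and whose quadratic part splits into a ``signal'' piece $\sum_i\tfrac{\tilde L_{\bv^{(i)}}}{2}\eta_{\bw_t^{(i)}}^{2}\|\nabla_{\bv_t^{(i)}}\cL\|^2+(\text{$\bg$ analogue})$ and a ``variance'' piece $\sum_i\tfrac{\tilde L_{\bv^{(i)}}}{2}\eta_{\bw_t^{(i)}}^{2}\,\mE\|\hat{\bxi}_t^{(i)}\|^2+\tfrac{\tilde L_{\bg}}{2}\eta_{\bg_t}^{2}\,\mE\|\bzeta_t\|^2$. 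With $\eta_{\bw_t^{(i)}}=1/(\tilde L_{\bv^{(i)}}\sqrt{T})$ and $\eta_{\bg_t}=1/(\tilde L_{\bg}\sqrt{T})$, the signal piece is only a $1/(2\sqrt T)$-fraction of the linear term (hence absorbed, at the cost of a factor $2$ in the final constants), and the variance piece is $\le\tfrac{\sigma^2}{2T}\sum_i(1/\tilde L_{\bg}+1/\tilde L_{\bv^{(i)}})$ by \eqref{eq:bounded variance}. Rearranging, summing over $t=0,\dots,T-1$, using $\cL(\bW_T,\bg_T)\ge\cL(\bW^{*},\bg^{*})$ and the uniform lower bound $\eta_{\bw_t^{(i)}},\eta_{\bg_t}\ge 1/(\tilde L\sqrt T)$, and finally bounding the time-average from below by $\min_{0\le t<T}$ yields \eqref{eq:rates of PSI-SGD}; setting $\sigma=0$ (exact gradients) specializes the same computation to Theorem~\ref{thm:PSI-GD}.

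\textbf{Main obstacle.} The routine telescoping (Steps 3--4) is not where the difficulty lies; the delicate points are in Steps 1--2. First, the curvature bounds \eqref{eq:lipschitz on manifold} are posited on the normalized parameters, whereas the descent lemma is evaluated along the ambient points $\hat{\bv}_{t+1}^{(i)}$, which leave the product of spheres; one must argue (via scale-invariance of $\cL$ and compactness of the product of spheres, so the bounds hold in a tube around it) that this Taylor remainder is still controlled by $\tilde L_{\bv^{(i)}}$. Second, and relatedly, one must reconcile the variance assumption \eqref{eq:bounded variance} with the trajectory: the effective noise $\hat{\bxi}_t^{(i)}=\|\bw_t^{(i)}\|\,\bxi_t^{(i)}$ carries a weight-norm factor, so one needs Lemma~\ref{lem:increasing} (the norms never collapse, and under the decaying learning rates they also stay bounded above) to keep $\mE\|\hat{\bxi}_t^{(i)}\|^2$ uniformly $O(\sigma^2)$. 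Once these two normalization issues are settled, everything reduces to the standard block-coordinate nonconvex-SGD analysis.
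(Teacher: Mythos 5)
Your proposal follows essentially the same route as the paper: the paper likewise converts the PSI-SGD step into a descent inequality written entirely in the normalized coordinates (using $\|\bw^{(i)}\|\|\bw^{(j)}\|\nabla^{2}_{\bw^{(i)}\bw^{(j)}}\cL(\bW,\bg)=\nabla^{2}_{\bv^{(i)}\bv^{(j)}}\cL(\bV,\bg)$ and its analogues) and then repeats the Euclidean SGD argument of Theorem~\ref{pro:SGD learning rate} with $L^{\bv\bv},L^{\bv\bg}$ in place of $L^{\bw\bw},L^{\bw\bg}$. Your handling of the two ``obstacles'' is in fact more careful than the paper's, which applies the bound \eqref{eq:bounded variance} to the $\bv$-coordinate estimator $\cG_{\bv_t^{(i)}}=\|\bw_t^{(i)}\|\cG_{\bw_t^{(i)}}$ without comment on the $\|\bw_t^{(i)}\|^{2}$ factor you correctly identify.
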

	The theorems are respectively proved in Appendix \ref{app:proof of PSI-GD} and \ref{app:proof of PSI-SGD}. The convergence rates of PSI-GD and PSI-SGD are respectively $O(1/T)$ and $O(1/\sqrt{T})$. The convergence rates of PSI-SGDM can be also obtained by combining the technique in \citep{yang2016unified} and our proof of Theorem \ref{thm:PSI-SGD}.    
	\par
	One may note the convergence rate is computed on the gradient w.r.t. normalized parameters $\bV_{t}$. But \eqref{eq:gradient scale} implies 
	\begin{equation}\small\label{eq:smoothness}
	\begin{aligned}
		 \left\|\nabla_{\bW_{t},\bg_{t}} \cL(\bW_{t}, \bg_{t})\right\|^{2} & = \sum\limits_{i=1}^{m}\frac{1}{\|\bw^{(i)}_{t}\|^{2}} \left\|\nabla_{\bv_{t}^{(i)},\bg_{t}} \cL(\bV_{t}, \bg_{t})\right\|^{2}\\
		& + \left\|\nabla_{\bg_{t}} \cL(\bV_{t}, \bg_{t})\right\|^{2} \\
		& \leq \left\|\nabla_{\bV_{t},\bg_{t}} \cL(\bV_{t}, \bg_{t})\right\|^{2},
	\end{aligned}
	\end{equation}
	if $\|\bw^{(i)}_{t}\|^{2} \geq 1$ for $1\leq i\leq m$. Thus we can get the corresponded convergence rates of PSI-GD and PSI-SGD. They match the optimal results with well tuned learning rates in the Euclidean space \citep{ghadimi2013stochastic}. The following lemma shows that the $\|\bw_{t}^{(i)}\|^{2}$ keeps increasing across training for gradient-based algorithms.
	\begin{lemma}[Lemma 2.4 in \citep{arora2018theoretical}]\label{lem:increasing}
		For any PSI weight $\bw^{(i)}$, $\bw^{(i)}$ and $\nabla_{\bw^{(i)}}\ell\left(f(\bx_{k},\bW), y_{k}\right)$ are perpendicular for any $(\bx_{k}, y_{k})$. On the other hand 
		\begin{equation}\small\label{eq:increasing}
		\begin{aligned}
		& \left\|\bw^{(i)} + \eta_{\bw^{(i)}}\nabla_{\bw^{(i)}}\ell\left(f(\bx_{k},\bW), y_{k}\right)\right\|^{2}  = \left\|\bw^{(i)}\right\|^{2} \\
		& + \eta_{\bw^{(i)}}^{2}\left\|\nabla_{\bw^{(i)}}\ell\left(f(\bx_{k},\bW), y_{k}\right)\right\|^{2}. 
		\end{aligned}
		\end{equation}
	\end{lemma}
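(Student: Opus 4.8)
The plan is to establish the orthogonality claim first, since the displayed identity \eqref{eq:increasing} then follows from a one-line expansion of a square. The orthogonality is an instance of Euler's identity for functions homogeneous of degree zero: I would exploit that the loss $\ell(f(\bx_{k},\bW),y_{k})$, viewed as a function of the single weight vector $\bw^{(i)}$ with all other parameters held fixed, is invariant under the rescaling $\bw^{(i)}\mapsto c\,\bw^{(i)}$ for every $c>0$. This is exactly the PSI property $\cL(\bW,\bg)=\cL(T_{\ba}(\bW),\bg)$ specialized to $\ba=(1,\dots,1,c,1,\dots,1)$; at the per-sample level $f(\bx_{k},\bW)=f(\bx_{k},T_{\ba}(\bW))$ holds because each $BN$ layer is unchanged when its pre-activation weight is positively rescaled, the numerator $\btheta^{T}(\bx-\mE(\bx))$ and the denominator $\sqrt{\btheta^{T}\Var_{\bx}\btheta}$ both scaling by $c$.

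Concretely, I would set $g(c):=\ell(f(\bx_{k},\bW^{(i\to c\bw^{(i)})}),y_{k})$, where $\bW^{(i\to \bu)}$ denotes $\bW$ with its $i$-th column replaced by $\bu$. By the invariance just described $g(c)=g(1)$ for all $c>0$, hence $g'(1)=0$. The chain rule gives $g'(c)=\langle \nabla_{\bw^{(i)}}\ell(f(\bx_{k},\bW^{(i\to c\bw^{(i)})}),y_{k}),\,\bw^{(i)}\rangle$, so evaluating at $c=1$ yields $\langle \nabla_{\bw^{(i)}}\ell(f(\bx_{k},\bW),y_{k}),\,\bw^{(i)}\rangle=0$, which is the perpendicularity statement. (Equivalently, one may write the loss as $\tilde{\ell}(\bw^{(i)}/\|\bw^{(i)}\|)$ and use that the Jacobian of $\bw\mapsto \bw/\|\bw\|$ equals $\tfrac{1}{\|\bw\|}(\bI-\bv\bv^{T})$, which annihilates $\bw$; this is the same computation in disguise.)

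Given orthogonality, \eqref{eq:increasing} is immediate: expand
\[
\left\|\bw^{(i)}+\eta_{\bw^{(i)}}\nabla_{\bw^{(i)}}\ell\right\|^{2}=\|\bw^{(i)}\|^{2}+2\eta_{\bw^{(i)}}\langle \bw^{(i)},\nabla_{\bw^{(i)}}\ell\rangle+\eta_{\bw^{(i)}}^{2}\|\nabla_{\bw^{(i)}}\ell\|^{2},
\]
and drop the cross term, which vanishes by the first part.

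The only real obstacle is a matter of rigor rather than of ideas: one must check that the scalar invariance $g(c)=g(1)$ holds for a single neuron's weight (not merely a simultaneous rescaling of all neurons), and that $\ell$ and $f$ are differentiable in $\bw^{(i)}$ so that the chain rule applies. Both are harmless under the structure \eqref{eq1} and the standing smoothness assumptions, since $BN$ is smooth in its weight away from the degenerate locus $\btheta^{T}\Var_{\bx}\btheta=0$, and the conclusion of Lemma \ref{lem:increasing} itself guarantees the iterates never reach zero norm. There is no heavy computation involved.
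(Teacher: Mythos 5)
Your proposal is correct, and it is essentially the standard argument: the paper does not prove this lemma itself but cites it from \citet{arora2018theoretical}, whose proof is exactly the Euler-identity computation you give (per-neuron scale invariance $\Rightarrow$ $\langle \bw^{(i)}, \nabla_{\bw^{(i)}}\ell\rangle = 0$ by differentiating $g(c)=g(1)$ at $c=1$, then expanding the square and dropping the vanishing cross term). Your check that the invariance holds neuron-by-neuron (via $\ba=(1,\dots,1,c,1,\dots,1)$) and per-sample rather than only for the averaged loss is the right point to be careful about, and it goes through for the BN architecture \eqref{eq1} exactly as you describe.
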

	Thus, $\|\bw^{(i)}_{t}\|^{2}\geq 1$ holds, if $\|\bw^{(i)}_{0}\|^{2}\geq 1$. Since the network is usually initialized by $\bw^{(i)}\sim \cN(0,2/\sqrt{d_{i}})$ \citep{he2015delving}, $\|\bw^{(i)}_{0}\|^{2}\approx 2$ with high probability. Hence, we can conclude $\left\|\nabla_{\bW_{t},\bg_{t}} \cL(\bW_{t}, \bg_{t})\right\|^{2} \leq \left\|\nabla_{\bV_{t},\bg_{t}} \cL(\bV_{t}, \bg_{t})\right\|^{2}$. We will show the inequality and the increasing  $\|\bw_{t}^{(i)}\|^{2}$ explain the acceleration of the algorithms on PSI manifold.    
	\subsection{Why Optimization on the PSI manifold Accelerates Training}\label{sec: smoother function}
	\begin{figure*}[t!]\centering
		\begin{subfigure}[b]{0.24\textwidth}
			\includegraphics[width=\textwidth]{./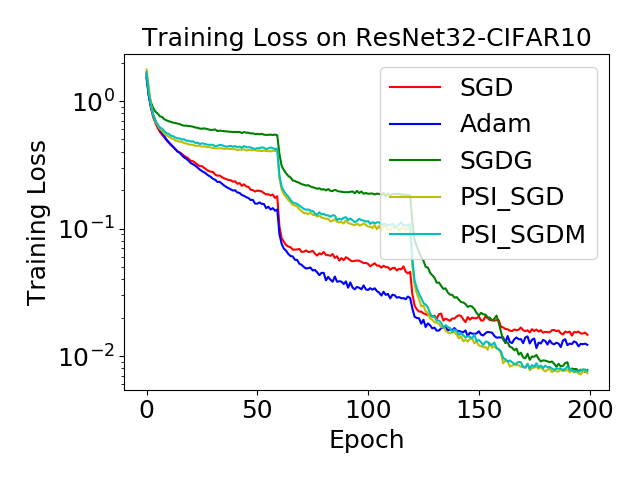}
		\end{subfigure}
		\begin{subfigure}[b]{0.24\textwidth}
			\includegraphics[width=\textwidth]{./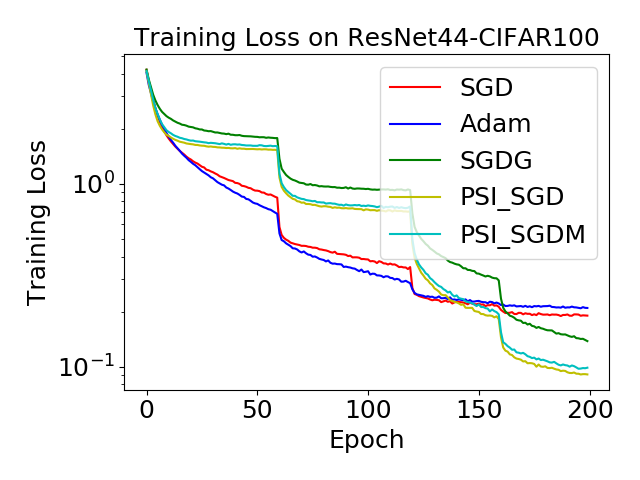}
		\end{subfigure}
		\begin{subfigure}[b]{0.24\textwidth}
			\includegraphics[width=\textwidth]{./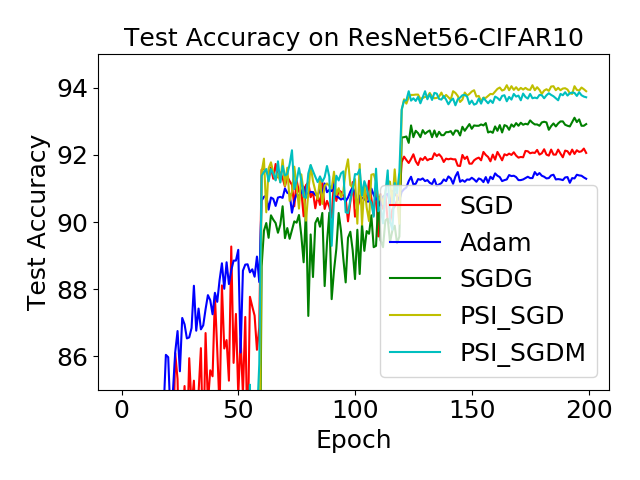}           
		\end{subfigure}    
		\begin{subfigure}[b]{0.24\textwidth}
			\includegraphics[width=\textwidth]{./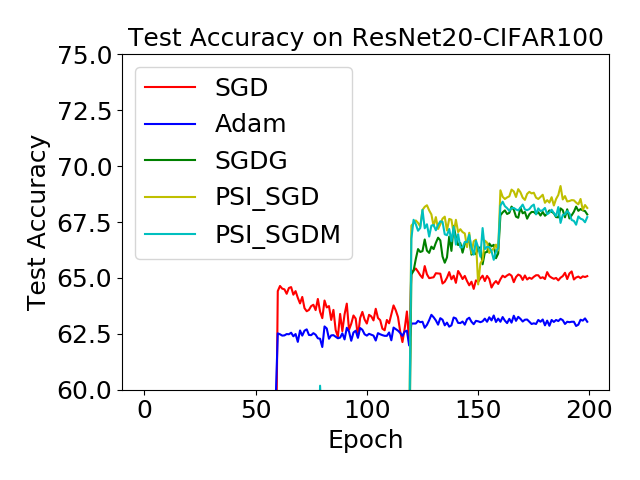}    
		\end{subfigure}
		\caption{Results of ResNet trained over various Algorithms on \texttt{CIFAR10} and \texttt{CIFAR100}.}
		\label{fig:result}
	\end{figure*}
	\begin{table*}[t!]
		\centering
		\caption{Performance of ResNet on various datasets. The ResNet for \texttt{CIFAR} and \texttt{ImageNet} are respectively the structure for the corresponded dataset, more details about the structure refers to \citep{he2016deep}. The results are the average over five (resp. three) independent runs for CIFAR (resp. ImageNet) with standard deviation reported.}
		\scalebox{1}{
			{
				\begin{tabular}{|c|ccccc|}
					\hline
					Dataset & \multicolumn{5}{|c|}{\texttt{CIFAR-10}} \\
					Algorithm & SGD           & Adam    & SGDG    & PSI-SGD        & PSI-SGDM    \\
					\hline
					ResNet20  & 91.14(±0.10)  &	89.90(±0.17)	&  91.38(±0.21)	  &   92.25(±0.03)	&  \textbf{92.41}(±0.12)     \\
					ResNet32  & 91.32(±0.23)  &	90.40(±0.32)	&  92.48(±0.35)	  &   \textbf{93.56}(±0.08)	&   93.30(±0.14)      \\
					ResNet44  & 91.95(±0.14)  &	91.02(±0.18)	&  92.99(±0.12)	  &   \textbf{93.90}(±0.16)	&   93.39(±0.11)      \\
					ResNet56  & 92.19(±0.21)  &	91.49(±0.12)	&  93.11(±0.16)	  &   \textbf{94.08}(±0.15)	&   93.90(±0.16)      \\
					\hline
					Dataset &  \multicolumn{5}{|c|}{\texttt{CIFAR-100}} \\
					\hline 
					ResNet20  & 65.53(±0.51)	&  63.35(±0.06)	&  68.19(±0.24)	  &   \textbf{69.11}(±0.07)	&  68.41(±0.21) \\
					ResNet32  & 67.41(±0.21)	&  64.43(±0.11)	&  70.13(±0.28)	  &   \textbf{70.81}(±0.08)	&  70.14(±0.13) \\ 
					ResNet44  & 67.72(±0.31)	&  65.03(±0.38)	&  71.32(±0.12)	  &    71.94(±0.21)	        &  \textbf{72.03}(±0.14)  \\
					ResNet56  & 68.02(±0.26)	&  65.77(±0.13)	&  71.46(±0.06)	  &   \textbf{72.88}(±0.24)	&  72.40(±0.16) \\ 
					\hline
					Dataset   &  \multicolumn{5}{|c|}{\texttt{ImageNet}} \\
					\hline
					ResNet18  & 67.72(±0.12)	&  68.16(±0.13)	&  68.97(±0.05)	  &   \textbf{70.38}(±0.06)	&  69.42(±0.06)   \\  
					ResNet34  & 71.30(±0.08)	&  70.68(±0.05)	&  72.40(±0.12)	  &   \textbf{73.31}(±0.08)	&  72.88(±0.12)   \\
					ResNet50  & 73.46(±0.14)	&  74.00(±0.16)	&  74.67(±0.14)	  &   74.67(±0.12)	        &  \textbf{75.11}(±0.05)  \\
					\hline
		\end{tabular}}}
		\label{tab:resnet_without_regularizer}
	\end{table*}
	Next, we show PSI-GD and PSI-SGD accelerate training compared with the algorithms on Euclidean space. 
	\par
	Generally, PSI parameters move towards a smoother region (smaller gradient Lipschitz constant), since the gradient Lipschitz constant is in inverse ratio to the weight scale which keeps increasing according to Lemma \ref{lem:increasing}. As smoother region allows a larger learning rate, gradually increasing the learning rate accelerates training. Fortunately, PSI-GD and PSI-SGD happen to be vanilla GD and SGD with adaptively increased learning rates according to \eqref{eq:GD on manifold}, \eqref{eq:SGD on manifold} and \eqref{eq:increasing}.
	\par
	To begin with, we verify the convergence rates of vanilla GD and SGD. We assume a global smoothness for $\cL(\bW,\bg)$,
	\begin{equation}\small\label{eq:lipschitz}
	\begin{aligned}
	& \left\|\nabla^{2}_{\bw^{(i)}\bw^{(j)}}\cL(\bW,\bg)\right\|_{2}  \leq L_{ij}^{\bw\bw}, \\
	& \left\|\nabla^{2}_{\bw^{(i)}\bg}\cL(\bW,\bg)\right\|_{2} \leq L_{i}^{\bw\bg},\\
	& \left\|\nabla^{2}_{\bg}\cL(\bW,\bg)\right\|_{2} \leq L^{\bg\bg}.
	\end{aligned}
	\end{equation}
	The assumption is stronger than \eqref{eq:lipschitz on manifold} since \eqref{eq:lipschitz on manifold} only involves the normalized parameters, thus  $L_{ij}^{\bv\bv}\leq L_{ij}^{\bw\bw}; L_{i}^{\bv\bg}\leq L_{i}^{\bw\bg}$. We consider the update rule of GD
	\begin{equation}\label{eq:GD}
	\small
	\begin{aligned}
	\bw_{t + 1}^{(i)} & = \bw_{t}^{(i)} - \eta_{\bw_{t}^{(i)}}\nabla_{\bw_{t}^{(i)}}\cL(\bW_{t},\bg_{t});\\
	\bg_{t + 1} & = \bg_{t} - \eta_{\bg_{t}}\nabla_{\bg_{t}}\cL(\bW_{t},\bg_{t}),
	\end{aligned}
	\end{equation}
	and SGD
	\begin{equation}\label{eq:SGD}
	\small
	\begin{aligned}
	\bw_{t + 1}^{(i)} &= \bw_{t}^{(i)} - \eta_{\bw_{t}^{(i)}}\cG_{\bw_{t}^{(i)}}(\bW_{t},\bg_{t});\\
	\bg_{t + 1} &= \bg_{t} - \eta_{\bg_{t}}\cG_{\bg_{t}}(\bW_{t},\bg_{t}),
	\end{aligned}
	\end{equation}
	where $\cG_{\bw_{t}^{(i)}}(\bW_{t},\bg_{t})$ and $\cG_{\bg_{t}}(\bW_{t},\bg_{t})$ are respectively unbiased estimations of $\nabla_{\bw_{t}^{(i)}}\cL(\bW_{t},\bg_{t})$ and $\nabla_{\bg_{t}}\cL(\bW_{t},\bg_{t})$ defined in \eqref{eq:PSI-SGD gradient estimator}. For SGD, we impose the bounded variance assumption \eqref{eq:bounded variance}. The following two Theorems characterize the convergence rate of GD and SGD.
	\begin{theorem}\label{pro:GD learning rate}
		Let $\{\bW_{t},\bg_{t}\}$ updated by GD \eqref{eq:GD}, by choosing $\eta_{\bw_{t}}^{(i)} = 1/\tilde{L}_{\bw^{(i)}}$ and $\eta_{\bg_{t}} = 1/\tilde{L}_{\bg}$,
		\begin{equation}\label{eq:rates of GD}
		\small
		\min_{0\leq t< T}\left\|\nabla_{\bW_{t},\bg_{t}} \cL(\bW_{t}, \bg_{t})\right\|^{2} \leq \frac{2\tilde{L}(\cL(\bW_{0},\bg_{0}) - \cL(\bW^{*},\bg^{*}))}{T}.
		\end{equation}
		Here $\tilde{L} = \max\{\tilde{L}_{\bw^{(1)}}, \cdots, \tilde{L}_{\bw^{(m)}}, \tilde{L}_{\bg}\}$, where $\tilde{L}_{\bw^{(i)}} = L_{i}^{\bw\bg} + \sum_{j=1}^{m}L_{ij}^{\bw\bw}; \tilde{L}_{\bg} = mL^{\bg\bg}+ \sum_{i=1}^{m}L_{i}^{\bw\bg}$.  
	\end{theorem}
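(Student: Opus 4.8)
This is a block-coordinate version of the textbook ``descent lemma, then telescope'' argument for non-convex gradient descent; the only twist is that \eqref{eq:GD} uses a separate step size $\eta_{\bw^{(i)}_t}$ on each weight block $\bw^{(i)}$ and $\eta_{\bg_t}$ on $\bg$, so the scalar smoothness inequality must be upgraded to a block-structured one whose effective per-block constants turn out to be exactly the $\tilde{L}_{\bw^{(i)}}$ and $\tilde{L}_{\bg}$ of the statement. The plan is: (i) prove a one-step decrease of $\cL$ along \eqref{eq:GD}; (ii) insert the prescribed step sizes so that the decrease becomes a fixed multiple of the squared full gradient; (iii) telescope over $t=0,\dots,T-1$ and pass to the minimum.

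For step (i), write $\boldsymbol{d}_t$ for the GD displacement, with blocks $\boldsymbol{d}^{(i)}_{\bw}=-\eta_{\bw^{(i)}_t}\nabla_{\bw^{(i)}}\cL(\bW_t,\bg_t)$ and $\boldsymbol{d}_{\bg}=-\eta_{\bg_t}\nabla_{\bg}\cL(\bW_t,\bg_t)$. A second-order Taylor expansion of $\cL$ between $(\bW_t,\bg_t)$ and $(\bW_{t+1},\bg_{t+1})$ gives $\cL(\bW_{t+1},\bg_{t+1}) = \cL(\bW_t,\bg_t) + \langle\nabla\cL(\bW_t,\bg_t),\boldsymbol{d}_t\rangle + \tfrac12 Q_t$, where $Q_t$ is the Hessian quadratic form in $\boldsymbol{d}_t$ evaluated at an intermediate point of the segment (legitimate since \eqref{eq:lipschitz} makes $\cL$ of class $C^2$ with bounded Hessian). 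Splitting $Q_t$ into its $\bw^{(i)}$--$\bw^{(j)}$, $\bw^{(i)}$--$\bg$ and $\bg$--$\bg$ contributions, bounding each by the matching spectral-norm constant in \eqref{eq:lipschitz}, and applying $\|a\|\,\|b\|\le\tfrac12(\|a\|^2+\|b\|^2)$ to every off-diagonal contribution (using $L^{\bw\bw}_{ij}=L^{\bw\bw}_{ji}$ and regarding $\bg$ as its $m$ neuron-wise sub-blocks) collapses it to $Q_t\le\sum_{i=1}^m\tilde{L}_{\bw^{(i)}}\|\boldsymbol{d}^{(i)}_{\bw}\|^2+\tilde{L}_{\bg}\|\boldsymbol{d}_{\bg}\|^2$, the count of off-diagonal terms charged to each block producing exactly the $\sum_j L^{\bw\bw}_{ij}$ in $\tilde{L}_{\bw^{(i)}}$ and the $mL^{\bg\bg}+\sum_i L^{\bw\bg}_i$ in $\tilde{L}_{\bg}$. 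Since also $\langle\nabla\cL(\bW_t,\bg_t),\boldsymbol{d}_t\rangle=-\sum_i\eta_{\bw^{(i)}_t}\|\nabla_{\bw^{(i)}}\cL\|^2-\eta_{\bg_t}\|\nabla_{\bg}\cL\|^2$, we obtain the one-step bound $\cL(\bW_{t+1},\bg_{t+1})\le\cL(\bW_t,\bg_t)-\sum_i(\eta_{\bw^{(i)}_t}-\tfrac12\tilde{L}_{\bw^{(i)}}\eta_{\bw^{(i)}_t}^2)\|\nabla_{\bw^{(i)}}\cL\|^2-(\eta_{\bg_t}-\tfrac12\tilde{L}_{\bg}\eta_{\bg_t}^2)\|\nabla_{\bg}\cL\|^2$.

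For step (ii), inserting $\eta_{\bw^{(i)}_t}=1/\tilde{L}_{\bw^{(i)}}$ and $\eta_{\bg_t}=1/\tilde{L}_{\bg}$ makes the brackets equal $1/(2\tilde{L}_{\bw^{(i)}})$ and $1/(2\tilde{L}_{\bg})$, each at least $1/(2\tilde{L})$ by the definition of $\tilde{L}$, so $\cL(\bW_{t+1},\bg_{t+1})\le\cL(\bW_t,\bg_t)-\tfrac{1}{2\tilde{L}}\|\nabla_{\bW_t,\bg_t}\cL(\bW_t,\bg_t)\|^2$. For step (iii), summing over $t=0,\dots,T-1$ telescopes the loss on the right, and $\cL(\bW_T,\bg_T)\ge\cL(\bW^*,\bg^*)$ gives $\tfrac{1}{2\tilde{L}}\sum_{t=0}^{T-1}\|\nabla_{\bW_t,\bg_t}\cL(\bW_t,\bg_t)\|^2\le\cL(\bW_0,\bg_0)-\cL(\bW^*,\bg^*)$; lower-bounding the sum by $\tfrac{T}{2\tilde{L}}\min_{0\le t<T}\|\nabla_{\bW_t,\bg_t}\cL\|^2$ and rearranging is \eqref{eq:rates of GD}.

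The main obstacle is the bookkeeping in step (i): one must verify that splitting every off-diagonal Hessian block by the arithmetic--geometric mean inequality, together with the symmetry of the $L^{\bw\bw}_{ij}$ and the sub-block decomposition of $\bg$, reassembles \emph{exactly} into $\tilde{L}_{\bw^{(i)}}=L^{\bw\bg}_i+\sum_j L^{\bw\bw}_{ij}$ and $\tilde{L}_{\bg}=mL^{\bg\bg}+\sum_i L^{\bw\bg}_i$ --- i.e.\ that the correct number of cross terms lands on each block with no slack lost. Once this block descent inequality is established, steps (ii)--(iii) are entirely routine.
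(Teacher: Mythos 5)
Your proof is correct and follows essentially the same route as the paper's: a block-wise descent lemma obtained from a Taylor expansion controlled by the Hessian-block Lipschitz constants in \eqref{eq:lipschitz}, Young's inequality on the cross terms to assemble exactly $\tilde{L}_{\bw^{(i)}}=L^{\bw\bg}_i+\sum_j L^{\bw\bw}_{ij}$ and $\tilde{L}_{\bg}=mL^{\bg\bg}+\sum_i L^{\bw\bg}_i$, then plugging in $\eta=1/\tilde{L}_{(\cdot)}$ and telescoping. The only cosmetic differences are that the paper uses the integral form of the Taylor remainder rather than the Lagrange form, and your step (ii) (each bracket is at least $1/(2\tilde{L})$) is stated a bit more cleanly than the paper's $\max/\wedge$ bookkeeping.
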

	\begin{theorem}\label{pro:SGD learning rate}
		Let $\{\bW_{t},\bg_{t}\}$ updated by SGD \eqref{eq:SGD}, then
		\begin{equation}\label{eq:rates of SGD}
		\small
		\begin{aligned}
		\min_{0\leq t< T}\!\mE\!\left[\left\|\nabla_{\bW_{t},\bg_{t}} \cL(\bW_{t}, \bg_{t})\right\|^{2}\right]  \!\!& \leq \!\! \frac{2\tilde{L}(\cL(\bW_{0}, \bg_{0}) \!\!-\!\! \cL(\bW^{*}, \bg^{*}))}{\sqrt{T}} \\
		& + \frac{\sigma^{2}\tilde{L}}{2\sqrt{T}}\sum\limits_{i=1}^{m}\left(\frac{1}{\tilde{L}_{\bg}} + \frac{1}{\tilde{L}_{\bw^{(i)}}}\right),
		\end{aligned}
		\end{equation}
		by choosing $\eta_{\bw_{t}^{(i)}}=\frac{1}{\tilde{L}_{\bw^{(i)}}\sqrt{T}}$ and $\eta_{\bg_{t}}=\frac{1}{\tilde{L}_{\bg}\sqrt{T}}$, where $\tilde{L}, \tilde{L}_{\bw^{(i)}}$ and $\tilde{L}_{\bg}$ are defined in Theorem \ref{pro:GD learning rate}.
	\end{theorem}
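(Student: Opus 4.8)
The plan is to derive a single \emph{block-wise descent inequality} from the Hessian bounds \eqref{eq:lipschitz}, and then run the standard non-convex stochastic-gradient argument on top of it; Theorem~\ref{pro:GD learning rate} is then recovered as the noiseless special case.

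\textbf{Step 1 (block descent lemma).} For a displacement $\Delta=(\Delta^{(1)},\dots,\Delta^{(m)},\Delta^{\bg})$ from $(\bW,\bg)$, write $\Delta_{\bW}=(\Delta^{(1)},\dots,\Delta^{(m)})$. Taylor's theorem with integral remainder gives
\[
\cL(\bW+\Delta_{\bW},\bg+\Delta^{\bg}) = \cL(\bW,\bg) + \langle\nabla\cL(\bW,\bg),\Delta\rangle + \tfrac12\,\Delta^{\top}\nabla^{2}\cL(\xi)\,\Delta
\]
for some $\xi$ on the segment. I would expand the quadratic form over the $\bw^{(i)}\bw^{(j)}$, $\bw^{(i)}\bg$, and $\bg\bg$ blocks, bound each block's spectral norm by the matching constant in \eqref{eq:lipschitz}, and convert every cross term into squared block-norms via $\|a\|\|b\|\le\tfrac12(\|a\|^{2}+\|b\|^{2})$; collecting coefficients (bounding the $\bg\bg$ contribution crudely by $mL^{\bg\bg}\|\Delta^{\bg}\|^{2}$ so as to land on the stated form) yields
\[
\cL(\bW+\Delta_{\bW},\bg+\Delta^{\bg}) \le \cL(\bW,\bg) + \langle\nabla\cL,\Delta\rangle + \sum_{i=1}^{m}\frac{\tilde L_{\bw^{(i)}}}{2}\|\Delta^{(i)}\|^{2} + \frac{\tilde L_{\bg}}{2}\|\Delta^{\bg}\|^{2},
\]
with $\tilde L_{\bw^{(i)}}$ and $\tilde L_{\bg}$ exactly as defined in Theorem~\ref{pro:GD learning rate}. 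Getting precisely these aggregated constants is the only non-routine part and where I expect the main effort.

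\textbf{Step 2 (one-step expected decrease).} Apply the lemma with the SGD displacement from \eqref{eq:SGD}, i.e.\ $\Delta^{(i)}=-\eta_{\bw_{t}^{(i)}}\cG_{\bw_{t}^{(i)}}(\bW_{t},\bg_{t})$ and $\Delta^{\bg}=-\eta_{\bg_{t}}\cG_{\bg_{t}}(\bW_{t},\bg_{t})$, and take the conditional expectation given the history $\cF_{t}$. Unbiasedness of the estimators turns the linear term into $-\sum_{i}\eta_{\bw_{t}^{(i)}}\|\nabla_{\bw_{t}^{(i)}}\cL\|^{2}-\eta_{\bg_{t}}\|\nabla_{\bg_{t}}\cL\|^{2}$, while the bias-variance split $\mE[\|\cG\|^{2}\mid\cF_{t}]\le\|\nabla\cL\|^{2}+\sigma^{2}$ from \eqref{eq:bounded variance} bounds the quadratic term. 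With $\eta_{\bw_{t}^{(i)}}=1/(\tilde L_{\bw^{(i)}}\sqrt T)$, $\eta_{\bg_{t}}=1/(\tilde L_{\bg}\sqrt T)$ and $T\ge1$ one has $\tfrac{\tilde L_{\bw^{(i)}}}{2}\eta_{\bw_{t}^{(i)}}^{2}\le\tfrac12\eta_{\bw_{t}^{(i)}}$, so each $\|\nabla\cL\|^{2}$ keeps a coefficient at most $-\tfrac12\eta$, leaving a residual variance term of order $\tfrac{\sigma^{2}}{2T}\big(\sum_{i}\tilde L_{\bw^{(i)}}^{-1}+\tilde L_{\bg}^{-1}\big)$.

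\textbf{Step 3 (telescoping).} Summing the one-step bound over $t=0,\dots,T-1$, taking total expectation, and telescoping the $\cL$-values bounds $\sum_{t}\big(\sum_{i}\tfrac{\eta_{\bw_{t}^{(i)}}}{2}\mE\|\nabla_{\bw_{t}^{(i)}}\cL\|^{2}+\tfrac{\eta_{\bg_{t}}}{2}\mE\|\nabla_{\bg_{t}}\cL\|^{2}\big)$ by $\cL(\bW_{0},\bg_{0})-\cL(\bW^{*},\bg^{*})$ plus the accumulated variance term. Lower-bounding every step size by $1/(\tilde L\sqrt T)$, with $\tilde L=\max\{\tilde L_{\bw^{(1)}},\dots,\tilde L_{\bw^{(m)}},\tilde L_{\bg}\}$, makes the left-hand side dominate $\tfrac{1}{2\tilde L\sqrt T}\sum_{t}\mE\|\nabla_{\bW_{t},\bg_{t}}\cL\|^{2}$; dividing by $T$ and using $\min_{t}(\cdot)\le\tfrac1T\sum_{t}(\cdot)$ produces exactly the $O(1/\sqrt T)$ bound \eqref{eq:rates of SGD}. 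Dropping the variance term and the $\sqrt T$ rescaling of the step sizes recovers Theorem~\ref{pro:GD learning rate} for GD from the same Steps~1 and~3.
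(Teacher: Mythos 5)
Your proposal is correct and follows essentially the same route as the paper: a block-wise descent inequality from the Hessian bounds in \eqref{eq:lipschitz} with Young's inequality aggregating the cross terms into $\tilde L_{\bw^{(i)}}$ and $\tilde L_{\bg}$, the bias--variance split $\mE[\|\cG\|^{2}]\le\|\nabla\cL\|^{2}+\sigma^{2}$ from unbiasedness and \eqref{eq:bounded variance}, and telescoping with the stated step sizes. Your explicit observation that $\tfrac{\tilde L_{\bw^{(i)}}}{2}\eta_{\bw_t^{(i)}}^{2}\le\tfrac12\eta_{\bw_t^{(i)}}$ for $T\ge1$ is in fact slightly more careful than the paper's corresponding step and cleanly accounts for the factor $2$ in the final bound.
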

	The two convergence rates of $\|\nabla_{\bW,\bg}\cL(\bW,\bg)\|$ match the optimal results with well tuned learning rates \citep{ghadimi2013stochastic}. Since $L_{ij}^{\bv\bv}\leq L_{ij}^{\bw\bw}; L_{i}^{\bv\bg}\leq L_{i}^{\bw\bg}$, the right hand side in \eqref{eq:rates of GD} is smaller than the one in \eqref{eq:rates of PSI-GD}. To see the results for SGD, under \eqref{eq:lipschitz}, the upper bound in \eqref{eq:rates of PSI-SGD} can be replaced by the smaller one of the upper bounds in \eqref{eq:rates of PSI-SGD} and \eqref{eq:rates of SGD}. Thus the convergence rate of PSI-SGD is also sharper than the one of SGD. 
	\par
	On the other hand, \eqref{eq:smoothness} and the discussion in the above section implies $\|\nabla_{\bW_{t},\bg_{t}} \cL(\bW_{t}, \bg_{t})\|^{2} \leq \|\nabla_{\bV_{t},\bg_{t}} \cL(\bV_{t}, \bg_{t})\|^{2}$. Thus, from \eqref{eq:smoothness}, one can verify the proposed algorithms accelerate training in a factor $\|\bw_{t}^{(i)}\|^{2}\geq 1$ compared with the versions on Euclidean space. In addition, $\|\bw_{t}^{(i)}\|^{2}$ keeps increasing to a bounded constant across training, thus the acceleration is more significant after a number of iterations. A detailed discussion to $\|\bw_{t}^{(i)}\|^{2}$ is in Appendix \ref{app:upper bound}.   
	\par
	The proofs of the two theorems are delegated to Appendix \ref{app:proof of pro1} and \ref{app:proof of pro2}. In the proof, we see that scaling the learning rate with $1/\tilde{L}_{\bw^{(i)}}$ and $1/\tilde{L}_{\bg}$ are respectively the optimal schedule of $\eta_{\bw_{t}^{(i)}}$ and $\eta_{\bg_{t}}$. Since smaller $\tilde{L}_{\bw^{(i)}}$ and $\tilde{L}_{\bg}$ corresponds with a smoother loss landscapes, it explains that $\cL(\bW, \bg)$ allows a larger learning rate in a smoother region to get the optimal convergence rate. 
	\par
	Now we are ready to illustrate the reason of PSI-GD and PSI-SGD's acceleration. Due to $\cL(\bW,\bg) = \cL(T_{\ba}(\bW),\bg)$,
	\begin{equation}\small\label{eq:hessian}
	\begin{aligned}
	\nabla^{2}_{\bw^{(i)}\bw^{(j)}}\cL(\bW,\bg) & = a_{i}a_{j}\nabla^{2}_{a_{i}\bw^{(i)}a_{j}\bw^{(j)}}\cL(T_{\ba}(\bW), \bg); \\
	\nabla^{2}_{\bw^{(i)}\bg}\cL(\bW,\bg) & = a_{i}\nabla^{2}_{a_{i}\bw^{(i)}\bg}\cL(T_{\ba}(\bW), \bg).
	\end{aligned}
	\end{equation}
	The fact shows that the smoothness of PSI parameters increasing with its scale. As larger $\ba$ in the right hand side of \eqref{eq:hessian} results in smaller $\|\nabla^{2}_{a_{i}\bw^{(i)}a_{j}\bw^{(j)}}\cL(T_{\ba}(\bW), \bg)\|_{2}$ and $\|\nabla^{2}_{a_{i}\bw^{(i)}\bg}\cL(T_{\ba}(\bW), \bg)\|_{2}$. Due to Lemma \ref{lem:increasing}, the increasing $\|\bw_{t}^{(i)}\|$ implies the loss landscape of PSI parameters becomes smoother across training. Thus, gradually increasing the learning rate to update PSI parameters can accelerate training. The proposed PSI-GD and PSI-SGD are proven to be vanilla GD and SGD with the increasing learning rate that is proportional to $\|\bw_{t}^{(i)}\|^{2}$, which interprets the acceleration.
	\par
	Finally, \eqref{eq:hessian} shows that our algorithms are optimal in the manner of leveraging the smoothness to accelerate training. The intuition is that the proposed methods have a more accurate estimation of local smoothness across training. We point out that $L_{ij}^{\bv\bv}\leq L_{ij}^{\bw\bw}; L_{i}^{\bv\bg}\leq L_{i}^{\bw\bg}$ in \eqref{eq:lipschitz} also gives a sharper convergence rate of PSI-GD and PSI-SGD.
	\section{Experiments}\label{sec5}
	\subsection{Improved Convergence Rate}
	\begin{figure}[t!]\centering
		\includegraphics[width=0.3\textwidth]{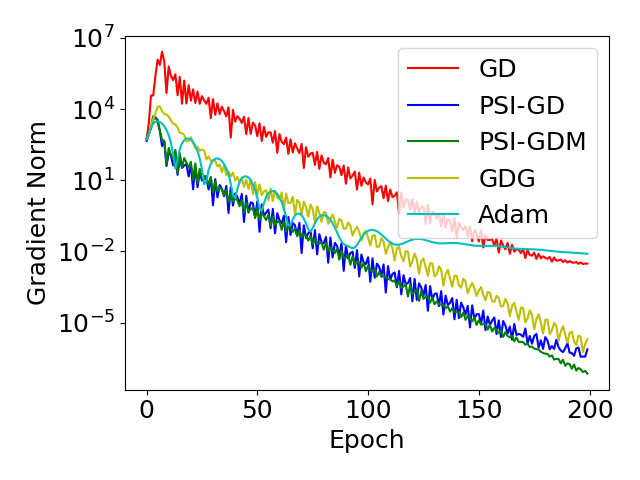}
		\caption{Convergence rates of gradient norm.}
		\label{fig:gradient_norm}
	\end{figure}
	We use a toy example to verify that our algorithms have improved convergence rate compared with other methods. 
	\paragraph{Data.} We sample 1000 training samples $\{\bx_{i}\}$ from 10-dimensional normal distribution with its label $y_{i} = \mu^{\top}\bx_{i} + \epsilon_{i}$ for $\epsilon_{i}\sim\cN(0, 1)$. 
	\paragraph{Setup.} We use a toy neural network $f(\bx) = \bw_{1}^{\top}\phi(W_{2}\bx)$ with $\bw_{1}\in\bbR^{100}$, $W_{2}\in\bbR^{100\times10}$, and the activation function $\phi(\cdot)$ is ELU function. The training loss is MES loss. We compare the convergence rates in terms of gradient norm of the proposed algorithms PSI-SGD and PSI-SGDM with three baselines algorithms. The benchmark optimization algorithm SGD with momentum (abbrev. SGD); adaptive learning rate algorithm Adam \citep{kingma2014adam}; manifold based algorithm SGDG \citep{cho2017riemannian} which is applied to the network with BN.
	\paragraph{Main Results.} The convergence rates of gradient norm are in Figure \ref{fig:gradient_norm}. As can be seen, the three manifold based methods PSI-SGD, PSI-SGDM, SGDG exhibit improved convergence rates compared other methods. However, we still observe that PSI-SGD and PSI-SGDM are slightly better than SGDG. More important is that in contrast to SGDG, our algorithms have proved convergence rate. The toy example verifies that our algorithms have improved convergence rates compared with other methods.  
	\subsection{Experiments on Real-world Dataset}
	In this section, we empirically study the proposed algorithms PSI-SGD and PSI-SGDM on real-world dataset. 
	\paragraph{Data.} We consider the image classification task on three benchmark datasets.\texttt{CIFAR10} and \texttt{CIFAR100} \citep{krizhevsky2009learning} are respectively colorful images with 50K training samples and 10K validation samples from 10 and 100 categories. \texttt{ImageNet} \citep{deng2009imagenet} are colorful images with 1M+ training samples from 1K object classes.
	\paragraph{Setup.} The model is a unified structure ResNet with various structures. As in the above section, we compare our methods with SGD, Adam, and SGDG. For PSI-SGD or PSI-SGDM, the PSI parameters are updated by the two algorithms, while the other parameters are updated by SGD.  
	\par
	We do not use the regularizer to the PSI weights e.g., $l_{2}$-regularizer in the loss function since it breaks the PSI property of PSI weights. More experiments conducted with regularizer are in Appendix \ref{app:experiment}. 
	\par
	For \texttt{CIFAR} we conduct 200 epochs of training for each algorithm. The learning rate starts from 0.1 and decays by a factor 0.2 at epochs 60, 120, and 160. For \texttt{ImageNet}, the training is conducted for 100 epochs, and the learning rate starts from 0.1 and decays by a factor 0.2 at epochs 30, 60, and 90. For the hyperparameters of baseline methods, we grid search the learning rates in the range of \{0.01, 0.1, 1.0\} and \{0.0001, 0.001, 0.01\} respectively for SGD and Adam, and the hyperparameters of SGDG follow the one of \citep{cho2017riemannian}. The other hyperparameters of all these methods are summarized in Appendix \ref{app:hyperparameters}.
	\par
	It worth noting that the PSI weights updated by PSI-SGD and PSI-SGDM may overflow after quite a number of iterations. Hence, we normalize the PSI weights $\bw^{(i)}$ if $\|\bw^{(i)}\|$ is larger than $10000$ during training. This operation will not change the output of the model due to the PSI property.
	\paragraph{Main Results.} 
	We report the test accuracy of each method. The results refers to Table  \ref{tab:resnet_without_regularizer} and Figure \ref{fig:result}. We have the following observations and conclusions from the experimental results. 
	\begin{enumerate}
		\item In Table \ref{tab:resnet_without_regularizer}, the model trained by manifold based methods i.e., SGDG, PSI-SGD and, PSI-SGDM generalize better in most cases. this is due to the manifold based algorithms can obviate quite a lot of local minima with poor generalization, since they have a unified optimization path for the parameters equivalent to each other.  
		\par
		Besides that, the proposed PSI-SGD and PSI-SGDM are significantly better than SGDG, and the performances of PSI-SGD and PSI-SGDM are comparable. We speculate this is due to the sharper convergence rate of PSI-SGD and PSI-SGDM allow them to find local minima in less number of iterations.  
		\item Figure \ref{fig:result} show that the PSI-SGD and PSI-SGDM converge faster than the three baselines after a certain number of iterations i.e., after 120 epochs of update. This justifies our theoretical results in Section \ref{sec:convergence results}, since the acceleration is linearly with  $\|\bw_{t}^{(i)}\|^{2}$ which is large after a while of training (Lemma \ref{lem:increasing}).
		\par
		We present the results of training error instead of gradient norm here because evaluating the gradient norm requires implementing back propagations on all training data which brings a great extra computational effort, especially for large scale dataset e.g., ImageNet.  
	\end{enumerate}
	We have one remark about the efficiency of our methods. They are simple and efficient compared with SGDG, since SGDG involves the operators like trigonometric functions in the update rule. For example, the elapsed time of ResNet56 with respect to one-epoch training of \texttt{CIFAR10} under SGD, Adam, SGDG, PSI-SGD, and PSI-SGDM are respectively 16.98, 18.1, 26.6, 18.2, and 18.5 seconds. All the experiments are conducted on a server with single NVIDIA V100 GPU.
	\section{Conclusion}
	In this paper, we fix the optimization ambiguity brought by the PSI property of the network with BN. Our scenario is built upon optimization on manifold by constructing a specific manifold and optimizing the PSI weights of the network with BN in it. The developed gradient-based algorithms on PSI manifold are shown to have a well-defined optimization path with respect to positively equivalent rescaling. 
	\par
	We also give the convergence rates of the proposed methods. Besides that, we theoretically justify that PSI-GD and PSI-SGD accelerate training by a clever schedule of adaptive learning rate. 
	\par
	Finally, we conduct various experiments to show that the proposed methods have better performance in the generalization and efficiency compared with the other three baselines.   
\bibliography{reference} 
\bibliographystyle{plainnat}
\newpage
\appendix
\onecolumn
\section{Related Definitions of Optimization on Manifold}\label{app: definitions of manifold}
We give a brief introduction to optimization on manifold in this section. A summary of this topic can be referred to \citep{absil2009optimization}. This paper focuses on the matrix manifold which is a subspace of Euclidean space $\mathbb{R}^{n}$. We start with the definition of matrix manifold.
\begin{definition}[Matrix manifold]
	A Matrix Manifold $\mathcal{M}$ is a subset of Euclidean space $\mathbb{R}^{n}=\mathbb{R}^{m\times p}$ with any $\bx\in\mathcal{M}$ has a neighborhood $U_{\bx}$ of $\bx$ such that $U_{\bx}$ is homeomorphic to a Euclidean space. In addition, for a given equivalent relation $\sim$ of elements in the manifold $\overline{\mathcal{M}}$, we use $[\bx]$ to denote set $\{\by\in\overline{\mathcal{M}}:\by\sim \bx\}$. Then $\mathcal{M} = \overline{\mathcal{M}}/\sim=\{[\bx]:\bx\in\overline{\mathcal{M}}\}$ is a quotient manifold.\footnote{Please notice we use $\mathcal{M}$ to represent quotient manifold while $\overline{\mathcal{M}}$ is the original manifold equipped with equivalent relationship.}
	\label{def:manifold}
\end{definition}
Various spaces can be categorized into matrix manifold, e.g. Euclidean space, unit ball. For a manifold $\mathcal{M}$ with $\bx\in\mathcal{M}$, there is a tangent space $\cT_{\bx}\mathcal{M}$ which gives the tangential direction of $\bx\in\mathcal{M}$. For matrix manifold, tangent space $\cT_{\bx}\mathcal{M}$ must be a linear space with finite dimension \citep{absil2009optimization}, it specifies the moving direction of a point in the manifold. Since manifold can be non-linear space, the movement of a point $\bx\in\mathcal{M}$ along a specific direction $\bXi\in \cT_{\bx}\mathcal{M}$ is decided by the retraction function $R_{\bx}(\cdot):\cT_{\bx}\mathcal{M}:\rightarrow\mathcal{M}$, which is defined as follows.
\begin{definition}[Retraction]
	A retraction on a manifold $\mathcal{M}$ is a smooth mapping $R_{\bx}(\cdot):\cT_{\bx}\mathcal{M}:\rightarrow\mathcal{M}$ satisfies:
	\par
	(1) $R_{\bx}(0_{\bx})=\bx$, where $0_{\bx}$ denotes the zero element of $\cT_{\bx}\mathcal{M}$.
	\par
	(2) $R_{\bx}(\cdot)$ satisfies
	\[\lim_{t\to\infty}\frac{R_{\bx}(0_{\bx}+t\bXi)-R_{\bx}(0_{\bx})}{t}=\bXi\]
	for any $\bXi\in \cT_{\mathcal{M}}$.
\end{definition}
For instance, the retraction function of $\mathbb{R}^{n}$ can be defined as $R_{\bx}(\bXi)=\bx+\bXi$; and for unit ball it can be written as $R_{\bx}(\bXi)=\frac{\bx+\bXi}{\|\bx+\bXi\|}$ where $\bXi\in \cT_{\bx}\mathcal{M}$.
\par
To obtain the gradient based optimization algorithms on manifold, we should give the definition of Riemannian gradient in manifold. The following Riemannian metric derives the definition of corresponded gradient. 
\begin{definition}[Riemannian metric]
	For matrix manifold, Reimannian metric is an inner product $\langle\cdot,\cdot \rangle_{\bx}$ of tangent space $\cT_{\bx}\mathcal{M}$ for any $\bx\in\mathcal{M}$. In addition, given a set of coordinate basis vector $\{E_{i}\}$ of $\cT_{\bx}\mathcal{M}$, then for any $\bXi_{\bx}$ and $\bzeta_{\bx}$, we have
	\begin{equation}\small
	\langle\bXi_{\bx},\bzeta_{\bx} \rangle_{\bx} = \hat{\bXi}_{\bx}^{T}G_{\bx}\hat{\bzeta}_{\bx}.
	\label{eq:Riemannian metric}
	\end{equation}
	Here $\hat{\bXi}_{\bx}, \hat{\bzeta}_{\bx}$ are respective coordinates of $\bXi_{\bx}$ and $\bzeta_{\bx}$ under the coordinate basis vector $\{E_{i}\}$, and $G_{\bx}=(g)_{ij}$ where $g_{ij}=\langle E_{i},E_{j} \rangle_{\bx}$. 
\end{definition}
Based on the Riemannian metric, the Riemannian gradient is defined as follows.
\begin{definition}[Riemannian gradient]\label{def:riemannian gradient}
	The Riemannian gradient of function $f(\cdot)$ defined in an open set containing manifold $\mathcal{M}$ at $\bx\in\mathcal{M}$ is the tangent vector ${\rm grad}f(\bx)$ belongs to $\cT_{\bx}\mathcal{M}$ satisfies that
	\[\lim_{t\to\infty}\frac{f(\bx+t\bXi)-f(\bx)}{t}=\langle\bXi_{\bx},{\rm grad}f(\bx) \rangle_{\bx}\]
	for any $\bXi\in \cT_{\bx}\mathcal{M}$. 
\end{definition}
In fact, if function $f(\cdot)$ has gradient at point $\bx$, then we see
\begin{equation}\small
\begin{aligned}
\langle\bXi_{\bx},\nabla f(\bx)\rangle &= \lim_{t\to\infty}\frac{f(\bx+t\bXi_{\bx})-f(\bx)}{t} = \langle\bXi_{\bx},{\rm grad}f(\bx) \rangle_{\bx} \\
&= \bXi_{\bx}^{T}G_{\bx}{\rm grad}f(\bx).
\label{eq:riemannian gradient}
\end{aligned}
\end{equation}
Hence, we have ${\rm grad}f(\bx)=G_{\bx}^{-1}\nabla f(\bx)$ by the arbitrariness of $\bXi_{\bx}$. 
\par
\cite{boumal2019global} proves that for function $f(\cdot)$ defined on manifold $\mathcal{M}$ with Lipschitz Riemannnian gradient\footnote{Which means there exist a positive number $L_{g}$ satisfy $\|{\rm grad}f(\bx)-{\rm grad}f(\by)\|\leq L_{g}\|\bx-\by\|$ for any $\bx,\by\in\mathcal{M}$}, gradient descent in manifold
\begin{equation}\small
\bx_{t+1}=R_{\bx_{t}}(-{\rm grad}f(\bx_{t}))
\label{eq:gradient descent}
\end{equation}
can convergence to a local minimum of $f(\cdot)$. 
\section{Proofs in Section \ref{sec:PSI manifold}}
\subsection{Proof of Proposition \ref{pro:riemannian metric}}\label{app:proof of riemannian metric}
\begin{proof}
	It is easily to verify that the function $\langle\cdot,\cdot\rangle_{\bW}$ is an inner product for every $\bW\in\mathcal{M}$. We notice that $\bXi\in \cT_{\bW}\mathcal{M}$ is a representation $\bar{\bXi}_{\overline{\bW}}\in \cT_{\overline{\bW}}\overline{\mathcal{M}}$ for some $\overline{\bW}\in \pi^{-1}(\bW)$. Similar to Example 3.5.4 in \citep{absil2009optimization}, we see different representation of $\bXi\in \cT_{\bW}\mathcal{M}$ satisfies that 
	\begin{equation}\small
	\bar{\bXi}_{T_{\ba}(\overline{\bW})} = T_{\ba}(\bar{\bXi}_{\overline{\bW}}).
	\end{equation}
	Then, combining equation \eqref{eq:riemannian metric}, we can conclude that the Riemannian metric is well-defined to the choice of horizontal lift $\bar{\bXi}_{\overline{\bW}}$. Hence, we get the conclusion.
\end{proof}
\subsection{Proof of Proposition \ref{pro:retr}}\label{app:proof of reatraction}
We need a Lemma to give the proof.
\begin{lemma}[Proposition 4.1.3 in \citep{absil2009optimization}]
	Let $\mathcal{M}=\overline{\mathcal{M}}/\sim$ be a quotient manifold and $\bar{R}$ be a retraction on $\overline{\mathcal{M}}$ such that for all $x\in\mathcal{M}$ any two $\bar{\bx}_{a},\bar{\bx}_{b}\in\pi^{-1}(\bx)$ and $\bxi\in \cT_{\bx}\mathcal{M}$,
	\begin{equation}\small
	\pi(\bar{R}_{\bar{\bx}_{a}}(\bar{\bxi}_{\bar{\bx}_{a}})) = \pi(\bar{R}_{\bar{\bx}_{b}}(\bar{\bxi}_{\bar{\bx}_{b}})).
	\end{equation} 
	Here $\pi(\cdot)$ is a canonical projection from $\overline{\mathcal{M}}$ to $\mathcal{M}$ satisfies that $\pi(\bx)=\pi(\by)$ if and only if $\bx\sim \by$. $\bar{\bxi}_{\bar{\bx}_{a}}$, $\bar{\bxi}_{\bar{\bx}_{b}}$ are respectively tangent vector in $\cT_{\bar{\bx}_{a}}\overline{\mathcal{M}}$ and $\cT_{\bar{\bx}_{b}}\overline{\mathcal{M}}$. $\bar{R}_{\cdot}(\cdot)$ is a retraction in $\overline{\mathcal{M}}$. Then 
	\begin{equation}\small
	R_{\bx}(\bxi) = \pi(\bar{R}_{\bar{\bx}}(\bar{\bxi}_{\bar{\bx}}))
	\end{equation}
	defines a retraction on quotient manifold $\mathcal{M}$.
	\label{lem:retraction}
\end{lemma}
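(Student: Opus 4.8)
This statement is Proposition 4.1.3 of \citep{absil2009optimization}, so the plan is simply to verify that the descended map $R_{\bx}(\bxi) = \pi(\bar{R}_{\bar{\bx}}(\bar{\bxi}_{\bar{\bx}}))$ meets the three requirements in the definition of a retraction: it is a well-defined, smooth map from the tangent bundle $\cT\mathcal{M}$ to $\mathcal{M}$; it satisfies the centering axiom $R_{\bx}(0_{\bx})=\bx$; and it satisfies the local rigidity axiom $\tfrac{d}{dt}R_{\bx}(t\bxi)\big|_{t=0}=\bxi$. The structural fact I would use throughout is that, for a fixed $\bar{\bx}\in\pi^{-1}(\bx)$, the horizontal lift $\bxi\mapsto\bar{\bxi}_{\bar{\bx}}$ is the unique horizontal tangent vector in $\cT_{\bar{\bx}}\overline{\mathcal{M}}$ with $D\pi(\bar{\bx})[\bar{\bxi}_{\bar{\bx}}]=\bxi$; in particular it is linear in $\bxi$, it sends $0_{\bx}$ to $0_{\bar{\bx}}$, and it depends smoothly on $(\bx,\bxi)$ once a smooth local section of the submersion $\pi$ is fixed.

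First I would dispatch well-definedness: the hypothesis of the lemma says exactly that $\pi(\bar{R}_{\bar{\bx}}(\bar{\bxi}_{\bar{\bx}}))$ does not depend on which representative $\bar{\bx}\in\pi^{-1}(\bx)$ is chosen, so $R_{\bx}(\bxi)$ is unambiguously defined on $\cT\mathcal{M}$. For smoothness I would argue locally: around any point of $\mathcal{M}$ pick a smooth section $s$ of $\pi$, so that $R$ factors as $\bxi\mapsto\bar{\bxi}_{s(\bx)}$ (smooth, being the horizontal lift along a smooth section), followed by $\bar{R}$ (smooth by assumption) and then $\pi$ (smooth, a submersion); alternatively one may invoke the standard fact that a fiberwise-constant smooth map through a submersion descends to a smooth map on the quotient. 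The centering axiom is then immediate: since the lift of the zero vector is the zero vector, $R_{\bx}(0_{\bx})=\pi(\bar{R}_{\bar{\bx}}(0_{\bar{\bx}}))=\pi(\bar{\bx})=\bx$, using that $\bar{R}$ is a retraction on $\overline{\mathcal{M}}$.

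For the rigidity axiom, fix $\bxi\in\cT_{\bx}\mathcal{M}$; by linearity of the lift, $t\bxi$ lifts to $t\bar{\bxi}_{\bar{\bx}}$, so the curve $\bar{c}(t)=\bar{R}_{\bar{\bx}}(t\bar{\bxi}_{\bar{\bx}})$ satisfies $\bar{c}(0)=\bar{\bx}$ and $\dot{\bar{c}}(0)=\bar{\bxi}_{\bar{\bx}}$ because $\bar{R}$ obeys its own rigidity axiom. Since $R_{\bx}(t\bxi)=\pi(\bar{c}(t))$, the chain rule gives $\tfrac{d}{dt}R_{\bx}(t\bxi)\big|_{t=0}=D\pi(\bar{\bx})[\dot{\bar{c}}(0)]=D\pi(\bar{\bx})[\bar{\bxi}_{\bar{\bx}}]=\bxi$, where the last equality is the defining property of the horizontal lift, and this is precisely axiom (2). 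The only step I consider non-routine is the smoothness claim: confirming that the construction yields a genuinely \emph{smooth} map on the quotient manifold, not merely a well-defined one, which is exactly where the submersion structure of $\pi$ and the smoothness of the horizontal distribution are needed; I would lean on the general descent lemma for fiberwise-constant smooth maps to avoid reproving it, after which the two retraction axioms reduce to the short chain-rule computations above.
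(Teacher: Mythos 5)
Your proof is correct, but there is nothing in the paper to compare it against: the paper does not prove this lemma at all. It is imported verbatim as Proposition 4.1.3 of \citep{absil2009optimization} and used as a black box in the proof of Proposition \ref{pro:retr}. What you have written is essentially the standard textbook argument behind that cited result: well-definedness is exactly the displayed hypothesis on representatives; the centering axiom follows because the horizontal lift of $0_{\bx}$ is $0_{\bar{\bx}}$ and $\bar{R}$ is itself a retraction; and the rigidity axiom is the chain rule through $\pi$ combined with $D\pi(\bar{\bx})[\bar{\bxi}_{\bar{\bx}}]=\bxi$, the defining property of the horizontal lift. You are also right that smoothness is the only step requiring genuine care, and your treatment via a local smooth section of the submersion (or the descent lemma for fiberwise-constant smooth maps) is the standard way to handle it. One incidental point: the paper's own Definition of a retraction writes the rigidity limit as $t\to\infty$, which is evidently a typo for $t\to 0$; your proof implicitly works with the correct version, which is the right call.
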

With this Lemma, we now provide the proof of Proposition \ref{pro:retr}. 
\begin{proof}[Proof of Proposition \ref{pro:retr}]
	According to the lemma \ref{lem:retraction}, we only need to verify the conditions. First we notice that $\bXi\in \cT_{W}\mathcal{M}$ is a representation $\bar{\bXi}_{\overline{\bW}}\in \cT_{\overline{\bW}}\overline{\mathcal{M}}$ for some $\overline{\bW}\in \pi^{-1}(\bW)$. Similar to the Example 3.5.4 in \citep{absil2009optimization}, different representation of $\bXi\in \cT_{\bW}\mathcal{M}$ satisfies
	\begin{equation}\small
	\bar{\bXi}_{T_{\ba}(\overline{\bW})} = T_{\ba}(\bar{\bXi}_{\overline{\bW}}).
	\end{equation}
	Hence we can choose
	\begin{equation}\small
	\bar{R}_{\overline{\bW}}(\bar{\bXi}_{\overline{\bW}}) = \overline{\bW} + \bar{\bXi}_{\overline{\bW}}.
	\end{equation}
	Thus $\bar{R}_{\overline{\bW}}(\cdot)$ is a retraction in $\overline{\mathcal{M}}$. Then, for any canonical projection $\pi(\cdot)$ there exist $\pi(\bar{R}_{\overline{\bW}}(\bar{\bXi}_{\overline{\bW}})) = \pi(\bar{R}_{\cT_{\ba}(\overline{\bW})}(\bar{\bXi}_{\cT_{\ba}(\overline{\bW})}))$. Hence, we get the conclusion.
\end{proof}
\section{Proofs in Section \ref{sec:optimization on PSI manifold}}
\subsection{Proof of Theorem \ref{thm:thm2}}\label{app:proof of unified}
\begin{proof}
	Since
	\begin{equation}
	\small
	\nabla_{\bw^{(i)}}\cL(\bW,\bg) = a_{i}\nabla_{a_{i}\bw^{(i)}}\cL(T_{\ba}(\bW), \bg)
	\end{equation}
	due to the PSI property. Then, combining the update rule of PSI-SGDM \eqref{eq:riemannian gradient descent with momentum}, we have  
	\begin{equation}\small
	\begin{aligned}
	\bU_{0}\xrightarrow{\text{update}} \bU_{1}; \qquad T_{\ba}(\bU_{0})& \xrightarrow{\text{update}} T_{\ba}(\bU_{1})\\
	\bW_{0}\xrightarrow{\text{update}} \bW_{1}; \qquad T_{\ba}(\bW_{0})& = \hat{\bW}_{0} \xrightarrow{\text{update}} \hat{\bW}_{1} = T_{\ba}(\bW_{1}).
	\end{aligned}
	\end{equation} 
	By induction, we get the conclusion.
\end{proof}
\section{Proofs in Section \ref{sec:convergence results}}
\subsection{Proof of Theorem \ref{pro:GD learning rate}}\label{app:proof of pro1}
\begin{proof}
	By Taylor's expansion, and condition of Lipschitz gradient
	\begin{equation}
	\small
	\begin{aligned}
	\cL(\bW_{t + 1},\bg_{t + 1}) &- \cL(\bW_{t}, \bg_{t}) = \int_{0}^{1} \left\langle \nabla_{\bW_{t},\bg_{t}}\cL(\bW_{t} + s\Delta\bW_{t}, \bg_{t} + s\Delta\bg_{t}), (\Delta\bW_{t}, \Delta\bg_{t})\right\rangle ds\\
	& = \left\langle \nabla_{\bW,\bg}\cL(\bW_{t}, \bg_{t}), (\bW_{t}, \bg_{t})\right\rangle\\
	& + \int_{0}^{1} \left\langle \nabla_{\bW_{t},\bg_{t}}\cL(\bW_{t} + s\Delta\bW_{t}, \bg_{t} + s\Delta\bg_{t}) - \nabla_{\bW,\bg}\cL(\bW_{t}, \bg_{t}), (\Delta\bW_{t}, \Delta\bg_{t})\right\rangle ds\\
	& \leq \left\langle \nabla_{\bW,\bg}\cL(\bW_{t}, \bg_{t}), (\bW_{t}, \bg_{t})\right\rangle + \frac{1}{2}\sum\limits_{i,j=1}^{m} L_{ij}^{\bw\bw}\left\|\Delta\bw_{t}^{(i)}\right\|\left\|\Delta\bw_{t}^{(j)}\right\|\\
	& + \sum\limits_{i=1}^{m}L_{i}^{\bw\bg} \left\|\Delta\bw_{t}^{(i)}\right\|\left\|\Delta\bg_{t}\right\| + \frac{mL^{\bg\bg}}{2}\|\left\|\Delta\bg_{t}\right\|
	\end{aligned}
	\end{equation}
	Thus, 
	\begin{equation}
	\small
	\begin{aligned}
	\cL(\bW_{t + 1},\bg_{t + 1}) - \cL(\bW_{t}, \bg_{t}) &\leq -\sum\limits_{i=1}^{m}\left(\eta_{\bw_{t}^{(i)}}\left\|\nabla_{\bw_{t}^{(i)}}\cL(\bW_{t}, \bg_{t})\right\|^{2} + \eta_{\bg_{t}}\left\|\nabla_{\bg_{t}}\cL(\bW_{t},\bg_{t})\right\|^{2}\right) \\
	& + \frac{1}{2}\sum\limits_{i,j=1}^{m}L_{ij}^{\bw\bw} \left\|\eta_{\bw_{t}^{(i)}}\nabla_{\bw_{t}^{(i)}}\cL(\bW_{t}, \bg_{t})\right\|\left\|\eta_{\bw_{t}^{(j)}}\nabla_{\bw_{t}^{(j)}}\cL(\bW_{t}, \bg_{t})\right\|\\
	& + \sum\limits_{i=1}^{m}L_{i}^{\bw\bg}\left\|\eta_{\bw_{t}^{(i)}}\nabla_{\bw_{t}^{(i)}}\cL(\bW_{t}, \bg_{t})\right\|\left\|\eta_{\bg_{t}} \nabla_{\bg_{t}}\cL(\bW_{t},\bg_{t})\right\|\\
	& + \frac{mL^{\bg\bg}}{2}\left\|\eta_{\bg_{t}}\nabla_{\bg_{t}}\cL(\bW_{t},\bg_{t})\right\|^{2}.
	\end{aligned}
	\end{equation}
	Then, by Young's inequality, 
	\begin{equation}
	\small
	\begin{aligned}
	\left\|\eta_{\bw_{t}^{(i)}}\nabla_{\bw_{t}^{(i)}}\cL(\bW_{t}, \bg_{t})\right\|\left\|\eta_{\bw_{t}^{(j)}}\nabla_{\bw_{t}^{(j)}}\cL(\bW_{t}, \bg_{t})\right\| & \leq \frac{1}{2}\left(\left\|\eta_{\bw_{t}^{(i)}}\nabla_{\bw_{t}^{(i)}}\cL(\bW_{t}, \bg_{t})\right\|^{2} + \left\|\eta_{\bw_{t}^{(j)}}\nabla_{\bw_{t}^{(j)}}\cL(\bW_{t}, \bg_{t})\right\|^{2}\right) \\
	\left\|\eta_{\bw_{t}^{(i)}}\nabla_{\bw_{t}^{(i)}}\cL(\bW_{t}, \bg_{t})\right\|\left\|\eta_{\bg_{t}}\nabla_{\bg_{t}}\cL(\bW_{t}, \bg_{t})\right\| & \leq \frac{1}{2}\left(\left\|\eta_{\bw_{t}^{(i)}}\nabla_{\bw_{t}^{(i)}}\cL(\bW_{t}, \bg_{t})\right\|^{2} + \left\|\eta_{\bg_{t}}\nabla_{\bg_{t}}\cL(\bW_{t}, \bg_{t})\right\|^{2}\right).
	\end{aligned}
	\end{equation}
	Then, we see
	\begin{equation}\label{eq: gradient bound}
	\small
	\begin{aligned}
	\cL(\bW_{t + 1},\bg_{t + 1}) &- \cL(\bW_{t}, \bg_{t}) \leq \sum\limits_{i=1}^{m}\left(-\eta_{\bw_{t}^{(i)}} + \frac{\tilde{L}_{\bw^{(i)}}}{2}\eta_{\bw_{t}^{(i)}}^{2}\right)\left\|\nabla_{\bw_{t}^{(i)}}\cL(\bW_{t}, \bg_{t})\right\|^{2}\\
	& + \left(-\eta_{\bg_{t}} + \frac{\tilde{L}_{\bg}}{2}\eta_{\bg_{t}}^{2}\right)\|\nabla_{\bg_{t}}\cL(\bW_{t},\bg_{t})\|^{2}\\
	& \leq \left\|\nabla \cL_{\bW_{t},\bg_{t}}(\bW_{t}, \bg_{t})\right\|^{2}\max_{1\leq i\leq m}\left\{\left(-\eta_{\bw_{t}^{(i)}} +  \frac{\tilde{L}_{\bw^{(i)}}}{2}\eta_{\bw_{t}^{(i)}}^{2}\right)\right\}\wedge \left(-\eta_{\bg_{t}} + \frac{\tilde{L}_{\bg}}{2}\eta_{\bg_{t}}^{2}\right)
	\end{aligned}
	\end{equation}
	where
	\begin{equation}
	\small
	\tilde{L}_{\bw^{(i)}} = L_{i}^{\bw\bg} + \sum\limits_{j=1}^{m}L_{ij}^{\bw\bw};\qquad \tilde{L}_{\bg} = mL^{\bg\bg} + \sum\limits_{i=1}^{m}L_{i}^{\bw\bg}.
	\end{equation}
	Summing over equation \eqref{eq: gradient bound} from $0$ to $T-1$, let $0\leq \eta_{\bw_{t}^{(i)}}\leq 2/\tilde{L}_{\bw^{(i)}}$, we get
	\begin{equation}
	\small
	\begin{aligned}
	\cL(\bw_{T}, \bg_{T}) &- \cL(\bW_{0}, \bg_{0})\\
	& \leq \min_{0\leq t< T}\left\|\nabla \cL_{\bW_{t},\bg_{t}}(\bW_{t}, \bg_{t})\right\|^{2}\sum\limits_{t=0}^{T - 1}\max_{1\leq i\leq m}\left\{\left(-\eta_{\bw_{t}^{(i)}} + \frac{\tilde{L}_{\bw^{(i)}}}{2}\eta_{\bw_{t}^{(i)}}^{2}\right)\right\}\wedge \left(-\eta_{\bg_{t}} + \frac{\tilde{L}_{\bg}}{2}\eta_{\bg_{t}}^{2}\right).
	\end{aligned}
	\end{equation}
	We see the best selection of $\eta_{\bw_{t}^{(i)}}$ and $\eta_{\bg_{t}}$ are respectively $1/\tilde{L}_{\bw^{(i)}}$ and $1/\tilde{L}_{\bg}$. By the value of $\eta_{\bw_{t}^{(i)}}$ and $\eta_{\bg_{t}}$, we get 
	\begin{equation}
	\small
	\min_{0\leq t< T}\left\|\nabla \cL_{\bW_{t},\bg_{t}}(\bW_{t}, \bg_{t})\right\|^{2} \max\left\{\frac{1}{2\tilde{L}_{\bw^{(i)}}}, \cdots, \frac{1}{2\tilde{L}_{\bw^{(m)}}}, \frac{1}{2\tilde{L}_{\bg}}\right\} \leq \frac{\cL(\bW_{0}, \bg_{0}) - \cL(\bW^{*}, \bg^{*})}{T}.
	\end{equation}
	Then we conclude the proof. 
\end{proof}
\subsection{Proof of Theorem \ref{pro:SGD learning rate}}\label{app:proof of pro2}
\begin{proof}
	By conditions of Lipschitz gradient, we have 
	\begin{equation}\label{eq:bound difference}
	\small
	\begin{aligned}
	\cL(\bW_{t + 1},\bg_{t + 1}) - \cL(\bW_{t}, \bg_{t}) &\leq -\sum\limits_{i=1}^{m}\left(\eta_{\bw_{t}^{(i)}}\left\|\nabla_{\bw_{t}^{(i)}}\cL(\bW_{t}, \bg_{t})\right\|^{2} + \eta_{\bg_{t}}\left\|\nabla_{\bg_{t}}\cL(\bW_{t},\bg_{t})\right\|^{2}\right) \\
	& + \frac{1}{2}\sum\limits_{i,j=1}^{m}L_{ij}^{\bw\bw} \left\|\eta_{\bw_{t}^{(i)}}\cG_{\bw_{t}^{(i)}}(\bW_{t},\bg_{t})\right\|\left\|\eta_{\bw_{t}^{(j)}}\cG_{\bw_{t}^{(j)}}(\bW_{t},\bg_{t})\right\|\\
	& + \sum\limits_{i=1}^{m}L_{i}^{\bw\bg}\left\|\eta_{\bw_{t}^{(i)}}\cG_{\bw_{t}^{(i)}}(\bW_{t},\bg_{t})\right\|\left\|\eta_{\bg_{t}} \cG_{\bg_{t}}(\bW_{t},\bg_{t})\right\|\\
	& + \frac{mL^{\bg\bg}}{2}\left\|\eta_{\bg_{t}}\cG_{\bg_{t}}(\bW_{t},\bg_{t})\right\|^{2}.
	\end{aligned}
	\end{equation}
	By Young's inequality, taking expectation conditional in $(\bW_{t},\bg_{t})$, we have 
	\begin{equation}\label{eq:bound1}
	\small
	\begin{aligned}
	&\mE\left[\left\|\eta_{\bw_{t}^{(i)}}\cG_{\bw_{t}^{(i)}}(\bW_{t},\bg_{t})\right\|\left\|\eta_{\bw_{t}^{(j)}}\cG_{\bw_{t}^{(j)}}(\bW_{t},\bg_{t})\right\|\right] \\
	& \leq \frac{1}{2}\left(\mE\left[\left\|\eta_{\bw_{t}^{(i)}}\cG_{\bw_{t}^{(i)}}(\bW_{t},\bg_{t})\right\|^{2}\right] + \mE\left[\left\|\eta_{\bw_{t}^{(j)}}\cG_{\bw_{t}^{(j)}}(\bW_{t},\bg_{t})\right\|^{2}\right]\right)\\
	& = \frac{1}{2}\eta_{\bw_{t}^{(i)}}^{2}\mE\left[\left\|\cG_{\bw_{t}^{(i)}}(\bW_{t},\bg_{t}) - \nabla_{\bw_{t}^{(i)}}\cL(\bW_{t}, \bg_{t}) + \nabla_{\bw_{t}^{(i)}}\cL(\bW_{t}, \bg_{t})\right\|^{2}\right] \\
	& + \frac{1}{2}\eta_{\bw_{t}^{(j)}}^{2}\mE\left[\left\|\cG_{\bw_{t}^{(j)}}(\bW_{t},\bg_{t}) - \nabla_{\bw_{t}^{(j)}}\cL(\bW_{t}, \bg_{t}) + \nabla_{\bw_{t}^{(j)}}\cL(\bW_{t}, \bg_{t})\right\|^{2}\right] \\
	& \overset{a}{=} \frac{1}{2}\eta_{\bw_{t}^{(i)}}^{2} \mE\left[\left\|\cG_{\bw_{t}^{(i)}}(\bW_{t},\bg_{t}) - \nabla_{\bw_{t}^{(i)}}\cL(\bW_{t}, \bg_{t})\right\|^{2} + \left\|\nabla_{\bw_{t}^{(i)}}\cL(\bW_{t}, \bg_{t})\right\|^{2}\right] \\
	& + \frac{1}{2}\eta_{\bw_{t}^{(j)}}^{2}\mE\left[\left\|\cG_{\bw_{t}^{(j)}}(\bW_{t},\bg_{t}) - \nabla_{\bw_{t}^{(j)}}\cL(\bW_{t}, \bg_{t})\right\|^{2} + \left\|\nabla_{\bw_{t}^{(j)}}\cL(\bW_{t}, \bg_{t})\right\|^{2}\right] \\
	& \leq \frac{1}{2}\eta_{\bw_{t}^{(i)}}^{2}\left(\sigma^{2} + \mE\left[\left\|\nabla_{\bw_{t}^{(i)}}\cL(\bW_{t}, \bg_{t})\right\|^{2}\right]\right) + \frac{1}{2}\eta_{\bw_{t}^{(j)}}^{2}\left(\sigma^{2} + \mE\left[\left\|\nabla_{\bw_{t}^{(j)}}\cL(\bW_{t}, \bg_{t})\right\|^{2}\right]\right),
	\end{aligned}
	\end{equation}
	where $a$ is due to equation \eqref{eq:bounded variance}. Similarly, we have
	\begin{equation}\label{eq:bound2}
	\small
	\begin{aligned}
	\mE\left[\left\|\eta_{\bw_{t}^{(i)}}\cG_{\bw_{t}^{(i)}}(\bW_{t},\bg_{t})\right\|\left\|\eta_{\bg_{t}} \cG_{\bg_{t}}(\bW_{t},\bg_{t})\right\|\right] &\leq \frac{1}{2}\eta_{\bw_{t}^{(i)}}^{2}\left(\sigma^{2} + \mE\left[\left\|\nabla_{\bw_{t}^{(i)}}\cL(\bW_{t}, \bg_{t})\right\|^{2}\right]\right)\\
	& + \frac{1}{2}\eta_{\bg_{t}}^{2}\left(\sigma^{2} + \mE\left[\left\|\nabla_{\bg_{t}}\cL(\bW_{t}, \bg_{t})\right\|^{2}\right]\right),
	\end{aligned}
	\end{equation}
	and 
	\begin{equation}\label{eq:bound3}
	\small
	\begin{aligned}
	\mE\left[\left\|\eta_{\bg_{t}}\cG_{\bg_{t}}(\bW_{t},\bg_{t})\right\|^{2}\right] &\leq \frac{1}{2}\eta_{\bg_{t}}^{2}\left(\sigma^{2} + \mE\left[\left\|\nabla_{\bg_{t}}\cL(\bW_{t}, \bg_{t})\right\|^{2}\right]\right).
	\end{aligned}
	\end{equation}
	Fixing the randomness of $\bW_{t},\bg_{t}$, plugging equation \eqref{eq:bound1}, \eqref{eq:bound2} and \eqref{eq:bound3} into \eqref{eq:bound difference}, then taking expectation, we get 
	\begin{equation}\small\label{eq:bound gradient}
	\begin{aligned}
	\mE\left[\cL(\bW_{t + 1},\bg_{t + 1})\right] &- \cL(\bW_{t}, \bg_{t}) \leq \sum\limits_{i=1}^{m}\left(-\eta_{\bw_{t}^{(i)}} + \frac{\tilde{L}_{\bw^{(i)}}}{2}\eta_{\bw_{t}^{(i)}}^{2}\right)\mE\left[\left\|\nabla_{\bw_{t}^{(i)}}\cL(\bW_{t}, \bg_{t})\right\|^{2}\right]\\
	& + \left(-\eta_{\bg_{t}} + \frac{\tilde{L}_{\bg}}{2}\eta_{\bg_{t}}^{2}\right)\mE\left[\|\nabla_{\bg_{t}}\cL(\bW_{t},\bg_{t})\|^{2}\right] + \frac{\sigma^{2}}{2}\left(\eta_{\bg_{t}}^{2}\tilde{L}_{\bg} + \sum\limits_{i=1}^{m}\eta_{\bw_{t}^{(i)}}^{2}\tilde{L}_{\bw^{(i)}} \right)\\
	& \overset{a}{\leq} -\frac{1}{\sqrt{T}}\sum\limits_{i=1}^{m}\frac{1}{\tilde{L}_{\bw^{(i)}}}\mE\left[\left\|\nabla_{\bw_{t}^{(i)}}\cL(\bW_{t}, \bg_{t})\right\|^{2}\right] - \frac{1}{\sqrt{T}\tilde{L}_{\bg}}\mE\left[\|\nabla_{\bg_{t}}\cL(\bW_{t},\bg_{t})\|^{2}\right]\\
	& + \frac{\sigma^{2}}{2T}\left(\frac{1}{\tilde{L}_{\bg}} + \sum\limits_{i=1}^{m}\frac{1}{\tilde{L}_{\bw^{(i)}}}\right)\\
	& \leq -\frac{1}{\sqrt{T}\tilde{L}}\mE\left[\left\|\nabla_{\bW_{t}, \bg_{t}}\cL(\bW_{t}, \bg_{t})\right\|^{2}\right] + \frac{\sigma^{2}}{2T}\left(\frac{1}{\tilde{L}_{\bg}} + \sum\limits_{i=1}^{m}\frac{1}{\tilde{L}_{\bw^{(i)}}}\right),
	\end{aligned}
	\end{equation}
	where $a$ is due to the value of $\eta_{\bw_{t}^{(i)}}$ and $\eta_{\bg_{t}}$, $\tilde{L}$ is $\max\{\tilde{L}_{\bw^{(1)}}, \cdots, \tilde{L}_{\bw^{(m)}}, \tilde{L}_{\bg}\}$. Summing over from $0$ to $T-1$ of equation \eqref{eq:bound gradient}, we get
	\begin{equation}
	\small
	\begin{aligned}
	\sum\limits_{t=0}^{T - 1}\frac{1}{\sqrt{T}\tilde{L}}\mE\left[\left\|\nabla_{\bW_{t}, \bg_{t}}\cL(\bW_{t}, \bg_{t})\right\|^{2}\right] \leq \cL(\bW_{0}, \bg_{0}) - \mE\left[\cL(\bW_{T}, \bg_{T})\right] + \frac{\sigma^{2}}{2}\sum\limits_{i=1}^{m}\left(\frac{1}{\tilde{L}_{\bg}} + \frac{1}{\tilde{L}_{\bw^{(i)}}}\right)\sqrt{T}
	\end{aligned}
	\end{equation}
	Thus, we have
	\begin{equation}
	\small
	\begin{aligned}
	\left(\frac{\sqrt{T}}{\tilde{L}}\right)\min_{0\leq t< T}\mE\left[\left\|\nabla_{\bW_{t}, \bg_{t}}\cL(\bW_{t}, \bg_{t})\right\|^{2}\right] \leq \cL(\bW_{0}, \bg_{0}) - \cL(\bW^{*}, \bg^{*})
	+ \frac{\sigma^{2}}{2}\sum\limits_{i=1}^{m}\left(\frac{1}{\tilde{L}_{\bg}} + \frac{1}{\tilde{L}_{\bw^{(i)}}}\right)
	\end{aligned}
	\end{equation}
	Then we get the conclusion. 
\end{proof}   
\subsection{Proof of Theorem \ref{thm:PSI-GD}}\label{app:proof of PSI-GD}
\begin{proof}
	By assumption \eqref{eq:lipschitz on manifold} and PSI property, we have
	\begin{equation}
	\small
	\begin{aligned}
	\cL(\bW_{t + 1},\bg_{t + 1}) - \cL(\bW_{t}, \bg_{t}) &\leq -\sum\limits_{i=1}^{m}\left(\eta_{\bv_{t}^{(i)}}\left\|\nabla_{\bv_{t}^{(i)}}\cL(\bV_{t}, \bg_{t})\right\|^{2} + \eta_{\bg_{t}}\left\|\nabla_{\bg_{t}}\cL(\bV_{t},\bg_{t})\right\|^{2}\right) \\
	& + \frac{1}{2}\sum\limits_{i,j=1}^{m}L_{ij}^{\bv\bv} \left\|\eta_{\bv_{t}^{(i)}}\nabla_{\bv_{t}^{(i)}}\cL(\bV_{t}, \bg_{t})\right\|\left\|\eta_{\bv_{t}^{(j)}}\nabla_{\bv_{t}^{(j)}}\cL(\bV_{t}, \bg_{t})\right\|\\
	& + \sum\limits_{i=1}^{m}L_{i}^{\bv\bg}\left\|\eta_{\bv_{t}^{(i)}}\nabla_{\bv_{t}^{(i)}}\cL(\bV_{t}, \bg_{t})\right\|\left\|\eta_{\bg_{t}} \nabla_{\bg_{t}}\cL(\bV_{t},\bg_{t})\right\|\\
	& + \frac{mL^{\bg\bg}}{2}\left\|\eta_{\bg_{t}}\nabla_{\bg_{t}}\cL(\bV_{t},\bg_{t})\right\|^{2}.
	\end{aligned}
	\end{equation}
	Here we use one fact that
	\begin{equation}\small
	\begin{aligned}
	\|\bw^{(i)}\|\|\bw^{(j)}\|\nabla^{2}_{\bw^{(i)}\bw^{(j)}}\cL(\bW,\bg) &= \nabla^{2}_{\bv^{(i)}\bv^{(j)}}\cL(\bV,\bg);\\
	\|\bw^{(i)}\|\nabla^{2}_{\bw^{(i)}\bg}\cL(\bW,\bg) &= \nabla^{2}_{\bv^{(i)}\bg}\cL(\bV,\bg).
	\end{aligned}
	\end{equation}
	Then follow the proof of Theorem \ref{pro:GD learning rate}, we get the conclusion. 
\end{proof}	
\subsection{Proof of Theorem \ref{thm:PSI-SGD}}\label{app:proof of PSI-SGD}
\begin{proof}
	By noticing that 
	\begin{equation}
	\small
	\begin{aligned}
	\cL(\bW_{t + 1},\bg_{t + 1}) - \cL(\bW_{t}, \bg_{t}) &\leq -\sum\limits_{i=1}^{m}\left(\eta_{\bv_{t}^{(i)}}\left\|\nabla_{\bv_{t}^{(i)}}\cL(\bV_{t}, \bg_{t})\right\|^{2} + \eta_{\bg_{t}}\left\|\nabla_{\bg_{t}}\cL(\bV_{t},\bg_{t})\right\|^{2}\right) \\
	& + \frac{1}{2}\sum\limits_{i,j=1}^{m}L_{ij}^{\bv\bv} \left\|\eta_{\bv_{t}^{(i)}}\cG_{\bv_{t}^{(i)}}(\bV_{t},\bg_{t})\right\|\left\|\eta_{\bv_{t}^{(j)}}\cG_{\bv_{t}^{(j)}}(\bV_{t},\bg_{t})\right\|\\
	& + \sum\limits_{i=1}^{m}L_{i}^{\bv\bg}\left\|\eta_{\bv_{t}^{(i)}}\cG_{\bv_{t}^{(i)}}(\bV_{t},\bg_{t})\right\|\left\|\eta_{\bg_{t}} \cG_{\bg_{t}}(\bV_{t},\bg_{t})\right\|\\
	& + \frac{mL^{\bg\bg}}{2}\left\|\eta_{\bg_{t}}\cG_{\bg_{t}}(\bV_{t},\bg_{t})\right\|^{2}.
	\end{aligned}
	\end{equation}
	Following the proof of Theorem \ref{pro:SGD learning rate}, we get the desired conclusion. 
\end{proof}
\section{Upper Bound to $\|\bw_{t}^{(i)}\|^{2}$}\label{app:upper bound}
In this section we respectively give an upper bound to $\|\bw_{t}^{(i)}\|^{2}$ for vanilla GD and SGD. The conclusions are similar to the results in \citep{arora2018theoretical}. 
\begin{proposition}
	For iterates in Theorem \ref{pro:GD learning rate}, we have 
	\begin{equation}
	\small
	\|\bw_{t}^{(i)}\|^{2} \leq \|\bw_{0}^{(i)}\|^{2} + \frac{2(\cL(\bW_{0}, \bg_{0}) - \cL(\bW^{*}, \bg^{*}))}{L_{\bw^{(i)}}}, 
	\end{equation}
	for any $0\leq t \leq T$ and $1\leq i\leq m$.
\end{proposition}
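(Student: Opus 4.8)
The plan is to combine two ingredients that are already on the table: the exact Pythagorean recursion for the norm of a PSI weight under a gradient step, and the telescoped descent inequality established in the proof of Theorem \ref{pro:GD learning rate}.

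First I would use Lemma \ref{lem:increasing}. Since $\bw^{(i)}$ is orthogonal to every $\nabla_{\bw^{(i)}}\ell(f(\bx_k,\bW),y_k)$, it is orthogonal to the full-batch gradient $\nabla_{\bw^{(i)}}\cL(\bW,\bg)$ by linearity, so the GD update \eqref{eq:GD} gives
$$\|\bw_{t+1}^{(i)}\|^{2} = \|\bw_{t}^{(i)}\|^{2} + \eta_{\bw_{t}^{(i)}}^{2}\,\bigl\|\nabla_{\bw_{t}^{(i)}}\cL(\bW_{t},\bg_{t})\bigr\|^{2}.$$
With $\eta_{\bw_{t}^{(i)}} = 1/\tilde{L}_{\bw^{(i)}}$ (the quantity written $L_{\bw^{(i)}}$ in the statement) and telescoping from $0$ to $t-1$, using that the summands are nonnegative so the sum up to $t-1$ is dominated by the sum up to $T-1$,
$$\|\bw_{t}^{(i)}\|^{2} = \|\bw_{0}^{(i)}\|^{2} + \frac{1}{\tilde{L}_{\bw^{(i)}}^{2}}\sum_{s=0}^{t-1}\bigl\|\nabla_{\bw_{s}^{(i)}}\cL(\bW_{s},\bg_{s})\bigr\|^{2} \le \|\bw_{0}^{(i)}\|^{2} + \frac{1}{\tilde{L}_{\bw^{(i)}}^{2}}\sum_{s=0}^{T-1}\bigl\|\nabla_{\bw_{s}^{(i)}}\cL(\bW_{s},\bg_{s})\bigr\|^{2},$$
so it only remains to bound the summed squared gradient norm of the $i$-th coordinate.

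Second I would reuse inequality \eqref{eq: gradient bound} from the proof of Theorem \ref{pro:GD learning rate}: with the chosen step sizes, the coefficient $-\eta_{\bw_{t}^{(i)}} + \tfrac{\tilde{L}_{\bw^{(i)}}}{2}\eta_{\bw_{t}^{(i)}}^{2}$ equals $-\tfrac{1}{2\tilde{L}_{\bw^{(i)}}}$ and the $\bg$-coefficient equals $-\tfrac{1}{2\tilde{L}_{\bg}}$; dropping the nonnegative contributions of all other coordinates and of $\bg$ leaves, for each $t$,
$$\frac{1}{2\tilde{L}_{\bw^{(i)}}}\bigl\|\nabla_{\bw_{t}^{(i)}}\cL(\bW_{t},\bg_{t})\bigr\|^{2} \le \cL(\bW_{t},\bg_{t}) - \cL(\bW_{t+1},\bg_{t+1}).$$
Summing over $t = 0,\dots,T-1$ and using $\cL(\bW_{T},\bg_{T}) \ge \cL(\bW^{*},\bg^{*})$ gives $\sum_{s=0}^{T-1}\|\nabla_{\bw_{s}^{(i)}}\cL(\bW_{s},\bg_{s})\|^{2} \le 2\tilde{L}_{\bw^{(i)}}\bigl(\cL(\bW_{0},\bg_{0}) - \cL(\bW^{*},\bg^{*})\bigr)$. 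Substituting this into the previous display and cancelling one factor of $\tilde{L}_{\bw^{(i)}}$ yields exactly the claimed bound, uniformly for $0 \le t \le T$.

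There is no genuine obstacle here: the argument is a two-line combination of Lemma \ref{lem:increasing} and the already-proved descent estimate. The only points needing a little care are the per-coordinate bookkeeping (discarding the nonnegative cross terms so the full sum $\sum_s\|\nabla_{\bw_s^{(i)}}\cL\|^2$ is controlled by a single loss gap) and the observation that $\|\bw_t^{(i)}\|^2$ is nondecreasing in $t$, which is what lets the bound at the final step $T$ be promoted to a bound valid at every intermediate step.
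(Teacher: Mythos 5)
Your proposal is correct and is essentially the paper's own proof: both combine the Pythagorean norm recursion from Lemma \ref{lem:increasing} with the per-coordinate descent estimate \eqref{eq: gradient bound} under the step sizes $\eta_{\bw_{t}^{(i)}}=1/\tilde{L}_{\bw^{(i)}}$, telescope over $t$, and lower-bound $\cL(\bW_{T},\bg_{T})$ by $\cL(\bW^{*},\bg^{*})$. The only cosmetic difference is that you dominate the partial gradient-norm sum by the full sum up to $T-1$, while the paper first bounds $\|\bw_{T}^{(i)}\|^{2}$ and then invokes the monotonicity of $\|\bw_{t}^{(i)}\|^{2}$ in $t$.
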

\begin{proof}
	From equation \eqref{eq: gradient bound}, we have 
	\begin{equation}
	\small
	\frac{\|\bw_{t + 1}^{(i)}\|^{2} - \|\bw_{t}^{(i)}\|^{2}}{\eta_{\bw^{(i)}_{t}}^{2}} = \left\|\nabla_{\bw_{t}^{(i)}}\cL(\bW_{t}, \bg_{t})\right\|^{2} \leq 2L_{\bw^{(i)}}(\cL(\bW_{t}, \bg_{t}) - \cL(\bW_{t + 1}, \bg_{t + 1}))
	\end{equation}
	for $1\leq i \leq m$. By the value of $\eta_{\bw^{(i)}}$, we then have
	\begin{equation}
	\small
	L_{\bw^{(i)}}^{2}\left(\|\bw_{t + 1}^{(i)}\|^{2} - \|\bw_{t}^{(i)}\|^{2}\right) \leq  2L_{\bw^{(i)}}(\cL(\bW_{t}, \bg_{t}) - \cL(\bW_{t + 1}, \bg_{t + 1})).
	\end{equation}
	Summing over $t$ from $0$ to $T - 1$, we have
	\begin{equation}
	\small
	\|\bw_{T}^{(i)}\|^{2} - \|\bw_{0}^{(i)}\|^{2} \leq \frac{2(\cL(\bW_{0}, \bg_{0}) - \cL(\bW^{*}, \bg^{*}))}{L_{\bw^{(i)}}}.
	\end{equation} 
	Then we get the conclusion due to the increasing property of $\|\bw_{t}^{(i)}\|^{2}$ with respect to $t$. 
\end{proof}  
\begin{proposition}
	For iterates in Theorem \ref{pro:SGD learning rate}, we have 
	\begin{equation}
	\small
	\mE\left[\|\bw_{t}^{(i)}\|^{2}\right] \leq \|\bw_{0}^{(i)}\|^{2} + O\left(\frac{\sigma^{2}}{\tilde{L}_{\bw^{(i)}}}\right),
	\end{equation}
	for any $0 \leq t \leq T$, and $1\leq i\leq m$.
\end{proposition}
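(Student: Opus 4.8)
The plan is to mimic the proof of the preceding proposition (the GD upper bound), replacing the deterministic descent inequality by the stochastic one already established in the proof of Theorem~\ref{pro:SGD learning rate}, and absorbing the extra noise contribution through the bounded-variance assumption \eqref{eq:bounded variance}.

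\textbf{Step 1: an exact Pythagorean identity for the norm growth.} For the vanilla SGD iterate $\bw_{t+1}^{(i)} = \bw_t^{(i)} - \eta_{\bw_t^{(i)}}\cG_{\bw_t^{(i)}}(\bW_t,\bg_t)$, I would first note that Lemma~\ref{lem:increasing} gives $\bw_t^{(i)}\perp\nabla_{\bw_t^{(i)}}\ell(f(\bx_k,\bW_t),y_k)$ for every sampled index $k$, and since $\cG_{\bw_t^{(i)}}(\bW_t,\bg_t)$ is a convex combination of those per-sample gradients, also $\bw_t^{(i)}\perp\cG_{\bw_t^{(i)}}(\bW_t,\bg_t)$. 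Hence
\[
\|\bw_{t+1}^{(i)}\|^2 = \|\bw_t^{(i)}\|^2 + \eta_{\bw_t^{(i)}}^2\,\bigl\|\cG_{\bw_t^{(i)}}(\bW_t,\bg_t)\bigr\|^2,
\]
which in particular shows $\|\bw_t^{(i)}\|^2$ is nondecreasing in $t$, so it suffices to bound the expectation at $t=T$. Taking total expectation, telescoping, and splitting $\mE\|\cG_{\bw_t^{(i)}}\|^2 = \mE\|\cG_{\bw_t^{(i)}}-\nabla_{\bw_t^{(i)}}\cL\|^2 + \mE\|\nabla_{\bw_t^{(i)}}\cL\|^2 \le \sigma^2 + \mE\|\nabla_{\bw_t^{(i)}}\cL(\bW_t,\bg_t)\|^2$ (unbiasedness plus \eqref{eq:bounded variance}) gives
\[
\mE\bigl[\|\bw_T^{(i)}\|^2\bigr] \le \|\bw_0^{(i)}\|^2 + \sigma^2\sum_{t=0}^{T-1}\eta_{\bw_t^{(i)}}^2 + \sum_{t=0}^{T-1}\eta_{\bw_t^{(i)}}^2\,\mE\bigl\|\nabla_{\bw_t^{(i)}}\cL(\bW_t,\bg_t)\bigr\|^2 .
\]
With the prescribed step $\eta_{\bw_t^{(i)}}=1/(\tilde L_{\bw^{(i)}}\sqrt T)$ the first sum equals $\sigma^2/\tilde L_{\bw^{(i)}}^2$.

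\textbf{Step 2: controlling the accumulated gradient norm.} For the remaining sum I would reuse the per-step estimate \eqref{eq:bound gradient} from the proof of Theorem~\ref{pro:SGD learning rate}: summing it over $t=0,\dots,T-1$, discarding the $\bg$-terms and all indices $j\ne i$ on the left, and using $\mE[\cL(\bW_T,\bg_T)]\ge\cL(\bW^*,\bg^*)$ yields
\[
\sum_{t=0}^{T-1}\mE\bigl\|\nabla_{\bw_t^{(i)}}\cL(\bW_t,\bg_t)\bigr\|^2 \le \tilde L_{\bw^{(i)}}\sqrt T\Bigl(\cL(\bW_0,\bg_0)-\cL(\bW^*,\bg^*) + \tfrac{\sigma^2}{2}\sum_{j=1}^m\bigl(\tfrac{1}{\tilde L_{\bg}}+\tfrac{1}{\tilde L_{\bw^{(j)}}}\bigr)\Bigr).
\]
Multiplying by $\eta_{\bw_t^{(i)}}^2 = 1/(\tilde L_{\bw^{(i)}}^2 T)$ turns this into a term of order $1/(\tilde L_{\bw^{(i)}}\sqrt T)$, which is bounded and vanishes as $T\to\infty$. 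Combining the two contributions and using monotonicity of $\|\bw_t^{(i)}\|^2$,
\[
\mE\bigl[\|\bw_t^{(i)}\|^2\bigr] \le \mE\bigl[\|\bw_T^{(i)}\|^2\bigr] \le \|\bw_0^{(i)}\|^2 + \frac{\sigma^2}{\tilde L_{\bw^{(i)}}^2} + O\!\Bigl(\frac{1}{\sqrt T}\Bigr)
\]
for all $0\le t\le T$, which is the asserted estimate (the $O(\sigma^2/\tilde L_{\bw^{(i)}})$ absorbs the leading $\sigma^2/\tilde L_{\bw^{(i)}}^2$ together with the $\cL(\bW_0,\bg_0)-\cL(\bW^*,\bg^*)$, $m$, and $1/\sqrt T$ dependence).

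\textbf{Main obstacle.} The argument is essentially bookkeeping. The one point requiring care is verifying that the factor $\tilde L_{\bw^{(i)}}$ extracted from \eqref{eq:bound gradient} is exactly the one needed to cancel a single power of $1/\eta^2 = \tilde L_{\bw^{(i)}}^2 T$, so that the accumulated-gradient term is $o(1)$ rather than $O(1)$ — this is precisely where the adaptive step choice $\eta\propto 1/(\tilde L_{\bw^{(i)}}\sqrt T)$ enters. Secondarily, one should record that invoking \eqref{eq:bounded variance} is legitimate here: by the remark following it together with Lemma~\ref{lem:increasing}, the weights $\bw_t^{(i)}$ never collapse to zero along the trajectory, so the variance proxy $\sigma^2$ remains finite.
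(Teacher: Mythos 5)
Your proposal is correct and follows essentially the same route as the paper: the Pythagorean identity $\|\bw_{t+1}^{(i)}\|^{2}=\|\bw_{t}^{(i)}\|^{2}+\eta_{\bw_{t}^{(i)}}^{2}\|\cG_{\bw_{t}^{(i)}}\|^{2}$, the variance split $\mE\|\cG\|^{2}\leq\sigma^{2}+\mE\|\nabla\cL\|^{2}$, the telescoped descent inequality \eqref{eq:bound gradient} to control $\sum_{t}\mE\|\nabla_{\bw_{t}^{(i)}}\cL\|^{2}$, and monotonicity of $\|\bw_{t}^{(i)}\|^{2}$ to pass from $t=T$ to general $t$. Your bookkeeping is in fact slightly more careful than the paper's (you correctly track $\eta^{2}=1/(\tilde{L}_{\bw^{(i)}}^{2}T)$, giving $\sigma^{2}/\tilde{L}_{\bw^{(i)}}^{2}$ for the noise sum, and you explicitly justify orthogonality of the mini-batch gradient as a convex combination of per-sample gradients), and the result is absorbed into the same $O(\sigma^{2}/\tilde{L}_{\bw^{(i)}})$ bound.
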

\begin{proof}
	From equation \eqref{eq:bound gradient}, we have 
	\begin{equation}\label{eq:stochastic norm}
	\small
	\begin{aligned}
	\tilde{L}_{\bw^{(i)}}T\mE\left[\|\bw_{t + 1}^{(i)}\|^{2} - \|\bw_{t}^{(i)}\|^{2}\right] &= \mE\left[\left\|\cG_{\bw_{t}^{(i)}}(\bW_{t}, \bg_{t})\right\|^{2}\right]\\
	& = \mE\left[\left\|\cG_{\bw_{t}^{(i)}}(\bW_{t}, \bg_{t}) - \nabla_{\bw_{t}^{(i)}}\cL(\bW_{t}, \bg_{t}) + \nabla_{\bw_{t}^{(i)}}\cL(\bW_{t}, \bg_{t})\right\|^{2}\right]\\
	& \overset{a}{=} \mE\left[\left\|\cG_{\bw_{t}^{(i)}}(\bW_{t}, \bg_{t}) - \nabla_{\bw_{t}^{(i)}}\cL(\bW_{t}, \bg_{t})\right\|^{2} + \left\|\nabla_{\bw_{t}^{(i)}}\cL(\bW_{t}, \bg_{t})\right\|^{2}\right] \\
	& \overset{b}{\leq} \sigma^{2} + \mE\left[\left\|\nabla_{\bw_{t}^{(i)}}\cL(\bW_{t}, \bg_{t})\right\|^{2}\right]
	\end{aligned}
	\end{equation}
	where $a$ and $b$ are due to equation \eqref{eq:bound gradient}. We then have 
	\begin{equation}
	\small
	\begin{aligned}
	\sum\limits_{s=0}^{t}\frac{1}{\tilde{L}_{\bw^{(i)}}\sqrt{T}}\mE\left[\left\|\nabla_{\bw_{s}^{(i)}}\cL(\bW_{s}, \bg_{s})\right\|^{2}\right] \leq \cL(\bW_{0},\bg_{0}) - \cL(\bW^{*},\bg^{*}) + \sum\limits_{s=0}^{t}\frac{\sigma^{2}}{2T}\left(\frac{1}{\tilde{L}_{\bg}} + \sum\limits_{i=1}^{m}\frac{1}{\tilde{L}_{\bw^{(i)}}}\right).
	\end{aligned}
	\end{equation}
	for any $0 \leq t\leq T$ due to equation \eqref{eq:bound gradient}, the monotonic decreasing of $\cL(\bW_{t},\bg_{t})$ and the value of $\eta_{\bw_{t}^{(i)}}$. Let
	\begin{equation}
	\small
	\begin{aligned}
	S_{t}^{(i)} = \sum\limits_{s=1}^{t}\frac{1}{\sqrt{T}}\mE\left[\left\|\nabla_{\bw_{s}^{(i)}}\cL(\bW_{s}, \bg_{s})\right\|^{2}\right] \leq \tilde{L}_{\bw^{(i)}}(\cL(\bW_{0},\bg_{0}) - \cL(\bW^{*},\bg^{*})) + \frac{\sigma^{2}\tilde{L}_{\bw^{(i)}}}{2}\left(\frac{1}{\tilde{L}_{\bg}} + \sum\limits_{i=1}^{m}\frac{1}{\tilde{L}_{\bw^{(i)}}}\right)
	\end{aligned}
	\end{equation}
	Summing over equation \eqref{eq:stochastic norm} with respect to $t$ from $0$ to $T - 1$, and combining the above equation, we conclude that 
	\begin{equation}
	\mE\left[\|\bw_{T}^{(i)}\|^{2} - \|\bw_{0}^{(i)}\|^{2}\right] \leq  \frac{\sigma^{2}}{\tilde{L}_{\bw^{(i)}}} + \frac{S_{T}^{(i)}}{\sqrt{T}} = O\left(\frac{\sigma^{2}}{\tilde{L}_{\bw^{(i)}}}\right).
	\end{equation}
	Thus, we get the conclusion due to the increasing property of $\|\bw_{t}^{(i)}\|^{2}$ with respect to $t$. 
\end{proof}
We respectively bound the $\|\bw_{t}^{(i)}\|^{2}$ for GD and SGD. Due to there is a scalar $\|\bw_{t}^{(i)}\|^{2}$ is in $\|\nabla_{\bw_{t}}\cL(\bW_{t},\bg_{t})\|$, we conclude that both PSI-GD and PSI-SGD accelerate training within a constant scalar.
\section{More Experimental Results}\label{app:experiment}
\subsection{Experiments with Regularizer}
The experimental results in Section \ref{sec5} are conducted without regularizer e.g. $l_{2}$-regularizer. This is due to the regularizer breaks the PSI property of PSI parameters. Even though, it is desired to verify the proposed method within loss with regularizer. 
\par
Thus we conduct the experiments on \texttt{CIFAR10} and \texttt{CIFAR100} to see the performance of the proposed methods on the loss with $l_{2}$-regularizer. The settings follow Section \ref{sec5} in the main part of the paper, expected for the loss has $l_{2}$-regularizer \footnote{The regularizer of SGDG follows the setting in \citep{cho2017riemannian}.}. The results refers to Table \ref{tab:resnet_with_regularizer}. We can observe from the result that adding regularizer significantly improve the performance of SGD and Adam, while the improvement on the manifold based methods is incremental. Even though, optimization on PSI manifold always finds minimum generalize better. We suggest this is because optimizing on the PSI manifold corresponds with a larger learning rate, then the iterates are away from the initialized point. Hence they are able to find flatter minima which is empirically observed to generalize better \citep{keskar2016large}. 
\par
Finally, all the experimental results refers to Figure \ref{fig: cifar10} and \ref{fig: cifar100}.  
\begin{table*}[t!]\small
	\centering
	\caption{Performance of ResNet with regularizer on \texttt{CIFAR10} and \texttt{CIFAR100}. The results are averaged over five independent runs with standard deviation reported.}
	\scalebox{1}{
		{
			\begin{tabular}{|c|ccccc|}
				\hline
				Dataset & \multicolumn{5}{|c|}{CIFAR-10} \\
				\hline
				Algorithm & SGD           & Adam    & SGDG    & PSI-SGD        & PSI-SGDM     \\
				\hline
				ResNet20  & 92.35(±0.10)	&  90.89(±0.12)	 &  91.99(±0.16)	&  92.49(±0.12)	&  \textbf{92.70}(±0.08) \\ 
				ResNet32  & 93.64(±0.21)	&  91.83(±0.34)	 &  92.53(±0.26)	&  93.75(±0.08)	&  \textbf{93.80}(±0.06)	 \\
				ResNet44  & 93.74(±0.10)	&  91.85(±0.23)	 &  93.30(±0.12)	&  \textbf{94.15}(±0.06)	&  93.92(±0.12)  \\
				ResNet56  & \textbf{94.19}(±0.20)	&  91.78(±0.32)	 &  93.37(±0.14)	&  93.98(±0.13)	&   93.99(±0.14)\\ 
				\hline
				Dataset & \multicolumn{5}{|c|}{CIFAR-10} \\
				\hline
				ResNet20  & 68.62(±0.56)	&  65.66(±0.16)	 &  68.67(±0.10)	&  69.62(±0.12)	  &   \textbf{69.64}(±0.06) \\ 
				ResNet32  & 70.67(±0.16)	&  67.72(±0.12)	 &  70.49(±0.23)	&  \textbf{71.21}(±0.16)	  &   71.07(±0.15)	 \\
				ResNet44  & 71.31(±0.30)	&  68.24(±0.25)	 &  71.49(±0.14)	&  \textbf{72.03}(±0.08)	  &   72.00(±0.25)  \\
				ResNet56  & 72.44(±0.26)	&  68.78(±0.32)	 &  71.56(±0.42)	&  \textbf{73.05}(±0.09)	              &  72.62(±0.13)
				\\
				\hline 
	\end{tabular}}}
	\label{tab:resnet_with_regularizer}
\end{table*}

\begin{figure}[t!]\centering
	\begin{minipage}[t]{0.24\linewidth}
		\centering
		\includegraphics[width=1\linewidth]{./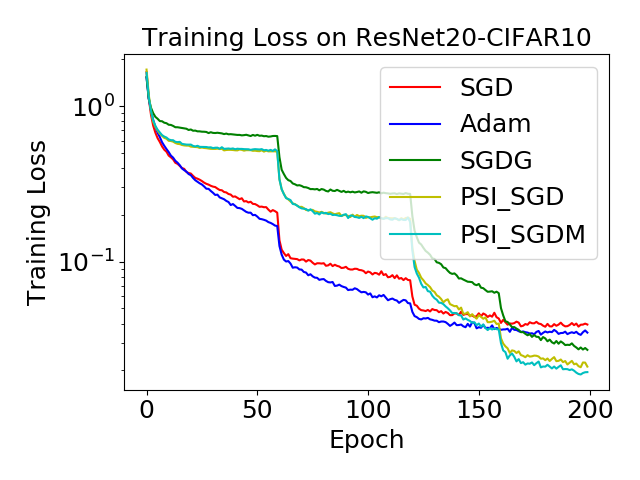}
		\vspace{0.02cm}
		\includegraphics[width=1\linewidth]{./pic/fig_train_without_decay/cifar10_32.png}
		\vspace{0.02cm}
		\includegraphics[width=1\linewidth]{./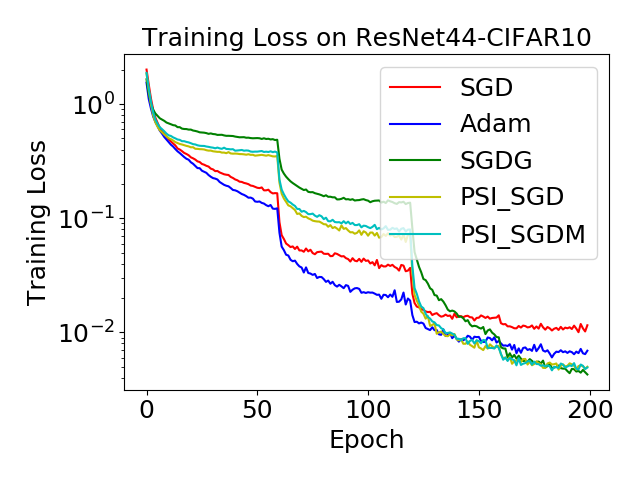}
		\vspace{0.02cm}
		\includegraphics[width=1\linewidth]{./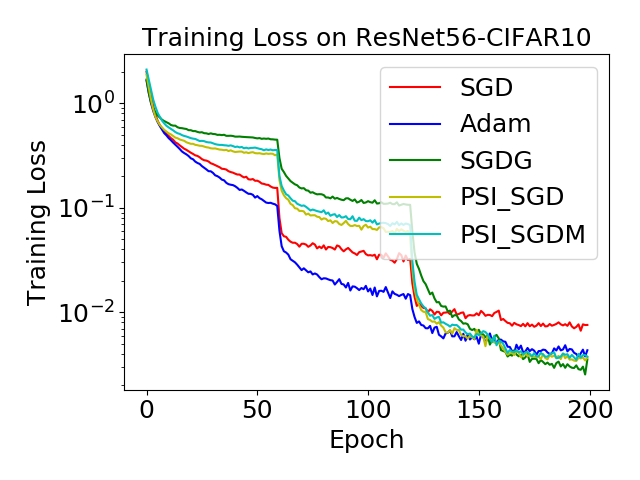}
		\vspace{0.02cm}
	\end{minipage}
	\begin{minipage}[t]{0.24\linewidth}
		\centering
		\includegraphics[width=1\linewidth]{./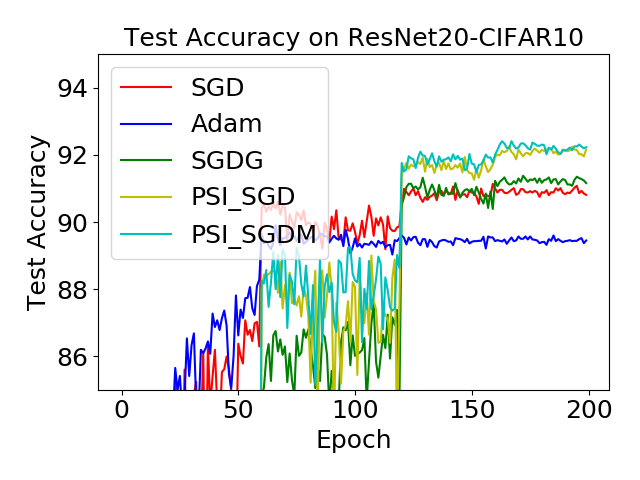}
		\vspace{0.02cm}
		\includegraphics[width=1\linewidth]{./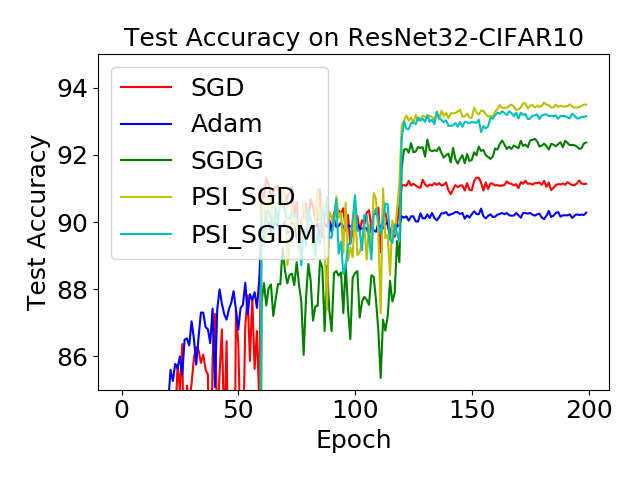}
		\vspace{0.02cm}
		\includegraphics[width=1\linewidth]{./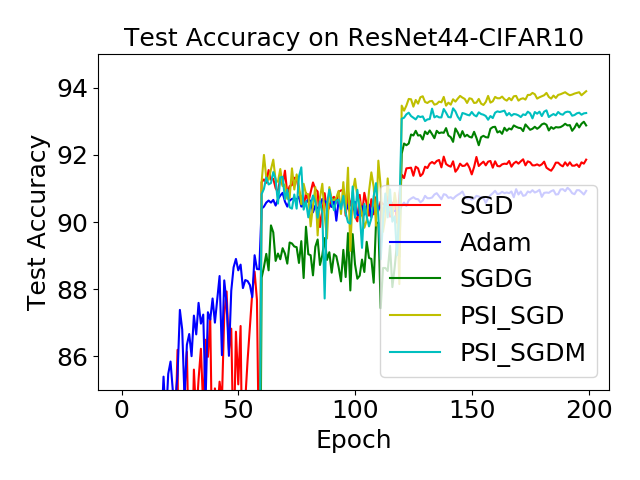}
		\vspace{0.02cm}
		\includegraphics[width=1\linewidth]{./pic/fig_test_without_decay/cifar10_56.png}
		\vspace{0.02cm}
	\end{minipage}
	\begin{minipage}[t]{0.24\linewidth}
		\centering
		\includegraphics[width=1\linewidth]{./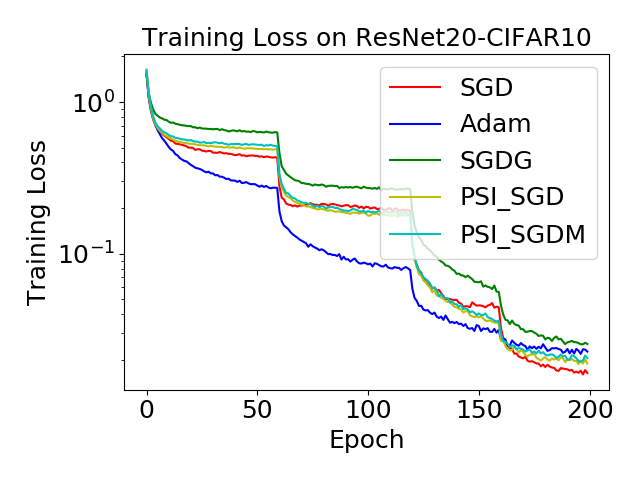}
		\vspace{0.02cm}
		\includegraphics[width=1\linewidth]{./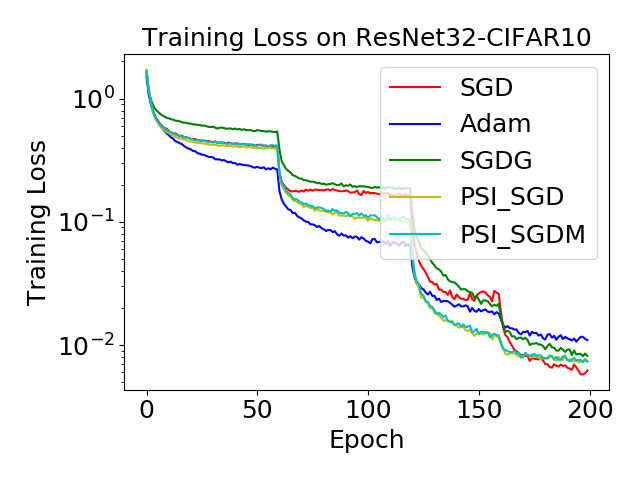}
		\vspace{0.02cm}
		\includegraphics[width=1\linewidth]{./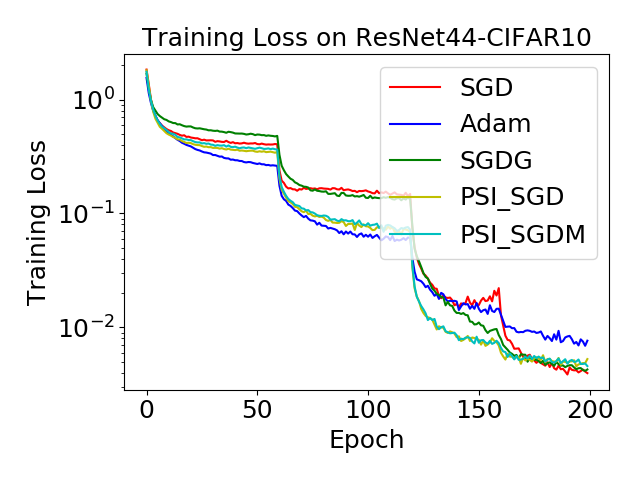}
		\vspace{0.02cm}
		\includegraphics[width=1\linewidth]{./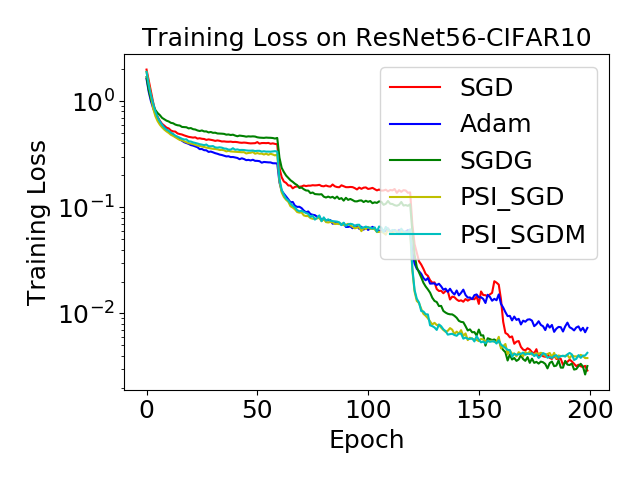}
		\vspace{0.02cm}
	\end{minipage}
	\begin{minipage}[t]{0.24\linewidth}
		\centering
		\includegraphics[width=1\linewidth]{./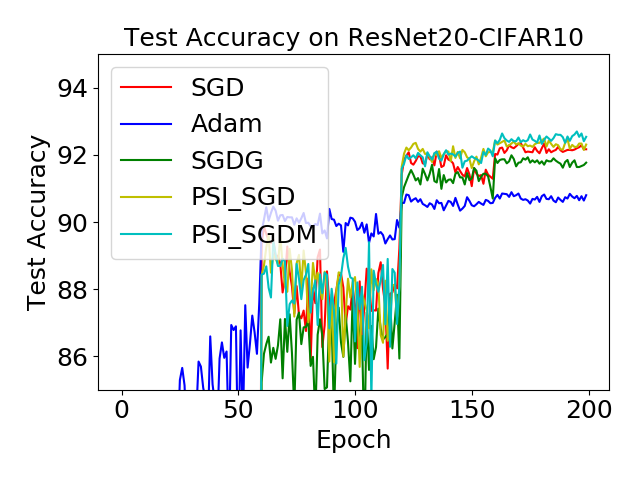}
		\vspace{0.02cm}
		\includegraphics[width=1\linewidth]{./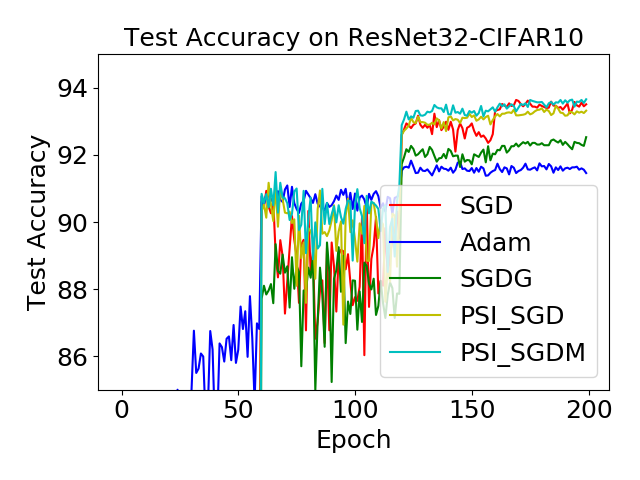}
		\vspace{0.02cm}
		\includegraphics[width=1\linewidth]{./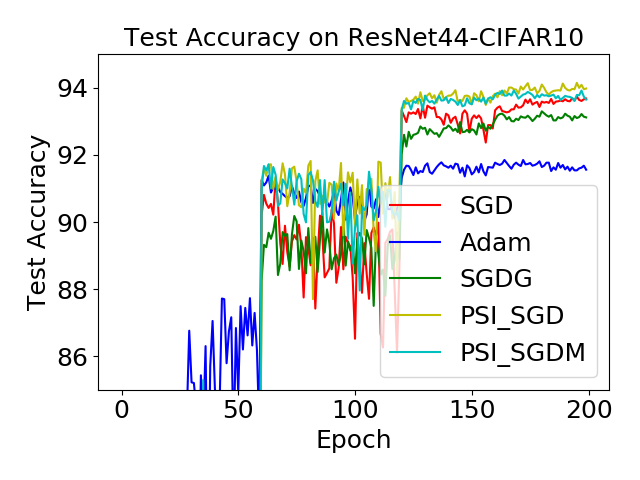}
		\vspace{0.02cm}
		\includegraphics[width=1\linewidth]{./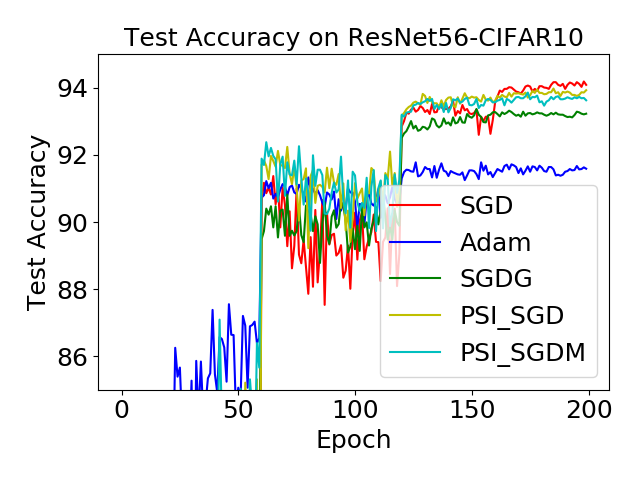}
		\vspace{0.02cm}
	\end{minipage}
	\caption{Results of \texttt{CIFAR10} on various structures of ResNet i.e. 20, 32, 44, 56. The first and the last two columns are respectively obtained with or without regularizer.}
	\label{fig: cifar10}
\end{figure}
\begin{figure}[t!]\centering
	\begin{minipage}[t]{0.24\linewidth}
		\centering
		\includegraphics[width=1\linewidth]{./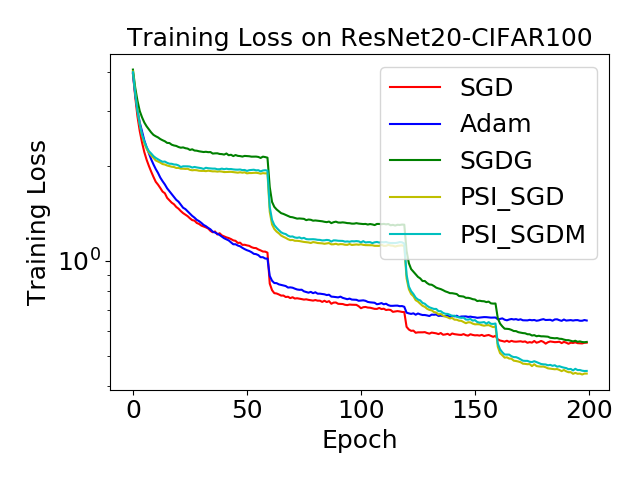}
		\vspace{0.02cm}
		\includegraphics[width=1\linewidth]{./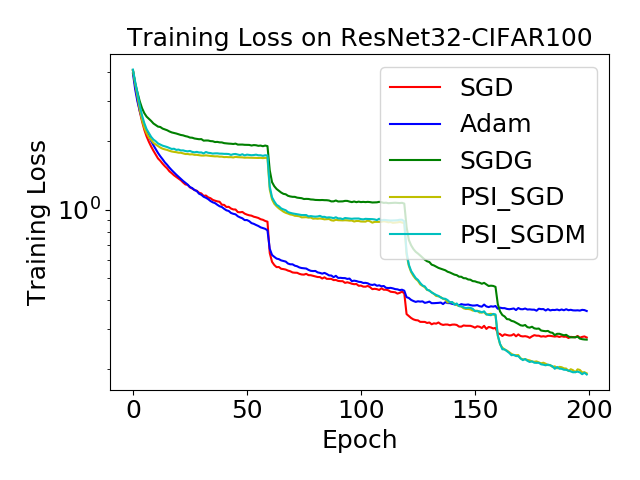}
		\vspace{0.02cm}
		\includegraphics[width=1\linewidth]{./pic/fig_train_without_decay/cifar100_44.png}
		\vspace{0.02cm}
		\includegraphics[width=1\linewidth]{./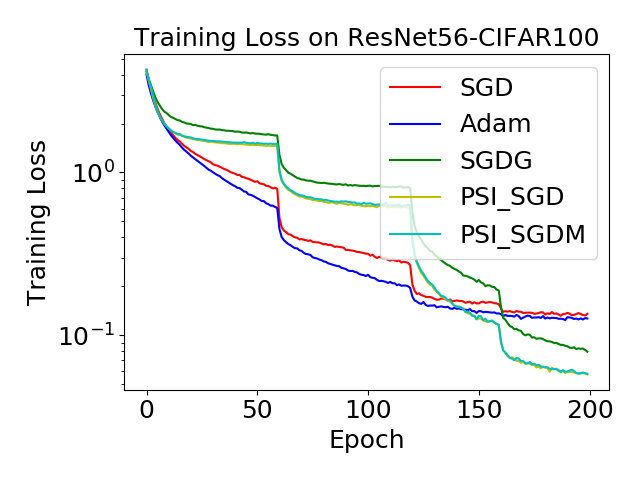}
		\vspace{0.02cm}
	\end{minipage}
	\begin{minipage}[t]{0.24\linewidth}
		\centering
		\includegraphics[width=1\linewidth]{./pic/fig_test_without_decay/cifar100_20.png}
		\vspace{0.02cm}
		\includegraphics[width=1\linewidth]{./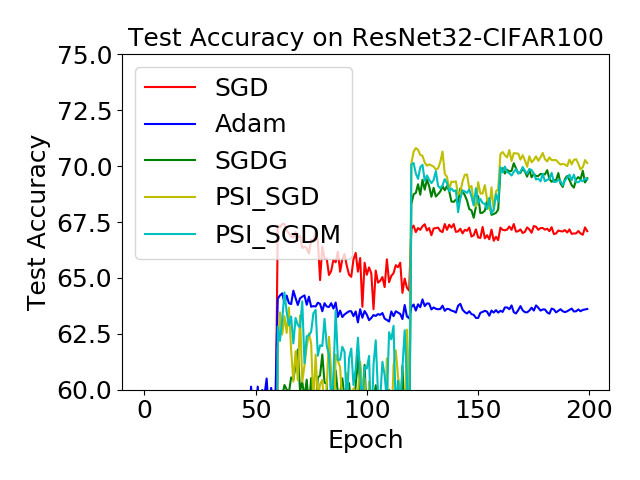}
		\vspace{0.02cm}
		\includegraphics[width=1\linewidth]{./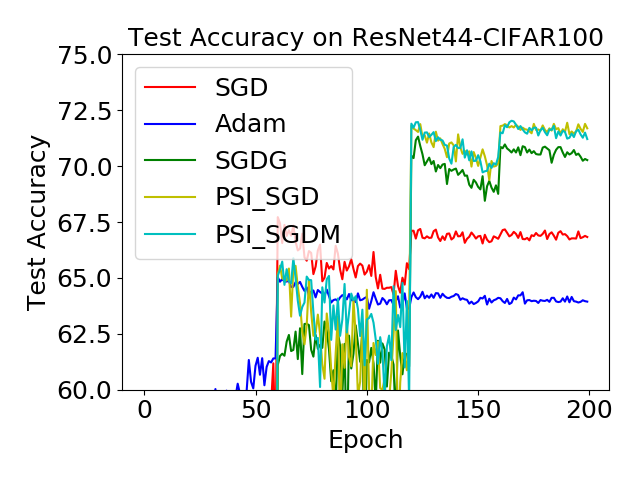}
		\vspace{0.02cm}
		\includegraphics[width=1\linewidth]{./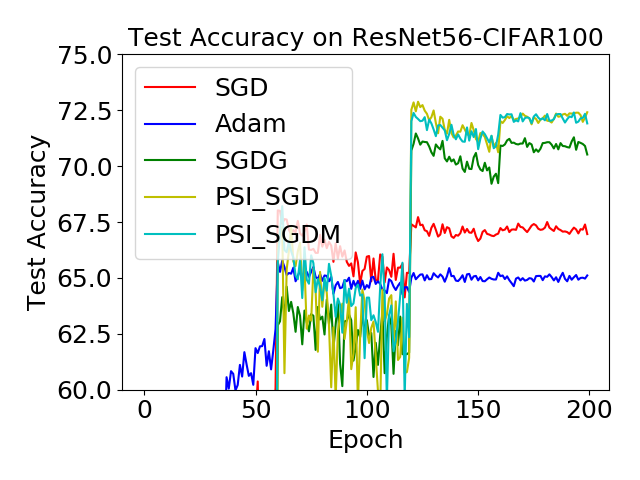}
		\vspace{0.02cm}
	\end{minipage}
	\begin{minipage}[t]{0.24\linewidth}
		\centering
		\includegraphics[width=1\linewidth]{./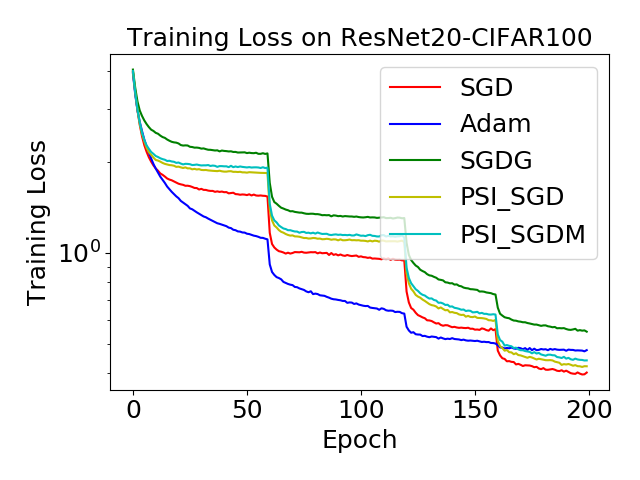}
		\vspace{0.02cm}
		\includegraphics[width=1\linewidth]{./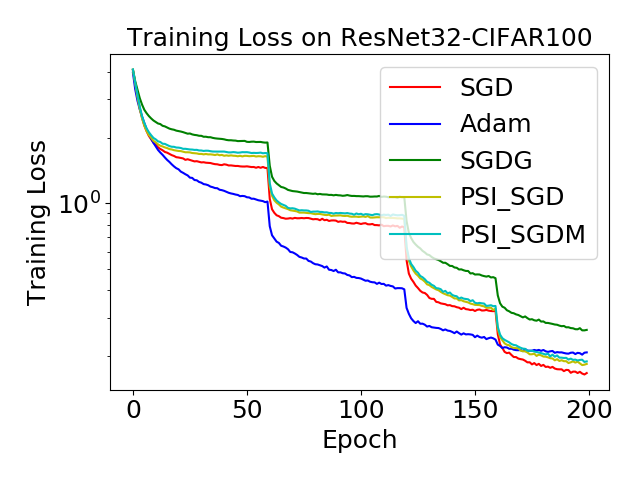}
		\vspace{0.02cm}
		\includegraphics[width=1\linewidth]{./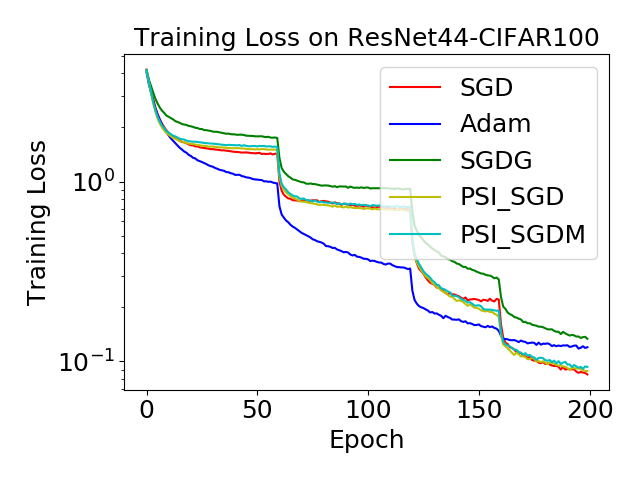}
		\vspace{0.02cm}
		\includegraphics[width=1\linewidth]{./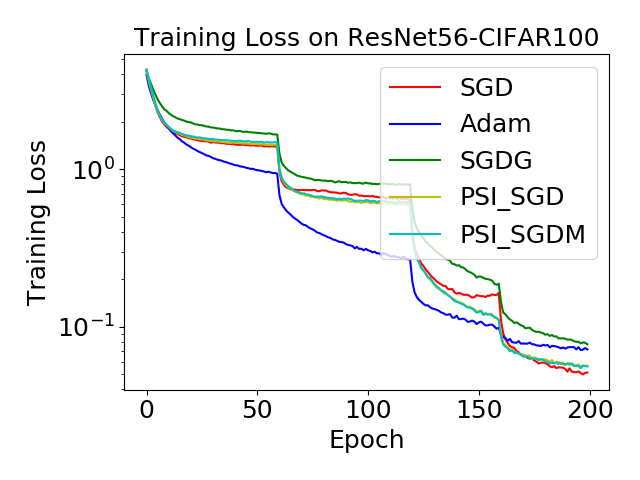}
		\vspace{0.02cm}
	\end{minipage}
	\begin{minipage}[t]{0.24\linewidth}
		\centering
		\includegraphics[width=1\linewidth]{./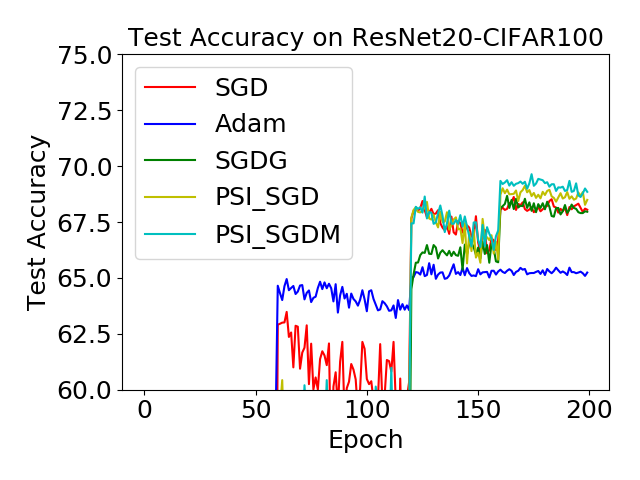}
		\vspace{0.02cm}
		\includegraphics[width=1\linewidth]{./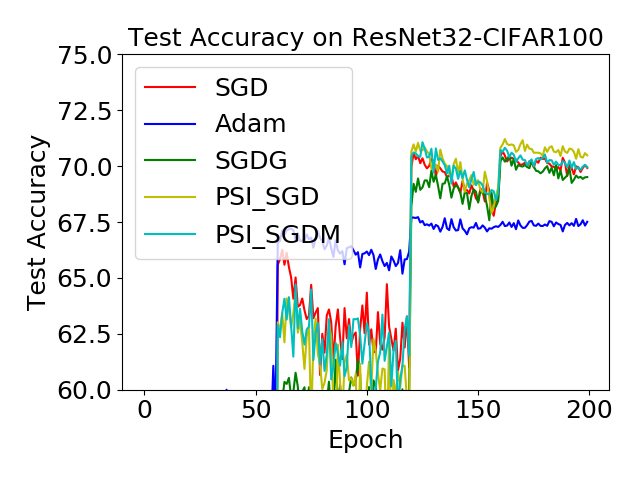}
		\vspace{0.02cm}
		\includegraphics[width=1\linewidth]{./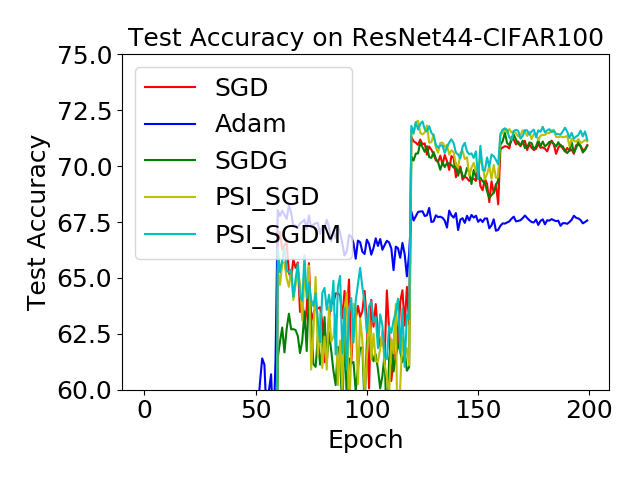}
		\vspace{0.02cm}
		\includegraphics[width=1\linewidth]{./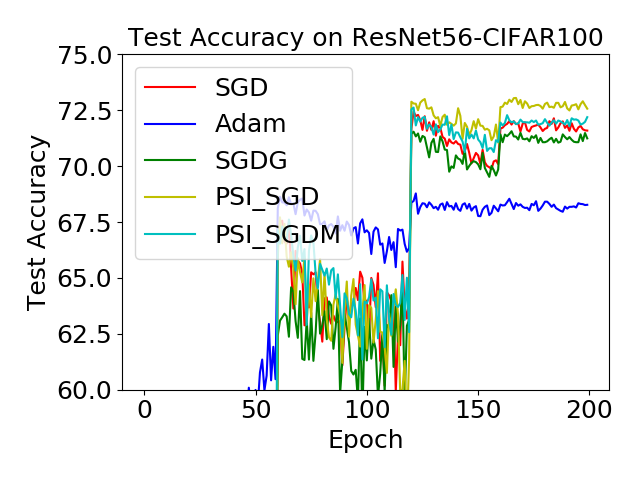}
		\vspace{0.02cm}
	\end{minipage}
	\caption{Results of \texttt{CIFAR100} on various structures of ResNet i.e. 20, 32, 44, 56. The first and the last two columns are respectively obtained with or without regularizer.}
	\label{fig: cifar100}
\end{figure}
\subsection{Hyperparameters}\label{app:hyperparameters}
The PSI parameters are updated by the various methods while non-PSI parameters are all updated by SGD. The hyperparameters refers to Table \ref{tab:hyper}.
\begin{table}[h]
	\caption{The hyperparameters of the experiments}
	\centering
	\begin{tabular}{cccccc}
		\hline
		Hyperparameters &  SGD        & Adam    & SGDG    & PSI-SGD        & PSI-SGDM  \\    
		\hline
		Learning Rate   &  $0.1$      & $0.001$  & $0.2$   & $1.0$          & $0.1$ \\
		Regularizer     &  $0.0005$     & $0.0005$  & $0.1$   & $0.0005$         & $0.0005$ \\
		Batch Size      &  $128$      & $128$   & $128$   & $128$          & $128$  \\
		Momentum        &  $0.9$      & -       & -       & -              & $0.9$  \\
		$\beta_{1}$     &  -          & $0.9$   & -       & -              & -      \\
		$\beta_{2}$     &  -          & $0.999$   & -       & -              & -      \\
		\hline
	\end{tabular}
	\label{tab:hyper}
\end{table}
\end{document}